\documentclass[Afour,sageh,times]{sagej}

\usepackage{moreverb,url}

\usepackage[colorlinks,bookmarksopen,bookmarksnumbered,citecolor=red,urlcolor=red]{hyperref}
\usepackage{bm}
\usepackage{amsmath,amsfonts}

\usepackage{algorithmic}
\usepackage{algorithm}
\usepackage{array}
\usepackage{textcomp}
\usepackage{stfloats}
\usepackage{url}
\usepackage{appendix}

\usepackage{verbatim}
\usepackage{graphicx}
\usepackage{xcolor}

\usepackage[process=all]{pstool}
\usepackage{siunitx}
\usepackage{soul}
\usepackage{subcaption}

\makeatletter
\def\@thmheadfont{\bfseries}
\makeatother

\newtheorem{corollary}{Corollary}
\newtheorem{remark}{Remark}
\newtheorem{lemma}{Lemma}
\newtheorem{theorem}{Theorem}

\makeatletter
\renewenvironment{lemma}[1][]{%
  \refstepcounter{lemma}%
  \trivlist
  \item[\hskip\labelsep\bfseries Lemma~\thelemma%
    \if\relax\detokenize{#1}\relax.\else\ (#1).\fi]%
  \itshape
}{%
  \endtrivlist
}
\makeatother

\makeatletter
\renewenvironment{corollary}[1][]{%
  \refstepcounter{corollary}%
  \trivlist
  \item[\hskip\labelsep\bfseries Corollary~\thelemma%
    \if\relax\detokenize{#1}\relax.\else\ (#1).\fi]%
  \itshape
}{%
  \endtrivlist
}
\makeatother

\makeatletter
\renewenvironment{theorem}[1][]{%
  \refstepcounter{theorem}%
  \trivlist
  \item[\hskip\labelsep\bfseries Theorem~\thelemma%
    \if\relax\detokenize{#1}\relax.\else\ (#1).\fi]%
  \itshape
}{%
  \endtrivlist
}
\makeatother

\newcommand\BibTeX{{\rmfamily B\kern-.05em \textsc{i\kern-.025em b}\kern-.08em
T\kern-.1667em\lower.7ex\hbox{E}\kern-.125emX}}

\def\volumeyear{2026}
\begin{document}

\runninghead{Feliu-Talegon and Della Santina}

\title{Manipulation with Stability Guarantees: Linear Deformable Objects with Non-negligible \textbf{Physical} Response Grasped at Multiple Location}

\author{Daniel Feliu-Talegon\affilnum{1} and Cosimo Della Santina\affilnum{1}\affilnum{2}}

\affiliation{\affilnum{1}Department of Cognitive Robotics, Delft University of Technology, 2628 CN Delft, The Netherlands\\
\affilnum{2}Institute of Robotics and Mechatronics, German Aerospace Center (DLR), 82234 Oberpfaffenhofen, Germany}

\corrauth{Daniel Feliu-Talegon, Department of Cognitive Robotics, Delft University of Technology}

\email{d.feliutalegon@tudelft.nl}

\begin{abstract}
Most research on the manipulation of deformable objects focuses on lightweight systems with negligible mechanical response, effectively restricting attention to quasi-static regimes. This assumption excludes a broad class of practically relevant objects, such as hoses, pipes, and wiring harnesses, whose dynamics cannot be ignored during manipulation. In this work, we address this limitation by introducing a closed-loop control architecture that explicitly accounts for object dynamics and recasts manipulation as a shape-regulation problem. Control is achieved by modulating forces and torques applied at multiple fixed points along the object. This approach builds on three methodological contributions: a fully dynamic model of linear deformable objects based on discrete strain parameterizations; an extension of the notion of actuation coordinates to SE$(3)$, yielding a structured and inherently underactuated control architecture; and nonlinear feedback strategies providing explicit conditions for steady-state convergence to desired configurations. Extensive simulations on representative manipulation tasks demonstrate the performance gains enabled by the proposed model-based formulation. We finally validate the approach experimentally through a real-time closed-loop implementation with online shape estimation, confirming its practical feasibility and effectiveness.
\end{abstract}

\keywords{Robotic Soft Manipulation, Shape Regulation, Deformable Linear Objects, Actuation Coordinates.}

\maketitle

\section{INTRODUCTION}

The integration of robots into human environments remains bottlenecked by manipulation, and in particular by the inability to robustly handle deformable objects—such as cables, hoses, or groceries—that pervade everyday tasks \cite{arriola2020modeling,yin2021modeling,fiorini2022concepts}. Deformable Linear Objects (DLOs)—including cables, wires, ropes, plants, and sutures—appear across a wide range of applications, from automotive, agricultural, and aerospace manufacturing to household and medical technologies. These objects are slender deformable bodies whose configuration evolves primarily along a single spatial dimension, making them prone to large, continuous deformations during interaction. 
As a result, even representing the state of a DLO requires, in principle, a theoretically infinite number of degrees of freedom, making its analysis inherently challenging. This difficulty is further exacerbated when object dynamics cannot be neglected—either because stiffness or mass are significant, or because the object undergoes rapid motion. Accordingly, as discussed in Sec.~\ref{sec:related}, most existing approaches focus either on model-free learning strategies or on quasi-static, often purely geometric, formulations. In both cases, the object's physical response is not explicitly accounted for. Consequently, achieving reliable robotic manipulation of DLOs with guaranteed stability and precision remains an open problem.

\begin{figure}[t]
\centerline{\includegraphics[width=0.49\textwidth]{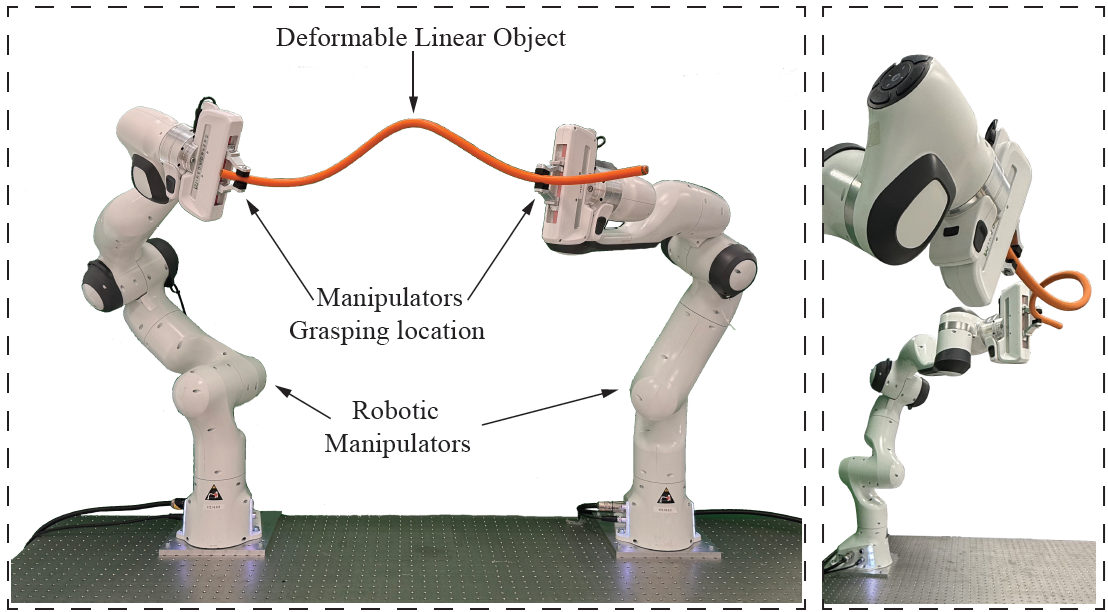}}
\caption{\small Manipulation of a deformable linear object (DLO) whose behavior is governed by its physical response and interaction with multiple contact points. Two robotic manipulators apply time-varying wrenches that couple with the object’s distributed inertia, elasticity, and gravity, driving the system toward the steady-state configuration shown, characterized by a nontrivial combination of bending, twisting, and shearing. This shape cannot be captured by a purely geometric description, but instead emerges from the coupled effects of internal elastic forces, gravitational loading, and externally applied wrenches. The proposed architecture achieves stable and precise three-dimensional shape control by treating the deformable object itself as the controlled dynamical system and regulating it through model-based feedback at a finite set of contact points.}
\label{fig:setup}
\end{figure}

This work introduces the first model-based closed-loop control framework for the robotic manipulation of generic DLOs deforming in three dimensions under the action of multiple contact points (see Fig.~\ref{fig:setup}). Manipulation is cast as a regulation problem in which the object itself defines the controlled dynamical system, while the control inputs are wrenches (forces and torques) applied at the contact points. This perspective enables precise shape control with formal guarantees of closed-loop stability. At the core of the approach lies a general dynamic model of DLOs as continuously deformable rods. Building on recent advances in soft robotics \cite{armanini2023soft,della2023model}, the model explicitly captures distributed inertia, elastic and gravitational effects, and treats the object as a floating-base system subject to wrenches applied at multiple, arbitrary locations. A strain-based discretization is then used to obtain accurate yet low-dimensional approximations of the resulting dynamics.

On this basis, we develop a general control framework with the following components:
\begin{enumerate}
\item We extend the theory of collocated coordinate transformations from $\mathbb{R}^m$ to $\mathrm{SE}(3)\times\cdots\times\mathrm{SE}(3)$ by introducing an intermediate representation of actuation actions. This allows the object coordinates to be systematically separated into directions directly driven by contact wrenches and directions evolving according to the internal dynamics. The resulting construction is general and applies beyond object manipulation to mechanical systems subject to three-dimensional torques.
\item Leveraging the physical insight provided by the proposed model, we introduce an optimization-based algorithm that selects task-consistent continuum three-dimensional shapes, enabling objectives such as positioning or orienting specific portions of the object in space.
\item Using the new coordinate representation of the object dynamics, we design multiple collocated controllers and derive explicit conditions ensuring convergence to the desired equilibrium shape.
\item We validate the proposed framework through an extensive set of simulations and experiments, detailing a complete two-arm manipulation pipeline and its real-time implementation, and demonstrating robust dynamic shape control in regimes where object stiffness, mass, and fast motions invalidate quasi-static assumptions.
\end{enumerate}

 \paragraph{Structure of the paper} 
 Section \ref{sec:related} reviews the most relevant literature on DLO manipulation. In Section \ref{sec:Collocation_Wrenches}, we tackle the problem of decoupling the actuated and unactuated dynamics of a Lagrangian system subject to multiple wrench inputs, a key step toward addressing model-based DLO manipulation. Section \ref{dynamic_model} introduces the DLO dynamic model, accounting for its non-negligible physical response (shown in Fig.~2 as Physics-Based Modelling), which underpins the design of both high-level and low-level controllers. Section \ref{sec:Model_based_Control} then presents the model-based control approach with stability guarantees proposed in this work, where a high-level controller generates a desired shape from the task objective, while a low-level controller computes the wrench inputs required to regulate the object shape in closed loop, as summarized in Fig.~\ref{fig:2_2}. Sections \ref{sec:simulation_results} and \ref{sec:experiments} provide extensive simulation and experimental results validating the proposed framework presented in Fig.~\ref{fig:2_2}. Finally, Section \ref{sec:conclusions} concludes the paper.

 \paragraph{Notation} we use right-handed coordinate frames: the fixed \emph{spatial} (or \emph{global}) frame, oriented upward opposite to gravity, and the moving \emph{body} frame attached to the system. Unless otherwise specified, the Jacobian, angular velocity, and velocity twist are expressed in the \emph{body} frame; quantities expressed in the \emph{spatial} frame are denoted with a superscript 
$\mathrm{s}$. Equilibrium quantities are marked with an overline, desired quantities with an asterisk. Bold symbols represent vectors and matrices, non-bold symbols denote scalars, and all vectors are \emph{column} vectors.
Rotations between frames are represented by a rotation matrix $\bm{R} \in SO(3)$, parameterized by ZYX Euler angles (yaw--pitch--roll), collected in $\bm{z} = [\alpha\ \beta\ \gamma]^{T}$. Positions with respect to the \emph{spatial} frame are denoted by $\bm{p} = [X\ Y\ Z]^{T}$. For quantities combining angular and linear components (e.g., wrenches, strain, and twists), angular components are ordered before linear ones.

\begin{table}[t]
\centering
\caption{\small List of symbols and their descriptions.}
\label{tab:symbols}
\footnotesize
\setlength{\tabcolsep}{4pt}
\renewcommand{\arraystretch}{1.1}
\begin{tabular}{l l p{0.48\columnwidth}}
\hline
\textbf{Symbol} & \textbf{Domain} & \textbf{Description} \\
\hline
$n$ & $\mathbb{N}$ & System degrees of freedom \\
$n_\mathrm{a}$ & $\mathbb{N}$ & Number of independent inputs \\
$n_o$ & $\mathbb{N}$ & Deformation DoFs of the DLO \\
$m$ & $\mathbb{N}$ & Task-space dimension \\
$s$ & $[0,1]\subset\mathbb{R}$ & Curvilinear abscissa along backbone \\
$\bm{q}, \dot{\bm{q}}, \ddot{\bm{q}}$ & $\mathbb{R}^n$ & Generalized coordinates, derivatives \\
$\bm{\theta}, \dot{\bm{\theta}}, \ddot{\bm{\theta}}$ & $\mathbb{R}^{n}$ & Collocated coordinates, derivatives \\
$\bm{M}$ & $\mathbb{R}^{n\times n}$ & Mass matrix \\
$\bm{C}$ & $\mathbb{R}^{n\times n}$ & Coriolis and centrifugal matrix \\
$\bm{D}$ & $\mathbb{R}^{n\times n}$ & Damping matrix \\
$\bm{K}$ & $\mathbb{R}^{n\times n}$ & Stiffness matrix \\
$\bm{G}$ & $\mathbb{R}^{n}$ & Gravitational force vector \\
$\bm{A}$ & $\mathbb{R}^{n\times n_\mathrm{a}}$ & Actuation matrix \\
$\bm{u}$ & $\mathbb{R}^{n_\mathrm{a}}$ & Inputs \\
$\bm{\tau}_{\mathrm{a}}$ & $\mathbb{R}^{n}$ & Lagrangian generalized forces \\
$\bm{\xi}$ & $\mathfrak{se}(3)$ & Screw strain \\
$\bm{\xi}^*$ & $\mathfrak{se}(3)$ & Reference screw strain \\
$\bm{\Phi}_{\xi}$ & $\mathbb{R}^{6\times n_\mathrm{o}}$ & Strain basis matrix \\
$\bm{g}$ & $SE(3)$ & Homogeneous transformation \\
$\bm{\eta}$ & $\mathfrak{se}(3)$ & Velocity twist \\
$\bm{J}$ & $\mathbb{R}^{6\times n}$ & Geometric Jacobian of the DLO \\
$\bm{W}$ & $\mathbb{R}^{6}$ & Wrench vector \\
$\bm{x}$ & $\mathbb{R}^{m}$ & Task-space coordinates \\
$\bm{z}$ & $\mathbb{R}^{3}$ & Euler angles (yaw--pitch--roll) \\
$\bm{p}$ & $\mathbb{R}^{3}$ & Position in the global frame \\
$\bm{\omega}$ & $\mathbb{R}^{3}$ & Angular velocity \\
$\bm{P}(\bm{z})$ & $\mathbb{R}^{3\times3}$ & Mapping between $\dot{\bm{z}}$ and $\bm{\omega}$ \\
$\bm{\tau}, \bm{f}$ & $\mathbb{R}^{3}$ & Moment and force vectors \\
$\mathrm{Ad}(\cdot)$ & $SE(3)\!\rightarrow\!\mathbb{R}^{6\times6}$ & Adjoint map \\
$\mathrm{ad}(\cdot)$ & $\mathfrak{se}(3)\!\rightarrow\!\mathbb{R}^{6\times6}$ & Adjoint operator \\
\hline
\end{tabular}
\end{table}

\section{RELATED WORKS}\label{sec:related}

Despite significant progress in rigid object manipulation, advances in robotic manipulation of deformable objects with non-negligible dynamics remain limited, mainly due to the high dimensionality of their configuration and the complexity of the underlying dynamics. This section reviews the most relevant literature on DLOs, with a focus on two key aspects: DLO modeling and control for manipulation tasks.

A broad range of physical models for deformable linear objects (DLOs) has been proposed \cite{lv2020review,chen2024differentiable}, including mass–spring systems, elastic-rod formulations \cite{gazzola2018forward}, Finite Element Method models \cite{zimmermann2021dynamic}, static differential-geometry approaches \cite{wakamatsu2004static} and spline-based dynamic representations \cite{palli2020model}. However, their adoption in robotics is often limited by real-time constraints and the resulting high computational cost. To address this, reduced-DoF simplified models have been developed, especially in soft robotics \cite{armanini2023soft,alessi2024rod}, with rod-theory-based models providing an efficient and control-friendly ODE formulation through geometric discretization \cite{della2019control, mathew2025reduced, webster2010design}.  In parallel, data-driven approaches have been explored \cite{yu2022global, yan2021learning, lee2021sample}, but learning accurate global deformation models remains data-inefficient due to strong nonlinear dependence on the DLO configuration.

\begin{figure*}
\centerline{\includegraphics[width=0.96\textwidth]{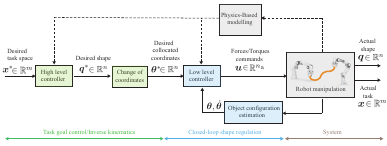}}
\caption{\small
Control architecture of the proposed approach. A high-level controller generates a desired
shape $\bm{q}^*$ from the task-space reference $\bm{x}^*$ through task-goal control or inverse
kinematics. A change-of-coordinates block maps the desired shape to collocated coordinates
$\bm{\theta}^*$. A low-level controller then computes the actuation commands
$\bm{u} \in \mathbb{R}^{n_{\mathrm{a}}}$ to regulate the object shape in closed loop. The robot manipulation
system applies the commanded forces and torques to the DLO, producing the actual object
configuration $\bm{q} \in \mathbb{R}^n$ and task-space output $\bm{x} \in \mathbb{R}^m$. An object
configuration estimator provides feedback of $\bm{\theta}$ and $\dot{\bm{\theta}}$ to the
low-level controller. A physics-based model is used to relate applied wrenches to the object
deformation and to support control and estimation.}
\label{fig:2_2}
\end{figure*}

Current DLO shape regulation methods largely emphasize geometry and often neglect dynamics, typically assuming quasi-static behavior that rarely holds in realistic manipulation scenarios \cite{navarro2013model,zhu2018dual,shetab2022rigid,jinrobust,lagneau2020automatic,shetab2023lattice}. These approaches commonly rely on locally linear deformation models via Jacobians computed online or identified offline. More advanced methods incorporate analytical DLO models to improve accuracy. For instance, geometric elastic-rod formulations have been used to plan manipulation based on static equilibrium configurations \cite{bretl2014quasi,roussel2015manipulation}. Other approaches include adaptive deformation models \cite{navarro2016automatic}, modal graph frameworks leveraging raw point cloud data \cite{yang2023modal}, finite element method (FEM)-based techniques \cite{saghour2025dual}, and extensions to constrained three-dimensional manipulation scenarios \cite{yu2025generalizable}. However, these approaches typically require slow execution to avoid inertial effects and they do not explicitly account for the object’s physical response. In contrast, dynamic control has been explored using energy-based rod models \cite{lv2022dynamic}, open-loop FEM shape control \cite{duenser2018interactive}, and closed-loop reduced-order FEM approaches \cite{koessler2021efficient}, yet these methods remain computationally heavy, mostly validated in simulation, and generally lack closed-loop stability guarantees for real-time robotic deployment.

Learning-based strategies have recently become popular for dynamic DLO manipulation, aiming to overcome quasi-static assumptions \cite{nah2020dynamic,zhang2021robots,lin2021softgym,chi2024iterative,lim_planar_2022,kuroki2023generalizable,lv2022dynamic,zhu2022challenges}. Early demonstration-based methods showed real-robot effectiveness but were limited to specific tasks and objects \cite{nair2017combining,tang2018framework}. Other approaches learn forward kinematics and dynamics offline with complex neural networks, requiring large datasets and often failing to generalize across DLOs with different physical properties \cite{yang2021learning,zhang2021deformable,yang2022learning, wang2022offline, huang2023learning}. To improve adaptability, Jacobian-based online learning has been used to estimate local linear deformation models, enabling generalization to unseen objects but remaining restricted to small deformations \cite{zhu2018dual,jinrobust,lagneau2020automatic,zhu2021vision}. More recent methods combine offline and online learning to enhance generalization and handle larger deformation regimes \cite{yu2022global,yu2022shape,wang2022offline}, while additional works explore compact state representations for robust policy learning \cite{lan2025dynamic}, one-shot learning for online parameter estimation \cite{besselaar2022one} or even learning based techniques in constrained scenarios \cite{tang2024learning}.

Model-based dynamic control of DLOs is difficult because their infinite-dimensional nature often yields underactuation even in simplified models \cite{della2023model}. Recent advances in underactuated system theory enable explicit, provably stable controllers via input decoupling of Lagrangian systems \cite{pustina2024input}, motivating the development of simplified model-based control laws for dynamic DLO manipulation. Prior work with closed-loop stability guarantees is limited to planar endpoint regulation with a single manipulator \cite{tiburzio2025controlling}. In contrast, this work proposes a 3D dynamic control framework that supports richer deformations and multiple force/torque application along the object, enabling coordinated manipulation from single- to multi-arm setups and providing a general foundation for stable control of floating-base DLOs. {In summary, this work addresses a critical gap in the literature, the absence of a general framework for 3D dynamic DLO manipulation beyond quasi-static assumptions, by introducing a formulation grounded in the physical principles governing DLOs that explicitly accounts for their physical response. Unlike existing approaches, the proposed method enables fast, dynamic manipulation in regimes with non-negligible inertial effects, while ensuring provable stability and high-precision performance. This establishes a principled foundation for reliable, real-time robotic manipulation of DLOs.

\section{Collocation of Wrenches: extending the Actuation Coordinates in $\text{SE}(3)\times\dots\times\text{SE}(3)$}\label{sec:Collocation_Wrenches}

This section addresses the following question: \textit{Does there exist a change of coordinates that decouples the actuated and unactuated dynamics of a Lagrangian system subject to multiple wrench inputs?} We make this question mathematically precise in the next two subsections and then provide a constructive answer, which will play a central role in formulating and solving deformable linear object manipulation as a three-dimensional shape regulation problem in the remainder of the paper. The result, however, is not specific to DLOs: it applies more broadly to (free-floating) mechanical systems whose control inputs are multiple three-dimensional forces and torques, such as those of spaceborne platforms, aerial or underwater vehicles, or rigid bodies interacting with the environment via several spatial contact wrenches.

\subsection{General dynamics}

The general Lagrangian form of a mechanical system with actuation non-collocated on its body coordinates is\footnote{With body coordinates, we mean here the ones that naturally arise from the modelling process - for example, joint variables in a rigid robot and strain parametrization in a deformable body.}
\begin{equation}
\label{eq::gendynamics}
\bm{M}(\bm{q})\ddot{\bm{q}} + (\bm{C}(\bm{q},\dot{\bm{q}}) + \bm{D})\dot{\bm{q}} + \bm{K}\bm{q} +  \bm{G}(\bm{q}) = \bm{A}(\bm{q})\bm{u}  \; \text{,}
\end{equation}
\noindent where $\bm{M}(\bm{q}) \in \mathbb{R}^{n \times n}$, is the generalized mass matrix, $ \bm{C}(\bm{q}, \dot{\bm{q}}) \in \mathbb{R}^{n \times n}$ is the Coriolis matrix, $\bm{D} \in \mathbb{R}^{n \times n}$ is the elastic damping matrix, $ \bm{K} \in \mathbb{R}^{n \times n}$ is the stiffness matrix, $\bm{G}(\bm{q}) \in \mathbb{R}^{n}$ is the gravitational
force. Without loss of generality, $\bm{K}$ and $\bm{D}$ are computed based on Hooke-like linear elastic and viscoelastic constitutive laws, which make them configuration-independent. The term $\bm{A}(\bm{q}) \in \mathbb{R}^{n \times n_{\mathrm{a}}}$ denotes the actuation matrix\footnote{The challenge we are discussing is relevant if this is a tall matrix, with $n_{\mathrm{a}}>n$.}, which maps control inputs $\bm{u}$ into the Lagrangian generalized forces $\bm{\tau}_{\mathrm{a}}$.

In this section, we consider mechanical systems actuated by external wrenches applied at fixed points along their structure, with posture $(\bm{p}_i(\bm{q}),\bm{R}_i(\bm{q})) \in \text{SE}(3)$. Let $\bm{W}_i = [\bm{\tau}_i^{T},\bm{f}_i^{T}]^{T} \in \mathbb{R}^6$ denote the complete wrench in the spatial frame, expressed in global coordinates, where $\bm{\tau}_i \in \mathbb{R}^3$ and $\bm{f}_i \in \mathbb{R}^3$ are the pure torque and force at that point. These are defined so that the power they produce is the external product with $(\dot{\bm{p}}_i,\bm{\omega}_i)$.
The generalized forces can be written as 
\begin{equation}
\begin{split}
    \bm{\tau}_{\mathrm{a}} &= \sum_{i=1}^{n_{\mathrm{w}}} \bm{J}^{\mathrm{s}}_{i}(\bm{q})^{T} \bm{W}_i\\
& = \sum_{i=1}^{n_{\mathrm{w}}} (\bm{J}^{\mathrm{s}}_{z_{i}})^{T}(\bm{q})\bm{\tau}_i + (\bm{J}^{\mathrm{s}}_{p_{i}}(\bm{q}))^{T}\bm{f}_i,
\end{split}
\end{equation}
where $\bm{J}^{\mathrm{s}}_{i}$ is the spatial geometric Jacobian at that point, $\bm{J}^{\mathrm{s}}_{z_{i}} \in \mathbb{R}^{3 \times n}$ and $\bm{J}^{\mathrm{s}}_{p_i} \in \mathbb{R}^{3 \times n}$ denote its angular and linear components, and $n_{\mathrm{w}}$ is the number of wrenches. The total number of actuation inputs is $n_{\mathrm{a}} = 6 n_{\mathrm{w}}$ and the input vector $\bm{u} = [\bm{W}_{1}^T, \bm{W}_{2}^T, \ldots, \bm{W}_{n_{\mathrm{w}}}^T]^T$ collects all wrench components, so that $\bm{u}_i$ is either a force or torque depending on $i$. The input matrix is then
\begin{equation}\label{eqn:MODEL1_2}
\bm{A}(\bm{q}) = \begin{bmatrix}
 \bm{J}^{\mathrm{s}}_{1}(\bm{q})^{T} &
 \bm{J}^{\mathrm{s}}_{2}(\bm{q})^{T} &
 \cdots &
 \bm{J}^{\mathrm{s}}_{n_{\mathrm{w}}}(\bm{q})^{T}
\end{bmatrix}
\end{equation}
Each column of $\bm{A}$ represents the generalized wrench distribution induced by the corresponding component of the $i$-th applied wrench. As shown in Sec.~\ref{dynamic_model}, the dynamics of deformable objects belong to this class - thus justifying the interest in this general challenge in the present work.

\subsection{Goal: decoupling through coordinates transformation}



We define a system to be \emph{input decoupable} if there exists a change of coordinates $\bm{q} \mapsto \bm{\theta}$ such that the input matrix assumes the canonical form $[\bm{I} \;\; \bm{0}]^T$. This structure allows the first $m$ components of $\bm{\theta}$ to be interpreted as \emph{collocated coordinates}, i.e., variables directly affected by control inputs, while the remaining coordinates describe internal, unactuated motions. A mechanical system admits such a representation if and only if the one-form $\bm{A}^T(\bm{q})\dot{\bm{q}}$ is integrable in time. A discussion of this condition and its implications is provided in Appendix~\ref{Sec:Collocated_form}.
The collocated form enables the formulation of \emph{collocated control problems}, for which the design and analysis of stabilizing regulators are simpler and more transparent.

\subsection{Extend actuation coordinates}

This subsection answers \textit{negatively} the question stated above by looking separately at the portions of $\bm{A}^T(\bm{q})\dot{\bm{q}}$ connected to force inputs and to the ones connected to torques, and demonstrating that, while the forces applied to the deformable object can be analytically integrable, the corresponding three-dimensional torques do not admit such a treatment. We then introduce a workaround: a method that accounts for simultaneous changes to input variables and configuration coordinates, achieving the desired beneficial decoupling. 

\subsubsection{Integrability of pure forces}

Let us focus on the $i$-th input pure force $\bm{f}_i$, i.e., the component of the input vector that is dual to the linear velocity $\dot{\bm{p}}_i$. We now show that the corresponding actuation direction satisfies the integrability condition.

Consider the time integral of the generalized velocity projected along the portion of $\bm{A}(\bm{q})$ associated with $\bm{f}_i$:
\begin{equation}
    \begin{split}
        \int_0^{t} 
        \left(
        \underbrace{
        \bm{A}(\bm{q}) 
        \left[ 0 \, \dots \, I \, \dots \, 0 \right]
        }_{\text{columns corresponding to } \bm{f}_i}
        \right)^{T}
        \dot{\bm{q}} \, \mathrm{d}t
        &= 
        \int_0^{t} 
        \bm{J}^{\mathrm{s}}_{p_{i}}(\bm{q}) 
        \dot{\bm{q}} 
        \, \mathrm{d}t \\
        &= 
        \int_0^{t} 
        \dot{\bm{p}}_i(\bm{q},\dot{\bm{q}}) 
        \, \mathrm{d}t \\
        &= 
        \bm{p}_i(\bm{q}).
    \end{split}
\end{equation}
The first equality follows from the definition of $\bm{A}(\bm{q})$, while the second exploits the kinematic identity
\(
\dot{\bm{p}}_i = \bm{J}^{\mathrm{s}}_{p_{i}}(\bm{q}) \dot{\bm{q}}.
\)
The last equality is a direct consequence of time integration. Thus, the generalized velocity associated with a pure force integrates to the position coordinate $\bm{p}_i(\bm{q})$. Consequently, actuation coordinates can be introduced for each $\bm{f}_i$, as discussed in Appendix~\ref{Sec:Collocated_form}.

\subsubsection{Integrability of pure torques}

Let us now focus on the $i$-th pure torque input $\bm{\tau}_i$, i.e., the component of the input vector that is dual to the angular velocity $\boldsymbol{\omega}_i$. We examine whether the corresponding actuation direction satisfies the integrability condition.
Consider the time integral of the generalized velocity projected along the portion of $\bm{A}(\bm{q})$ corresponding to $\bm{\tau}_i$:
\begin{equation}
    \begin{split}
        \int_0^{t} 
        \left(
        \underbrace{
        \bm{A}(\bm{q}) 
        \left[ 0 \, \dots \, I \, \dots \, 0 \right]
        }_{\text{columns corresponding to } \bm{\tau}_i}
        \right)^{T}
        \dot{\bm{q}} \, \mathrm{d}t
        &= 
        \int_0^{t} 
        \bm{J}^{\mathrm{s}}_{z_{i}}(\bm{q}) 
        \dot{\bm{q}} 
        \, \mathrm{d}t \\
        &= 
        \int_0^{t} 
        \boldsymbol{\omega}_i(\bm{q},\dot{\bm{q}}) 
        \, \mathrm{d}t .
    \end{split}
\end{equation}
The first equality follows from the definition of $\bm{A}(\bm{q})$, while the second exploits the kinematic identity
$
\boldsymbol{\omega}_i
=
\bm{J}^{\mathrm{s}}_{z_{i}}(\bm{q}) \dot{\bm{q}}.
$ Thus, the integrability condition reduces to the existence of a configuration function whose time derivative equals $\boldsymbol{\omega}_i$. This condition is not satisfied, since the integral
$\int_0^t \boldsymbol{\omega}_i \, \mathrm{d}t
$ is path-dependent. This is a classical result in Lie group theory. Indeed, examples\footnote{Consider a rigid body performing a sequence of $90^\circ$ rotations: first about one axis, then about a perpendicular axis, and finally reversing both rotations in opposite order. The accumulated angular displacement differs from zero despite the body returning to its initial configuration. More generally, this is a manifestation of the holonomy of $SO(3)$.} can be constructed such that
\begin{equation}
    \oint \boldsymbol{\omega}_i \, \mathrm{d}t \neq 0 ,
\end{equation}
where $\oint$ denotes integration along a closed trajectory in configuration space returning to the initial $\bm{q}$. If the integral were expressible as a function $\alpha_i(\bm{q})$, then necessarily
\begin{equation}
\oint \boldsymbol{\omega}_i \, \mathrm{d}t
=
\alpha_i(\bm{q}_{\mathrm{final}})
-
\alpha_i(\bm{q}_{\mathrm{initial}})
=
0 .
\end{equation}
Since this is generically non-zero, no such function exists.
Therefore, pure torques are not integrable in coordinates, and no actuation coordinate can be associated with $\bm{\tau}_i$ in the sense of Appendix~\ref{Sec:Collocated_form}.

Finally, it is useful to restate the integrability condition in local orientation coordinates, as this will be needed in the following. For any local parametrization $\bm{z}_i$ of orientation,
\begin{equation}\label{eq:P_def}
\dot{\bm{z}}_{i} 
=
\bm{P}(\bm{z}_{i}) \boldsymbol{\omega}_i \,
,
\end{equation}
where $\bm{P}(\bm{z}_{i})$ is configuration-dependent and generally not equal to the identity. This is the unique $\mathbb{R}^{3\times 3}$ matrix that maps the analytic Jacobian to the geometric one. 

\subsubsection{Actuation coordinates by simultaneous input and configuration transformation}\label{Sec:Collocated_Control_Integrable}

We finally present here one of the core theoretical results of this work, where we relax the requirement of obtaining an actuation matrix of the form $[\bm{I}\;\;\bm{0}]^T$ and instead aim for a full-rank block in place of the identity matrix.

\begin{lemma}\label{lm:weak_change}
Consider the mechanical system~\eqref{eq::gendynamics}, \eqref{eqn:MODEL1_2}.  
Let the configuration transformation
\begin{equation}\label{eq:change of coordinates}
\bm{\theta}(\bm{q})
=
\begin{bmatrix}
\bm{\theta}_{\mathrm{a},1}(\bm{q}) \\
\vdots \\
\bm{\theta}_{\mathrm{a},n_{\mathrm{w}}}(\bm{q}) \\
\bm{c}_{\mathrm{u}}(\bm{q})
\end{bmatrix}
\end{equation}
be such that
$$
\bm{\theta}_{\mathrm{a},i}(\bm{q})
=
\begin{bmatrix}
\bm{z}_i(\bm{q}) \\
\bm{p}_i(\bm{q})
\end{bmatrix}
\in \mathbb{R}^6,
\qquad
\bm{c}_{\mathrm{u}}(\bm{q}) \in \mathbb{R}^{n-n_{\mathrm{a}}},
$$
where $\bm{z}_i$ is any local parametrization of $SO(3)$ and $\bm{c}_{\mathrm{u}}$ is any function such that the Jacobian $\partial \bm{\theta}/\partial \bm{q}$ is full rank.

Then, in the coordinates $\bm{\theta}$, the input matrix becomes
\begin{equation}\label{eq:new_actuation}
\bm{A}'(\bm{\theta})
=
\begin{bmatrix}
\bm{\Psi}^{-T}(\bm{z}) \\
\bm{0}_{(n-n_{\mathrm{a}})\times n_{\mathrm{a}}}
\end{bmatrix},
\end{equation}
where
$$
\bm{\Psi}(\bm{z})
=
\mathrm{diag}\!\left(
\bm{\Psi}_1(\bm{z}_1),
\ldots,
\bm{\Psi}_{n_{\mathrm{w}}}(\bm{z}_{n_{\mathrm{w}}})
\right)
\in \mathbb{R}^{n_{\mathrm{a}}\times n_{\mathrm{a}}},
$$
with
$$
\bm{\Psi}_i(\bm{z}_i)
=
\begin{bmatrix}
\bm{P}(\bm{z}_i) & \bm{0} \\
\bm{0} & \bm{I}
\end{bmatrix}
\in \mathbb{R}^{6\times 6}.
$$
\end{lemma}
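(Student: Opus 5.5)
The argument uses the standard transformation rule for generalized forces under a change of coordinates. Let $\bm{J}_\theta(\bm{q}) = \partial\bm{\theta}/\partial\bm{q}$, which is square and invertible by assumption, so $\dot{\bm{\theta}} = \bm{J}_\theta\dot{\bm{q}}$. Power is invariant, $\bm{\tau}_{\mathrm{a}}^T\dot{\bm{q}} = \bm{\tau}'^T\dot{\bm{\theta}}$, hence the generalized forces in the new coordinates are $\bm{\tau}' = \bm{J}_\theta^{-T}\bm{\tau}_{\mathrm{a}}$. It follows that the new input matrix is $\bm{A}' = \bm{J}_\theta^{-T}\bm{A}$. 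Applying the same congruence to the left-hand side of \eqref{eq::gendynamics} gives the standard Lagrangian structure in $\bm{\theta}$, so only the input matrix needs to be computed. The claim \eqref{eq:new_actuation} is therefore equivalent to
\[
\bm{A}(\bm{q}) = \bm{J}_\theta^T(\bm{q})\begin{bmatrix}\bm{\Psi}^{-T}(\bm{z})\\ \bm{0}\end{bmatrix}.
\]
Transposing, this reads $\bm{A}^T = [\bm{\Psi}^{-1}\;\;\bm{0}]\,\bm{J}_\theta$. Because of the zero block, the condition only involves the first $n_{\mathrm{a}}$ rows of $\bm{J}_\theta$, namely the Jacobians $\partial\bm{\theta}_{\mathrm{a},i}/\partial\bm{q}$. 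This is what makes the result independent of the choice of $\bm{c}_{\mathrm{u}}$.

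Since $\bm{\Psi}$ is block diagonal, the identity decouples into one $6\times n$ block per wrench:
\[
\bm{J}^{\mathrm{s}}_i(\bm{q}) = \bm{\Psi}_i^{-1}(\bm{z}_i)\,\frac{\partial\bm{\theta}_{\mathrm{a},i}}{\partial\bm{q}}, \qquad i=1,\dots,n_{\mathrm{w}}.
\]

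\emph{Linear rows.} The position rows give $\partial\bm{p}_i/\partial\bm{q} = \bm{J}^{\mathrm{s}}_{p_i}$. This is the integrability of pure forces already shown, and it matches the identity block of $\bm{\Psi}_i$.

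\emph{Angular rows.} By \eqref{eq:P_def}, $\dot{\bm{z}}_i = \bm{P}(\bm{z}_i)\bm{\omega}_i = \bm{P}(\bm{z}_i)\bm{J}^{\mathrm{s}}_{z_i}\dot{\bm{q}}$ for every $\dot{\bm{q}}$. Hence $\partial\bm{z}_i/\partial\bm{q} = \bm{P}(\bm{z}_i)\bm{J}^{\mathrm{s}}_{z_i}$, and inverting $\bm{P}$ recovers the angular rows of the spatial geometric Jacobian. Together the two cases give the per-wrench identity, and stacking the blocks yields \eqref{eq:new_actuation}.

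The main obstacle is invertibility. Both $\bm{P}(\bm{z}_i)$ and $\bm{J}_\theta$ must be nonsingular, which holds only away from the singularities of the chosen orientation parametrization (for ZYX Euler angles, $\beta\neq\pm\pi/2$). The statement must therefore be read locally, on the domain where $\bm{\theta}$ is a diffeomorphism. This also explains why the relaxed structure, a full-rank block $\bm{\Psi}^{-T}$ in place of the identity, is the best achievable: the torque rows are not integrable, but their image under $\bm{P}$ is.

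A second point to check is that the dimension count is consistent. The construction needs $n\ge n_{\mathrm{a}}$ and requires $\bm{c}_{\mathrm{u}}$ to complete the rows $\partial\bm{\theta}_{\mathrm{a},i}/\partial\bm{q}$ to a basis. A completion exists whenever these $n_{\mathrm{a}}$ rows are linearly independent, which is implicitly assumed by the full-rank hypothesis.
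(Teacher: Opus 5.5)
Your proposal is correct and follows essentially the same route as the paper. The paper also derives $\partial\bm{\theta}_{\mathrm{a}}/\partial\bm{q} = \bm{\Psi}(\bm{z})\bm{A}^{T}(\bm{q})$ from $\dot{\bm{z}}_i = \bm{P}(\bm{z}_i)\bm{\omega}_i$ and $\dot{\bm{p}}_i = \bm{J}^{\mathrm{s}}_{p_i}\dot{\bm{q}}$, then factors $\bm{A} = \bm{J}_{\mathrm{c}}^{T}[\bm{\Psi}^{-T};\,\bm{0}]$ to conclude $\bm{A}' = \bm{J}_{\mathrm{c}}^{-T}\bm{A} = [\bm{\Psi}^{-T};\,\bm{0}]$; this is your blockwise verification read in the opposite direction. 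Your added caveats are sound: the result holds only on the chart where $\bm{P}(\bm{z}_i)$ is invertible, and the full-rank hypothesis implicitly requires the $n_{\mathrm{a}}$ actuated rows to be linearly independent, so $n \ge n_{\mathrm{a}}$.
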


\begin{proof}

We begin by differentiating the coordinate transformation defined in~\eqref{eq:change of coordinates}. For each actuated location $i$, the time derivative of the corresponding coordinates reads
\begin{equation}\label{eq:act_coordinates}
\dot{\bm{\theta}}_{\mathrm{a},i}
=
\begin{bmatrix}
\dot{\bm{z}}_i(\bm{q}) \\[2pt]
\dot{\bm{p}}_i(\bm{q})
\end{bmatrix}.
\end{equation}

Using the kinematic relations
\begin{equation*}
\dot{\bm{z}}_i(\bm{q})
=
\bm{P}(\bm{z}_{i}) \bm{J}^{\mathrm{s}}_{z_i}(\bm{q})\, \dot{\bm{q}},
\qquad
\dot{\bm{p}}_i(\bm{q})
=
\bm{J}^{\mathrm{s}}_{p_i}(\bm{q})\, \dot{\bm{q}},
\end{equation*}
where $\bm{P}(\bm{z}_{i})$ is full rank and defined as in~\eqref{eq:P_def}, we obtain
\begin{equation*}
\dot{\bm{\theta}}_{\mathrm{a},i}
=
\begin{bmatrix}
\bm{P}(\bm{z}_{i}) \bm{J}^{\mathrm{s}}_{z_i}(\bm{q}) \\[2pt]
\bm{J}^{\mathrm{s}}_{p_i}(\bm{q})
\end{bmatrix}
\dot{\bm{q}} = \bm{\Psi}_{i}(\bm{z}_{i})\bm{J}^{\mathrm{s}}_{i}(\bm{q})\dot{\bm{q}}.
\end{equation*}
It is worth pausing to stress here that the equivalence above is the key structural step: the existence of the matrix $\bm{P}(\bm{z}_i)$—invertible for any local parametrization of $SO(3)$—allowing to connect the non integrable angular velocities $\bm\omega_{i}$ to an integrable quantity $\dot{\bm{z}}_{i}$.

Stacking all actuated coordinates and appending the complementary coordinates $\bm{c}_{\mathrm{u}}(\bm{q})$, the Jacobian of the full coordinate transformation becomes
\begin{equation*}
\bm{J}_{\mathrm{c}}(\bm{q})
=
\begin{bmatrix}
\bm{\Psi}_{1}(\bm{z}_1)\bm{J}^{\mathrm{s}}_{1}(\bm{q}) \\
\bm{\Psi}_{2}(\bm{z}_2)\bm{J}^{\mathrm{s}}_{2}(\bm{q}) \\
\vdots \\
\bm{\Psi}_{n_{\mathrm{w}}}(\bm{z}_{n_{\mathrm{w}}})\bm{J}^{\mathrm{s}}_{n_{\mathrm{w}}}(\bm{q}) \\
\nabla_{\bm q} \bm{c}_u(\bm{q})
\end{bmatrix}
=
\begin{bmatrix}
\bm{\Psi}(\bm{z}) \bm{A}^{T}(\bm{q}) \\
\nabla_{\bm q}\bm{c}_u(\bm{q})
\end{bmatrix},
\end{equation*}
where we used the definition of $\bm{A}$ given in~\eqref{eqn:MODEL1_2}.

The transformed actuation matrix follows from the standard rule $ \bm{A}'(\bm{\theta})
= \bm{J}_{\mathrm{c}^{-1}}^{T}(\bm{q})\bm{A}(\bm{q})$. By hypothesis\footnote{On passing, it is worth stressing that the existence of an $\bm{c}_{\mathrm{u}}$ such that $\partial \bm{\theta}/\partial \bm{q}$ is full rank, since $\bm{\Psi}(\bm{z})$ is full rank by construction and $\bm{A}(\bm{q})$ has full column rank; hence $\bm{\Psi}(\bm{z})\bm{A}^{T}(\bm{q})$ has full row rank.}, $\bm{J}_{\mathrm{c}}(\bm{q})$ is nonsingular, thus $\bm{J}_{\mathrm{c}^{-1}} = \bm{J}_{\mathrm{c}}^{-1}$. Thus, we can compute explicitly:
\begin{equation*}
\begin{split}
\bm A'(\bm q) &= \bm{J}^{-T}_{\mathrm{c}} \bm{A}(\bm q)\\ &= \bm{J}^{-T}_{\mathrm{c}} \left(\bm A(\bm q) \left(\bm\Psi^{T}\bm\Psi^{-T}\right) + \nabla_{\bm q}\bm c_u(\bm q)^{T} \bm 0\right) \\ &= \bm{J}^{-T}_{\mathrm{c}} \left( \begin{bmatrix} \bm A(\bm q)\bm\Psi^{T}, \nabla_{\bm q}\bm c_u(\bm q)^{T} \end{bmatrix} \begin{bmatrix} \bm\Psi^{-T} \\ \bm 0 \end{bmatrix} \right)\\ &= \bm{J}^{-T}_{\mathrm{c}} \left(\bm{J}^{T}_{\mathrm{c}} \begin{bmatrix} \bm\Psi^{-T} \\ \bm 0 \end{bmatrix}\right)\\ &= \begin{bmatrix} \bm\Psi^{-T} \\ \bm 0 \end{bmatrix}. \end{split}
\end{equation*}

This proves~\eqref{eq:new_actuation} and concludes the argument.

\end{proof}


We can therefore achieve the canonical collocated form via a simple input reparametrization, as discussed below.

\begin{corollary}\label{corollary1}
Under the assumptions of Lemma~\ref{lm:weak_change}, define the input transformation
\begin{equation}\label{eq:input_change} 
\bm{u} = \bm{\Psi}^{T}(\bm{z})\,\bm{\mu}.
\end{equation}
Then, in the coordinates $\bm{\theta}$ and inputs $\bm{\mu}$, the actuation matrix takes the canonical collocated form
\begin{equation}\label{eq:canonical_collocated}
\bm{A}''(\bm{\theta})
=
\begin{bmatrix}
\bm{I}_{n_{\mathrm{a}}} \\
\bm{0}_{(n-n_{\mathrm{a}})\times n_{\mathrm{a}}}
\end{bmatrix}.
\end{equation}
\end{corollary}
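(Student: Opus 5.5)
The plan is to combine Lemma~\ref{lm:weak_change} with the input reparametrization~\eqref{eq:input_change} by direct substitution. In the coordinates $\bm{\theta}$, Lemma~\ref{lm:weak_change} gives the generalized forces $\bm{A}'(\bm{\theta})\bm{u}$, with $\bm{A}'$ as in~\eqref{eq:new_actuation}. Substituting $\bm{u} = \bm{\Psi}^{T}(\bm{z})\bm{\mu}$ gives
\begin{equation*}
\bm{A}'(\bm{\theta})\bm{u}
=
\begin{bmatrix}
\bm{\Psi}^{-T}(\bm{z}) \\
\bm{0}
\end{bmatrix}
\bm{\Psi}^{T}(\bm{z})\,\bm{\mu}
=
\begin{bmatrix}
\bm{I}_{n_{\mathrm{a}}} \\
\bm{0}
\end{bmatrix}
\bm{\mu}.
\end{equation*}
This identifies $\bm{A}''(\bm{\theta})$ with the canonical form~\eqref{eq:canonical_collocated}.

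The only point that needs care is that $\bm{\Psi}(\bm{z})$ is invertible, so that~\eqref{eq:input_change} is a legitimate bijective change of inputs and $\bm{\Psi}^{-T}\bm{\Psi}^{T} = \bm{I}$ holds. I would verify this from the block-diagonal structure. $\bm{\Psi}$ is invertible exactly when every block $\bm{\Psi}_i(\bm{z}_i)$ is. Each $\bm{\Psi}_i$ is itself block diagonal with blocks $\bm{P}(\bm{z}_i)$ and $\bm{I}$, so $\det\bm{\Psi}_i = \det\bm{P}(\bm{z}_i)$. This determinant is nonzero wherever the local parametrization $\bm{z}_i$ of $SO(3)$ is regular, i.e.\ away from representation singularities such as gimbal lock for ZYX Euler angles. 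That regularity is already assumed in Lemma~\ref{lm:weak_change}, where $\bm{P}$ is stated to be full rank.

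I would add a remark that $\bm{\Psi}$ depends only on the configuration, through $\bm{z} = \bm{z}(\bm{q})$, and not on $\bm{\mu}$ or on velocities. Hence the transformation is a static, state-dependent input map and leaves the Lagrangian structure of the left-hand side untouched. No step is a real obstacle; the proof reduces to a one-line matrix identity.
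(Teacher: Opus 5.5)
Your proposal is correct and matches the paper's proof: both substitute $\bm{u}=\bm{\Psi}^{T}(\bm{z})\bm{\mu}$ into $\bm{A}'(\bm{\theta})\bm{u}$ from Lemma~\ref{lm:weak_change} and use $\bm{\Psi}^{-T}\bm{\Psi}^{T}=\bm{I}$. Your extra check that $\bm{\Psi}$ is invertible, via $\det\bm{\Psi}_i=\det\bm{P}(\bm{z}_i)$, makes explicit what the paper leaves implicit in the lemma's full-rank assumption on $\bm{P}$.
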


\begin{proof}
From Lemma~\ref{Sec:Collocated_Control_Integrable}, the actuation matrix in $\bm{\theta}$--coordinates is
\[
\bm{A}'(\bm{\theta})
=
\begin{bmatrix}
\bm{\Psi}^{-T}(\bm{z})\\
\bm{0}
\end{bmatrix}.
\]
Substituting $\bm{u}=\bm{\Psi}^{T}(\bm{z})\bm{\mu}$ into $\bm{A}'(\bm{\theta})\bm{u}$ yields
\[
\bm{A}'(\bm{\theta})\bm{u}
=
\begin{bmatrix}
\bm{\Psi}^{-T}(\bm{z})\\
\bm{0}
\end{bmatrix}
\bm{\Psi}^{T}(\bm{z})\bm{\mu}
=
\begin{bmatrix}
\bm{I}_{n_{\mathrm{a}}}\\
\bm{0}
\end{bmatrix}\bm{\mu},
\]
which proves~\eqref{eq:canonical_collocated}.
\end{proof}


\section{Dynamics of a DLO with non-negligible physical response}\label{dynamic_model}

\begin{figure*}[t]
\centering
\begin{minipage}[c]{0.55\textwidth}
  \centering
  \hfill \includegraphics[width=\linewidth]{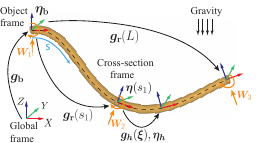}
\end{minipage}
\begin{minipage}[c]{0.4\textwidth}
  \caption{\small
Schematic representation of the kinematic model of a floating deformable linear object (DLO).
The configuration of the DLO is parameterized by the curvilinear coordinate $s$, with local
cross-section frames attached along the backbone. The global frame and the object base frame
are shown on the left. External wrenches $\bm{W}_i$ (forces and moments) are applied at discrete
grasping points along the object, while gravity acts continuously along its length. The pose
of a generic cross section is described by the homogeneous transformation
$\bm{g}(s) \in \text{SE}(3)$, and its body velocity by the twist $\bm{\eta}(s) \in \mathfrak{se}(3)$.
Relative transformations and twists over an infinitesimal segment $h$ satisfy
$\bm{g}(s+h) = \bm{g}(s)\bm{g}_h$ and $\bm{\eta}(s+h) = \bm{\eta}(s) + \bm{\eta}_h$.}
  \label{fig:model}
\end{minipage}
\end{figure*}

Here, we present a 3D dynamic model of a floating-base deformable object subject to gravity and pointwise forces and torques applied at various points along its length, simulating the wrenches that multiple robotic manipulators apply to an object in order to deform it in a specific way. While the model itself is not new, this is the first time it has been employed—and presented in this manner—for the purpose of enhancing the dynamic manipulation of deformable objects.

A floating slender soft body of length \( L \) can be modeled as a Cosserat rod, represented as a continuous stack of rigid cross-sections parameterized by a curvilinear abscissa \( s \in [0,\, 1] \), where its base is free to move in space (see Fig. \ref{fig:model}). The motion of the base configuration can be described using rigid-body motions, i.e., homogeneous transformation matrices in the Special Euclidean group $SE(3)$. The base pose is represented by the homogeneous transformation matrix \( \boldsymbol{g}_\mathrm{b} \in SE(3) \), whose generalized coordinates \( \boldsymbol{q}_\mathrm{b} \in \mathbb{R}^6 \) define the orientation and position of the object frame with respect to the global frame. Similarly, the linear and angular velocities of the moving object frame are defined by the body twist \( \boldsymbol{\eta}_\mathrm{b} \), expressed in body-frame coordinates. Using screw theory, this twist is given by $\boldsymbol{\eta}_\mathrm{b} =\boldsymbol{g}_\mathrm{b}^{-1}\,\dot{\boldsymbol{g}}_\mathrm{b}$. Next, we define the homogeneous transformation matrix between the coordinate frame attached to each cross-section and the object frame as the directed spatial curve $\boldsymbol{g}_\mathrm{r}(\bullet):s\rightarrow \boldsymbol{g}_\mathrm{r}(s) \in SE(3)$ which assigns to each point \( s \) along the rod a rigid-body configuration as follows:

\begin{equation}
    \boldsymbol{g}_\mathrm{r}(s) = \left[ \begin{array}{cc} \boldsymbol{R}(s) & \boldsymbol{r}(s) \\ \boldsymbol{0} & 1 \end{array} \right] \text{,}
    \label{eqn::homogenousTM_g}
\end{equation}

\noindent where \( \boldsymbol{r}(s) \in \mathbb{R}^3 \) is the position of the local frame, while \( \boldsymbol{R}(s) \in \mathrm{SO}(3) \) provides the orientation of the local frame. Note that, in this work, we use the subindex \( \mathrm{r} \) to denote homogeneous transformations and twists expressed with respect to the object frame. The strain field \( \boldsymbol{\xi}(s) \) and the relative velocity twist \( \boldsymbol{\eta}_\mathrm{r}(s) \) of the body with respect to the object frame are defined by the partial derivatives of Equation~(\ref{eqn::homogenousTM_g}) with respect to space \( (\cdot)' \) and time \( \dot{(\cdot)} \), respectively:

\begin{equation}\label{eqn::gprime} \boldsymbol{g}_\mathrm{r}'(s)=\boldsymbol{g}_\mathrm{r}\hat{\boldsymbol{\xi}}(s)\text{,} \hspace{0.7cm} \dot{\boldsymbol{g}}_\mathrm{r}(s)=\boldsymbol{g}_\mathrm{r}(s)\hat{\boldsymbol{\eta}}_\mathrm{r}(s) \text{,}
\end{equation}

\noindent where \( \widehat{(\bullet)} \) denotes the isomorphism between  \( \mathbb{R}^6 \) and the Lie algebra \( \mathfrak{se}(3) \), following the 
screw-theoretic representation of twists.

The relation between screw strain and velocity is established through the equality of the mixed partial derivatives in space and time:
\begin{equation}\label{eqn::vel}   \boldsymbol{\eta}_\mathrm{r}'=\dot{\boldsymbol{\xi}}-\text{ad}_{\boldsymbol{\xi}}\boldsymbol{\eta}_\mathrm{r}
\end{equation}
\noindent where \(\text{ad}_{(\boldsymbol{\bullet})}\) is the adjoint operator of \(se(3)\). Space integration of equations \eqref{eqn::gprime} (first part) and \eqref{eqn::vel} provides the homogeneous transformation and the velocity twist of any point of the object with respect to the spatial frame:

\vspace{-0.5cm}

\begin{equation}\label{eq::int_kinem_pos}
\bm{g}(s) = \boldsymbol{g_\mathrm{b}} \cdot (e^{\int_0^{s}  \widehat{\bm{\xi}} d \gamma}); \ \ \boldsymbol{\eta}(s) = \boldsymbol{\eta_\mathrm{b}} +   \text{Ad}_{\boldsymbol{g}^{-1}}\int_0^{s} \text{Ad}_{\boldsymbol{g}} \dot{\boldsymbol{\xi}} d \gamma
\end{equation}


\noindent where \(\text{Ad}_{\boldsymbol{g}}\) is the adjoint representation of \(\boldsymbol{g}\). Up to this point, the formulation remains general, and equations (\ref{eqn::homogenousTM_g})–(\ref{eq::int_kinem_pos}) define the PDE system describing the Cosserat rod. We propose employing a ''Ritz–Galerkin'' method in which the configuration is parameterized by the strain field, discretized using a truncated functional basis of spatially dependent vectors. The components along these basis vectors define a finite set of generalized coordinates, which are governed by a system of Lagrangian ODEs in time.


\begin{equation}
    \boldsymbol{\xi}(s,t)\approx \boldsymbol{\xi}_n(s,t)
= \boldsymbol{\xi}_0(s) + \sum_{i=1}^{n} q_{\mathrm{o},i}(t)\,\boldsymbol{\phi}_i(s).
    \label{eqn::discretizedStrain}
\end{equation}

\noindent where $\boldsymbol{\xi}_n \in \mathbb{R}^{6}$ is the approximated strain field and encompasses the six deformation modes including bending, twisting, shear, and elongation, $\boldsymbol{\xi}_0(s)$ is the reference strain field, $\boldsymbol{\bm{\phi}}_i(s) \in \mathbb{R}^{6}$ are the basis functions, and \( \boldsymbol{q}_{\mathrm{o}} = [q_{\mathrm{o},1}, q_{\mathrm{o},2},...,q_{\mathrm{o},n}]^{T} \in \mathbb{R}^{n_{\mathrm{o}}} \) are the generalized coordinates that 
define the deformation of the object. Equation~(\ref{eqn::discretizedStrain}) can be substituted into 
Equation~(\ref{eqn::vel}), eventually leading to the definition of the geometric Jacobian of the full system.

\begin{equation}
\label{eq::JJdot}
    \boldsymbol{\eta}(s) = \boldsymbol{\eta}_\mathrm{b}(\boldsymbol{q}_\mathrm{b}) + \mathrm{Ad}_{\bm{g}}^{-1} \int_0^s \mathrm{Ad}_{\bm{g}} \bm{B}_{\xi} d \gamma \dot{\bm{q}}_{o} = \boldsymbol{J}(\boldsymbol{q},X)\dot{\boldsymbol{q}}
\end{equation}

\noindent where $\boldsymbol{q} = [\boldsymbol{q}_\mathrm{b}^{T} \ \boldsymbol{q}_{\mathrm{o}}^{T}]^{T} \in \mathbb{R}^{n}$ are the generalized coordinates of the floating DLOs ($n=6+n_{\mathrm{o}}$) and $\bm{B}_{\xi} \in \mathbb{R}^{6 \times n_{\mathrm{o}}}$ is the matrix collecting basis functions column-wise. 



\begin{remark}
The strain field~(\ref{eqn::discretizedStrain}) can be divided into multiple elements, with the deformation of each element governed by a set of generalized coordinates. This is particularly important when the external actuation is discontinuous and higher accuracy in the object configuration is required. In this work, we suggest dividing the object into the same number of elements as the intermediate external forces, since these forces introduce discontinuities in the first derivative of the strain that cannot be properly captured by polynomial functions (see \cite{mathew2025reduced}).
\end{remark}

Projecting the free dynamics of the Cosserat rod \cite{Boyer_JNLS_2017} using the geometric Jacobian through D'Alembert's principle derives the generalized dynamics of the system in the standard Lagrangian form \eqref{eq::gendynamics} and with input matrix (\ref{eqn:MODEL1_2}). The spatial Jacobian of (\ref{eqn:MODEL1_2}) and the body Jacobian used in (\ref{eq::JJdot}) are related by the adjoint transformation through $\bm{J}^{\mathrm{s}}_{i} = \mathrm{Ad}_{\bm{g}}\bm{J}_i$ where $\mathrm{Ad}_{\bm{g}}$ denotes the adjoint transformation associated with $g$.

\section{Model-based closed-loop control of DLOs}\label{sec:Model_based_Control}

The general control architecture proposed in this work is shown in Fig. \ref{fig:2_2}. In what follows, we systematically outline its main components, emphasizing the critical role of the model in the control design process. Additionally, we analyze the stability and convergence properties of a broad family of controllers applicable to the proposed framework.

\subsection{Low-Level Control}


The control strategy developed in this work aims to achieve shape regulation, i.e., to drive the system (\ref{eq::gendynamics})-(\ref{eqn:MODEL1_2}) toward a desired configuration by applying wrenches at the point where the robots grasp the DLO. The proposed control law is defined as  




\begin{equation}\label{fictitious_law} 
\bm{u} =  \bm{\Psi}^{T}(\mathbf{z}) \bm{\nu}(\bm{\theta}, \bm{\dot{\theta}})
\end{equation}

\noindent where $\bm{\theta}$ is defined by the change of coordinates (\ref{eq:change of coordinates}).





\begin{theorem}
Consider a nonlinear floating-base DLO system actuated through multiple distributed wrenches (\ref{eq::gendynamics})-(\ref{eqn:MODEL1_2}) controlled by the law (\ref{fictitious_law}). Let the actuated coordinates be chosen according to (\ref{eq:act_coordinates}) and the unactuated coordinates $\bm{c}_{\mathrm{u}}(\bm{q}) \in \mathbb{R}^{n-n_{\mathrm{a}}}$  such that the Jacobian of the transformation is full rank. Then, the following properties hold:

\begin{itemize}
\item[i)] The DLO states remain bounded and converge asymptotically to an equilibrium configuration $(\bm{q}, \dot{\bm{q}})
= (\overline{\bm{q}}, \bm{0})$, provided that $\bm{\nu}$ is designed such that the collocated form~(\ref{eq_decoupling}) converges asymptotically to the equilibrium point
$(\bm{\theta}_\mathrm{a}, \bm{\theta}_\mathrm{u}, \dot{\bm{\theta}}_\mathrm{a}, \dot{\bm{\theta}}_\mathrm{u})
= (\bm{\theta}_\mathrm{a}^{*}, \overline{\bm{\theta}}_\mathrm{u}, \bm{0}, \bm{0})$ where $\overline{\bm{\theta}}_\mathrm{u}$ is a solution of
\begin{equation}\label{eq_equation_}
\bm{G}_\mathrm{u} + \bm{K}_\mathrm{ua}\bm{\theta}_\mathrm{a} + \bm{K}_\mathrm{uu}\bm{\theta}_\mathrm{u} = \bm{0}.
\end{equation}
\item[ii)] If, in addition, the condition 
\begin{equation}\label{eq_equation_12}
\bm{K}_\mathrm{uu} \; \succ\;
\frac{\partial^{2} U(\bm{\theta}_\mathrm{a}^{*}, \overline{\bm{\theta}}_\mathrm{u})}
{\partial \bm{\theta}_\mathrm{u}^{2}} .
\end{equation}

holds for all $\bm{\theta}_\mathrm{u} \in \mathbb{R}^{n-n_\mathrm{a}}$, then the solution (\ref{eq_equation_}) is unique and defines a globally asymptotically stable equilibrium of the closed-loop system.
\end{itemize}

\end{theorem}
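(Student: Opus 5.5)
Part i) reduces to the fact that $\bm{\theta}(\bm{q})$ is a diffeomorphism, so the real work is in part ii).

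\textbf{Reduction to collocated coordinates (part i).} By Lemma~\ref{lm:weak_change} and Corollary~\ref{corollary1}, substituting (\ref{fictitious_law}) into (\ref{eq::gendynamics})--(\ref{eqn:MODEL1_2}) and rewriting in $\bm{\theta}$ gives the collocated form
\begin{equation*}
\bm{M}_\theta\ddot{\bm{\theta}} + (\bm{C}_\theta+\bm{D}_\theta)\dot{\bm{\theta}} + \nabla_{\bm{\theta}}\big(U_{\mathrm{K}}+U\big) = \begin{bmatrix}\bm{\nu}\\ \bm{0}\end{bmatrix}.
\end{equation*}
Here $U_{\mathrm{K}}$ is the elastic energy, whose Hessian blocks I denote $\bm{K}_{\mathrm{aa}},\bm{K}_{\mathrm{au}},\bm{K}_{\mathrm{ua}},\bm{K}_{\mathrm{uu}}$, and $U$ is the gravitational potential, with $\bm{G}_\mathrm{u}=\partial U/\partial\bm{\theta}_\mathrm{u}$.

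Because $\partial\bm{\theta}/\partial\bm{q}$ is full rank, the map $\bm{q}\mapsto\bm{\theta}$ is a local diffeomorphism. It therefore carries bounded trajectories and convergent trajectories of $(\bm{\theta},\dot{\bm{\theta}})$ back to $(\bm{q},\dot{\bm{q}})$, with $\dot{\bm{q}}=\bm{J}_{\mathrm{c}}^{-1}\dot{\bm{\theta}}\to\bm{0}$.

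It remains to identify the limit point. At any equilibrium $\dot{\bm{\theta}}=\ddot{\bm{\theta}}=\bm{0}$, the unactuated rows of the collocated form carry no input. They therefore reduce to (\ref{eq_equation_}) evaluated at $\bm{\theta}_\mathrm{a}=\bm{\theta}_\mathrm{a}^*$, so any limit of the assumed convergent collocated dynamics must have $\bm{\theta}_\mathrm{u}=\overline{\bm{\theta}}_\mathrm{u}$. Then $\overline{\bm{q}}=\bm{\theta}^{-1}(\bm{\theta}_\mathrm{a}^*,\overline{\bm{\theta}}_\mathrm{u})$ proves i).

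\textbf{Uniqueness and global stability (part ii).} For ii) I commit to the concrete PD-plus-compensation choice
\begin{equation*}
\bm{\nu} = \bm{K}_\mathrm{aa}\bm{\theta}_\mathrm{a}^{*} + \bm{K}_\mathrm{au}\overline{\bm{\theta}}_\mathrm{u} + \bm{G}_\mathrm{a}(\bm{\theta}_\mathrm{a}^*,\overline{\bm{\theta}}_\mathrm{u}) - \bm{K}_\mathrm{P}(\bm{\theta}_\mathrm{a}-\bm{\theta}_\mathrm{a}^*) - \bm{K}_\mathrm{D}\dot{\bm{\theta}}_\mathrm{a},
\end{equation*}
with $\bm{K}_\mathrm{P}$ chosen large enough, and take a total-energy Lyapunov candidate
\begin{equation*}
V = \tfrac12\dot{\bm{\theta}}^T\bm{M}_\theta\dot{\bm{\theta}} + \tfrac12\tilde{\bm{\theta}}^T\bm{K}\tilde{\bm{\theta}} + U(\bm{\theta}) - U(\overline{\bm{\theta}}) - \nabla U(\overline{\bm{\theta}})^T\tilde{\bm{\theta}} + \tfrac12\tilde{\bm{\theta}}_\mathrm{a}^T\bm{K}_\mathrm{P}\tilde{\bm{\theta}}_\mathrm{a},
\end{equation*}
where $\tilde{\bm{\theta}}=\bm{\theta}-\overline{\bm{\theta}}$.

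Uniqueness of $\overline{\bm{\theta}}_\mathrm{u}$ comes first. The map $\bm{\theta}_\mathrm{u}\mapsto\bm{G}_\mathrm{u}+\bm{K}_\mathrm{ua}\bm{\theta}_\mathrm{a}^*+\bm{K}_\mathrm{uu}\bm{\theta}_\mathrm{u}$ is the gradient of a function whose Hessian is $\bm{K}_\mathrm{uu}+\partial^2U/\partial\bm{\theta}_\mathrm{u}^2$. Condition (\ref{eq_equation_12}) is meant to make this Hessian uniformly positive definite; I read it as bounding the gravitational curvature uniformly in $\bm{\theta}_\mathrm{u}$, and the sign convention is one of the things I must pin down. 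Uniform positive definiteness makes the function strictly convex and coercive, so its gradient vanishes at exactly one point.

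\textbf{Lyapunov analysis and the main obstacle.} I then show $V$ is radially unbounded and positive definite. The unactuated directions are handled by the strict convexity just established. The actuated directions, and the cross terms $\bm{K}_\mathrm{au}$, are dominated by choosing $\bm{K}_\mathrm{P}$ large, using a Schur complement argument. Along closed-loop trajectories, skew-symmetry of $\dot{\bm{M}}_\theta-2\bm{C}_\theta$ gives $\dot V=-\dot{\bm{\theta}}^T\bm{D}_\theta\dot{\bm{\theta}}-\dot{\bm{\theta}}_\mathrm{a}^T\bm{K}_\mathrm{D}\dot{\bm{\theta}}_\mathrm{a}\le 0$. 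LaSalle's invariance principle then applies: on $\{\dot{\bm{\theta}}=\bm{0}\}$ the dynamics force $\nabla V=\bm{0}$, and uniqueness leaves only the desired equilibrium. Global asymptotic stability follows, and pulling back through the diffeomorphism as in part i) transfers it to $\bm{q}$.

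The main obstacle is the global convexity step. Gravity is bounded but not convex, because $U$ depends on $\bm{\theta}_\mathrm{u}$ through trigonometric terms. I also have to check that the coordinate change $\bm{\theta}(\bm{q})$ is globally valid, since Euler angles are only a local chart of $SO(3)$. If the chart issue cannot be removed, I will state the result on the domain of the chart.
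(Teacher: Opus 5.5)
Your part (i) is correct and leaner than the paper's. The paper does not simply transport the assumed convergence through $\bm{\theta}(\bm{q})$. It freezes $\bm{\theta}_\mathrm{a}=\bm{\theta}_\mathrm{a}^*$, writes the zero dynamics (\ref{zero_dynamics}), and runs LaSalle on $\tfrac{1}{2}\dot{\bm{\theta}}_\mathrm{u}^T\bm{M}_\mathrm{uu}\dot{\bm{\theta}}_\mathrm{u}+\tfrac{1}{2}\bm{\theta}_\mathrm{u}^T\bm{K}_\mathrm{uu}\bm{\theta}_\mathrm{u}+U_\mathrm{p}$ to show that $\bm{\theta}_\mathrm{u}$ settles on solutions of (\ref{eq_equation_}). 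Given the proviso in (i), reading the limit directly off the unactuated rows at rest, as you do, is all that is needed. Your caveat about the global validity of the Euler-angle chart is a real point that the paper leaves implicit.

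In (ii), your uniqueness step is the paper's argument: strict convexity of a potential $P(\bm{\theta}_\mathrm{u})$ whose gradient is the left side of (\ref{eq_equation_}). The paper's proof uses (\ref{eq_equation_12}) in the form $\bm{K}_\mathrm{uu}+\partial^2U/\partial\bm{\theta}_\mathrm{u}^2\succ 0$ for all $\bm{\theta}_\mathrm{u}$, so take that as the intended sign. Pointwise positive definiteness already gives at most one critical point, and existence comes from (i), so your uniform strengthening is optional.

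Where you actually depart from the paper is in committing to (\ref{eq:PD_controller_comp+}) with large $\bm{K}_P$ and a full-state energy function. As a proof of the stated theorem this is too narrow. Part (ii) concerns any $\bm{\nu}$ meeting the proviso of (i) and imposes no gain condition. It also closes in one line: every trajectory converges to some $(\bm{\theta}_\mathrm{a}^*,\overline{\bm{\theta}}_\mathrm{u},\bm{0},\bm{0})$ with $\overline{\bm{\theta}}_\mathrm{u}$ solving (\ref{eq_equation_}), and uniqueness leaves a single such point.

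Your construction does give more than the paper, namely Lyapunov stability of the full closed loop rather than mere global attractivity. It does so only for that controller, and it needs hypotheses absent from the statement:
\begin{itemize}
\item uniform bounds on the whole gravitational Hessian, including the actuated and cross blocks, for the Schur-complement step;
\item a LaSalle argument that does not simply posit $\dot{\bm{\theta}}=\bm{0}$ on $\{\dot V=0\}$, because for a floating base $\bm{D}$ is only semidefinite (rigid-body modes are undamped). You must combine $\dot{\bm{\theta}}_\mathrm{a}=\bm{0}$ with $\dot{\bm{q}}_\mathrm{o}=\bm{0}$ to force $\dot{\bm{q}}_\mathrm{b}=\bm{0}$.
\end{itemize}
Keep that analysis as a corollary for PD+compensation, and prove the theorem itself by uniqueness plus (i).
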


\begin{proof}

This proof follows from the main results in Lemma~\ref{lm:weak_change} and Corollary~\ref{corollary1}. Substituting the control law~(\ref{fictitious_law}) into the model~(\ref{eq::gendynamics})–(\ref{eqn:MODEL1_2}) and applying the change of coordinates~(\ref{eq:change of coordinates}), the actuation matrix becomes $[\bm{I} \;\; \bm{0}]^T$ and the control input is $\bm{\nu}$.


Consequently, the convergence properties of the original system follow directly from the convergence guarantees of the collocated form (\ref{eq_decoupling}). Considering that the actuated variables $\bm{\theta}_\mathrm{a}$ are regulated to a constant value $\bm{\theta}_\mathrm{a}^{*}$ choosing properly $\bm{\nu}$, the presence of the unactuated dynamics implies that the system exhibits a zero dynamics of dimension $2(n-n_\mathrm{a})$. This can be easily derived by examining the residual dynamics in (\ref{eq_decoupling}) when $\dot{\bm{\theta}}_\mathrm{a}$ and $\ddot{\bm{\theta}}_\mathrm{a}$ are set to zero

\begin{equation}\label{zero_dynamics}
\begin{split}
 \bm{M}_\mathrm{uu}(\bm{\theta}_\mathrm{a}^{*},\bm{\theta}_\mathrm{u}) & \ddot{\bm{\theta}}_\mathrm{u}  +  \bm{C}_\mathrm{uu}(\bm{\theta}_\mathrm{a}^{*},\bm{\theta}_\mathrm{u},\bm{0},\dot{\bm{\theta}}_\mathrm{u}) \dot{\bm{\theta}}_\mathrm{u} + \bm{G}_\mathrm{u}(\bm{\theta}_\mathrm{a}^{*},\bm{\theta}_\mathrm{u})    \\    & +  \bm{K}_\mathrm{uu}  \bm{\theta}_\mathrm{u} + \bm{D}_\mathrm{uu}(\bm{\theta}_\mathrm{a}^{*},\bm{\theta}_\mathrm{u}) \dot{\bm{\theta}}_\mathrm{u}  = \bm{0}. 
\end{split}
\end{equation}

Consider the Lyapunov-like function

\begin{equation}
V(\bm{\theta}_\mathrm{u},\dot{\bm{\theta}}_\mathrm{u}) =
\frac{1}{2}\dot{\bm{\theta}}_\mathrm{u}^{T}\bm{M}_\mathrm{uu}(\bm{\theta}_\mathrm{a}^{*},\bm{\theta}_\mathrm{u})\dot{\bm{\theta}}_\mathrm{u}
+\frac{1}{2}\bm{\theta}_\mathrm{u}^{T}\bm{K}_\mathrm{uu}\bm{\theta}_\mathrm{u}+U_\mathrm{p}(\bm{\theta}_\mathrm{a}^{*},\bm{\theta}_\mathrm{u}).
\end{equation}

The gravitational potential $U_\mathrm{p}(\bm{\theta}_\mathrm{a}^{*},\bm{\theta}_\mathrm{u})$ is lower bounded, which implies that $V(\bm{\theta}_\mathrm{u},\dot{\bm{\theta}}_\mathrm{u})$ is also lower bounded. We then evaluate $\dot{V}$ along the trajectories of (\ref{zero_dynamics}), obtaining $\dot{V} = - \dot{\bm{\theta}}_u^{T} \bm{D}_\mathrm{uu} \dot{\bm{\theta}}_\mathrm{u} \leq 0$. Since $V$ is both radially unbounded and lower bounded, it is possible to invoke the corollary to LaSalle's invariance principle \cite{khalil2002nonlinear}, thus concluding convergence to (\ref{eq_equation_}) and the first part of the proof. 

The equilibrium (\ref{eq_equation_}) reached by the unactuated variables is not unique in general. To prove that the solution is a unique globally asymptotically stable equilibrium, let us consider the following Lyapunov-like function.

\begin{equation}
P(\bm{\theta}_\mathrm{u})
= U_{\mathrm{p}}(\bm{\theta}_\mathrm{a}^{*}, \overline{\bm{\theta}}_\mathrm{u})
+\frac{1}{2}\,\bm{\theta}_\mathrm{u}^{T}\,\bm{K}_\mathrm{uu}\,\bm{\theta}_\mathrm{u} .
\end{equation}

According to (\ref{eq_equation_}), the unactuated variables converge to a value
$\overline{\bm{\theta}}_{\mathrm{u}}$ satisfying $\bm{G}_\mathrm{u}(\bm{\theta}_\mathrm{a}^{*}, \overline{\bm{\theta}}_\mathrm{u})
+ \bm{K}_\mathrm{uu}\,\overline{\bm{\theta}}_\mathrm{u} = \bm{0}$. This condition corresponds to the gradient of $P(\bm{\theta}_\mathrm{u})$ evaluated
at the closed-loop equilibrium, implying that $\overline{\bm{\theta}}_\mathrm{u}$
is an extremum of $P(\bm{\theta}_\mathrm{u})$. Moreover, this equilibrium point is
unique, since the Hessian of $P(\bm{\theta}_\mathrm{u})$, $\partial^{2} U(\bm{\theta}_\mathrm{a}^{*}, \overline{\bm{\theta}}_\mathrm{u})/
\partial \bm{\theta}_\mathrm{u}^{2}
+ \bm{K}_\mathrm{uu}$, is positive definite by hypothesis (\ref{eq_equation_12}).
\end{proof}

\vspace{0.1cm}



Different controllers can be implemented within the control law (\ref{fictitious_law}), which ensures stability and convergence to the desired shapes of the DLO (see \cite{pustina2025analysis} for a proof of stability and convergence conditions in the collocated form). Some of the controllers used in this work are:

\vspace{0.2cm}

{\bf PD + compensation:} 

\begin{equation}\label{eq:PD_controller_comp+}
 \bm{\nu} = \bm{K}_P(\bm{\theta}_\mathrm{a}^{*} - \bm{\theta}_\mathrm{a})  - \bm{K}_D\dot{\bm{\theta}}_\mathrm{a} + \bm{G}_\mathrm{a}(\bm{\theta}_\mathrm{a}^{*}, \overline{\bm{\theta}}_\mathrm{u}) + \bm{K}_\mathrm{aa}\bm{\theta}_\mathrm{a}^{*} + \bm{K}_\mathrm{au}\overline{\bm{\theta}}_\mathrm{u}. 
\end{equation}

{\bf PD + cancellation:} 

\begin{equation}\label{eq:PD_controller+}
 \bm{\nu} = \bm{K}_P(\bm{\theta}_\mathrm{a}^{*} - \bm{\theta}_\mathrm{a})  - \bm{K}_D\dot{\bm{\theta}}_\mathrm{a} + \bm{G}_\mathrm{a}(\bm{\theta}) + \bm{K}_\mathrm{aa}\bm{\theta}_\mathrm{a} + \bm{K}_\mathrm{au}\bm{\theta}_\mathrm{u}. 
\end{equation}

\vspace{0.2cm}

{\bf P-SatI-D:}

\vspace{-0.4cm}

\begin{equation}\label{eq:PIDs_controller+}
\begin{split}
 \bm{\nu} = \bm{K}_P(\bm{\theta}_\mathrm{a}^{*} - \bm{\theta}_\mathrm{a}) & - \bm{K}_D\dot{\bm{\theta}}_\mathrm{a} + \bm{K}_I \int_{0}^{t} \bm{s}(\bm{\theta}_\mathrm{a}^{*} - \bm{\theta}_\mathrm{a}) d\rho \\ & + \bm{G}_\mathrm{a}(\bm{\theta}) + \bm{K}_\mathrm{aa}\overline{\bm{\theta}}
\end{split}
\end{equation}

\noindent where (\(\bm{s}(\bm{y}) = \tanh{(\bm{y}})\)) and $\bm{K}_P$, $\bm{K}_D$ and $\bm{K}_I$ are positive definite matrix of the control law.

\vspace{0.2cm}

{\bf Partial Feedback linearization (PFL):}

\begin{equation}\label{eq:PFL}
\begin{split}
 \bm{\nu} = (\bm{M}_\mathrm{aa}-\bm{M}_\mathrm{au}\bm{M}_\mathrm{uu}^{-1}\bm{M}_\mathrm{ua}) \overline{\bm{\nu}} -\bm{M}_\mathrm{au}\bm{M}_\mathrm{uu}^{-1} \bm{T}_\mathrm{u} + \bm{T}_\mathrm{a}
\end{split}
\end{equation}

\noindent where 

\vspace{-0.8cm}

\begin{equation}\label{eq:XX}
\begin{split}
 \overline{\bm{\nu}} = \bm{K}_P(\bm{\theta}_\mathrm{a}^{*} - \bm{\theta}_\mathrm{a}) & - \bm{K}_D\dot{\bm{\theta}}_\mathrm{a} 
\end{split}
\end{equation}

\vspace{-0.3cm}

\begin{equation}
\label{eq:XX2}
\bm{T}(\bm{\theta}) = (\bm{C}(\bm{\theta},\dot{\bm{\theta}}) + \bm{D})\dot{\bm{\theta}} + \bm{K}\bm{\theta} +  \bm{G}(\bm{\theta})  \; \text{,}
\end{equation}


 Control law (\ref{eq:PFL})-(\ref{eq:XX2}) guarantees exponential stability, with a fast convergence rate of the actuated coordinates determined by the controller design. The proof of stability for the actuated coordinates follows \cite{de2002underactuated}, while the remaining analysis is the same as in Theorem~1. We will test this controller in our simulations; however, this technique is more challenging to implement experimentally, since Coriolis and inertia effects must be canceled in the collocated part of the system.

\subsection{High-Level Control to achieve a task goal}

Given the complete model of the DLO with external wrenches applied to it developed previously, we can now define our manipulation goal as a shape regulation task. A convenient way to describe this task is to define a task space $\mathbb{R}^m$ as the forward kinematics of the DLO evaluated at a single point, or at a set of points of interest. More generally, this can be represented by an output function $\bm{h} : \mathbb{R}^n \rightarrow \mathbb{R}^m $. The problem can be formulated as follows:  More precisely, we wish to define a policy that specifies the control action $\bm{u}$ in \eqref{eq::gendynamics} such that
\begin{equation}\label{eq:control_goal}
     \bm{h}(\bm{\theta}_\mathrm{a}, \bm{\theta}_\mathrm{u}) = \bm{x}^{*}.
\end{equation}
 This task goal can, for example, be the position and orientation of one or more points along the object structure, their relative position to an external target, or directly the object shape $(\bm{\theta}_\mathrm{a}, \bm{\theta}_\mathrm{u})$ at the equilibrium. The optimization problem is therefore:

\begin{equation}\label{eqn:optimization_problem}
\begin{split}
    \min_{(\bm{\theta}, \bm{u}) \in\mathbb{F}} &
    \quad \|\bm{x}^{*}-\bm{h}(\bm{\theta}_\mathrm{a},\bm{\theta}_\mathrm{u})\|_2\\
    \mathrm{s.t.} \ & \bm{G}_\mathrm{a} + \bm{K}_\mathrm{aa}\bm{\theta}_\mathrm{a} + \bm{K}_\mathrm{au}\bm{\theta}_\mathrm{u} = \bm{u}.\\ 
    & \bm{G}_\mathrm{u} + \bm{K}_\mathrm{ua}\bm{\theta}_\mathrm{a} + \bm{K}_\mathrm{uu}\bm{\theta}_\mathrm{u} = \bm{0}.\\
    & \mathbb{F} = \{\underline{a}_{i}\leq   \bm{\theta}_{\mathrm{a},i} \leq \overline{a}_{i}, \ \ \ \ \ 1 \leq i \leq n  \}.
\end{split}
\end{equation}

\noindent where constraints on the actuated coordinates can be incorporated into the optimization framework to prevent collisions with the environment, which is particularly beneficial for manipulation tasks in constrained settings.

\begin{figure*}[t]
    \centering
    \begin{subfigure}[t]{\textwidth}
        \centering
        \includegraphics[width=\linewidth]{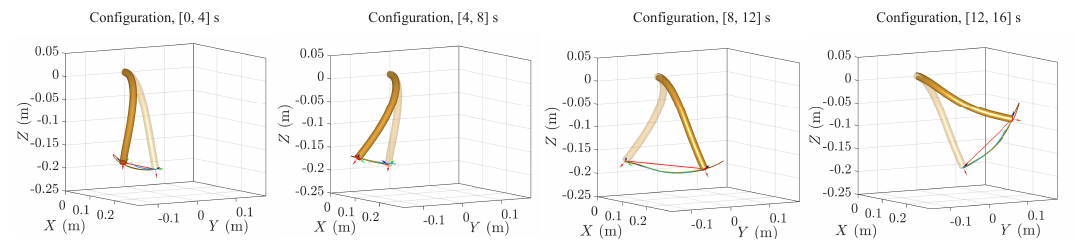}
        \caption{}
    \end{subfigure}
    \begin{subfigure}[t]{0.49\textwidth}
        \centering
        \includegraphics[width=\linewidth]{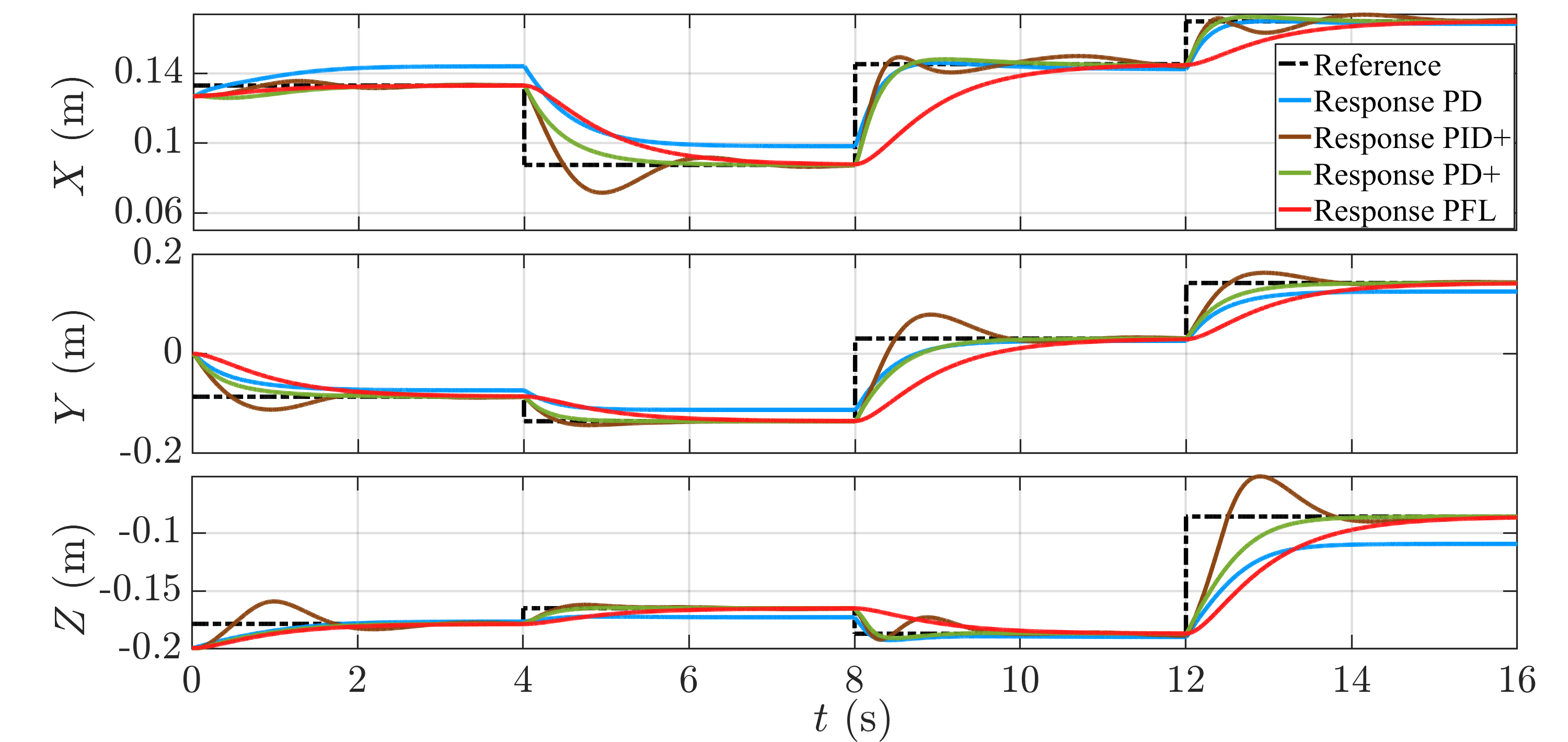}
        \caption{}
    \end{subfigure}
    \hfill
    \begin{subfigure}[t]{0.49\textwidth}
        \centering
        \includegraphics[width=\linewidth]{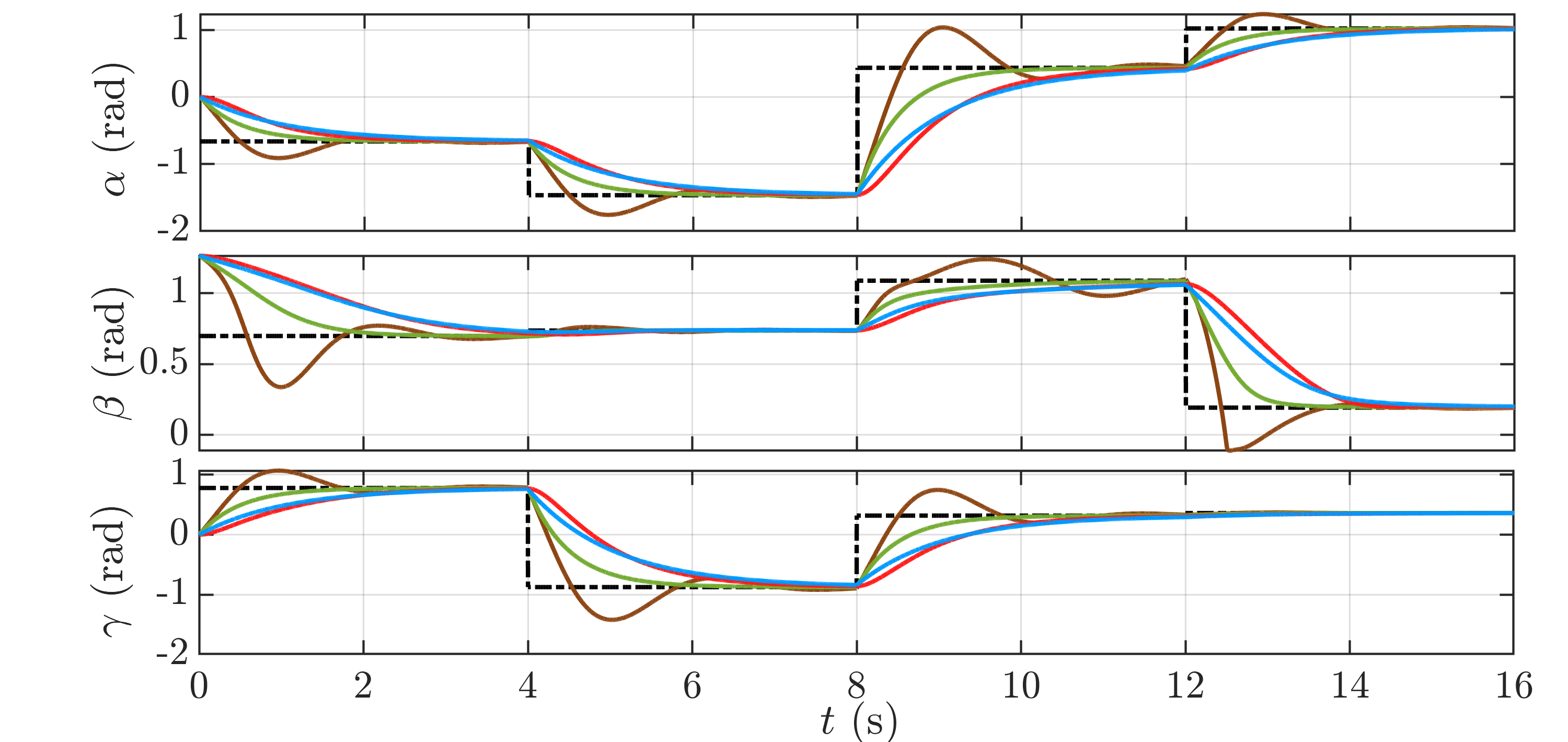}
        \caption{}
    \end{subfigure}
    \begin{subfigure}[t]{0.49\textwidth}
        \centering
        \includegraphics[width=\linewidth]{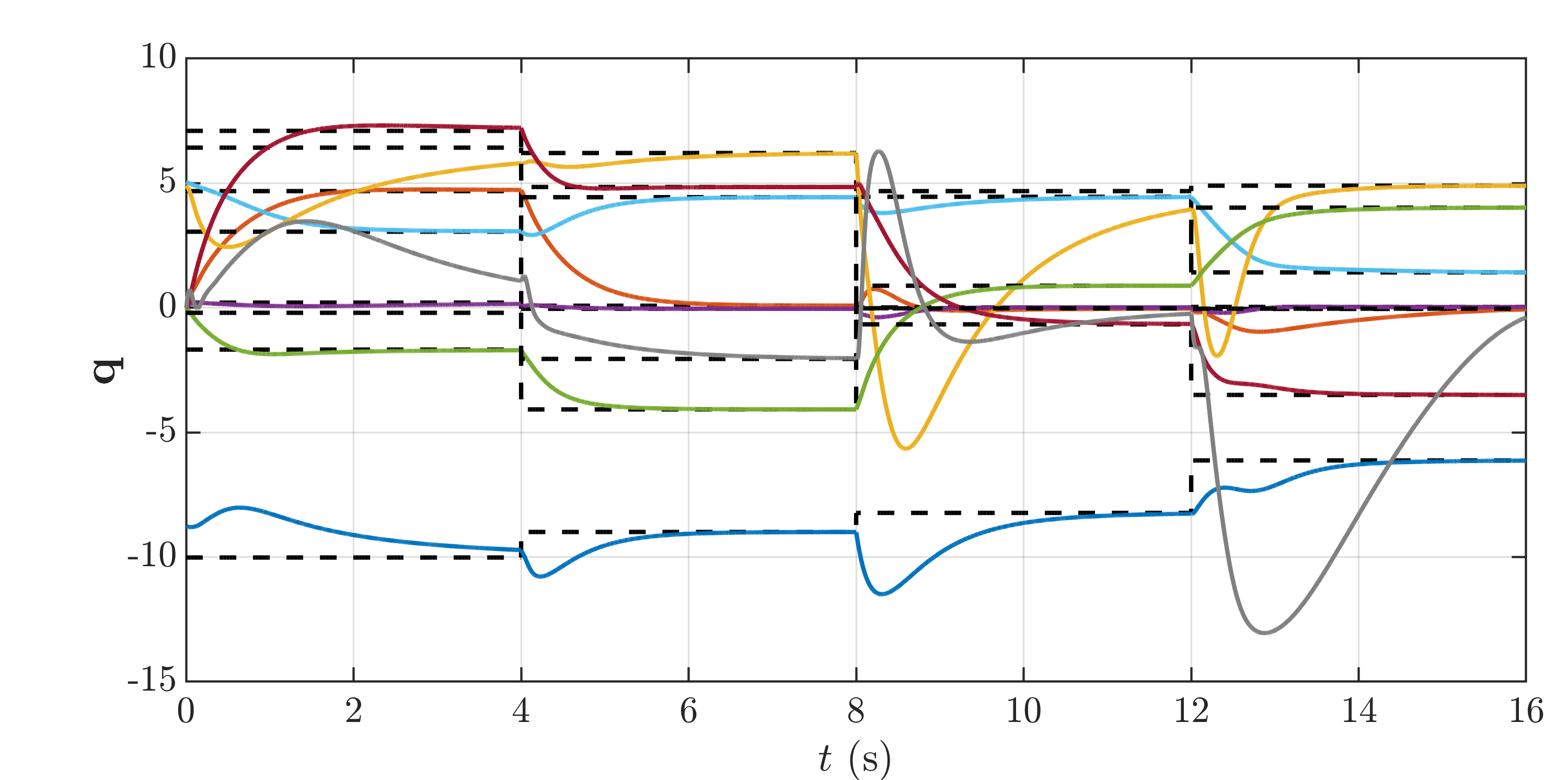}
        \caption{}
    \end{subfigure}
    \hfill
    \begin{subfigure}[t]{0.49\textwidth}
        \centering
        \includegraphics[width=\linewidth]{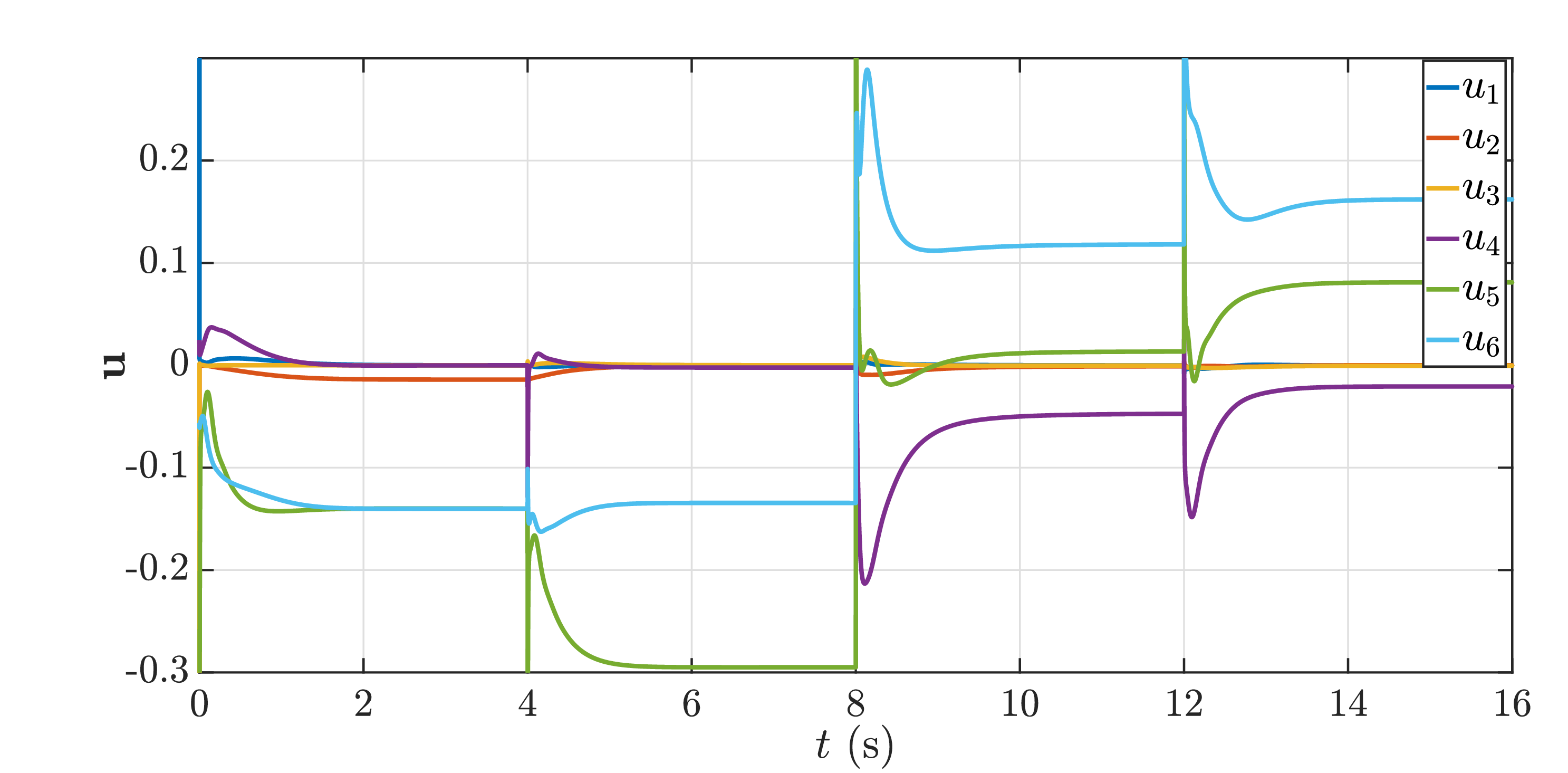}
        \caption{}
    \end{subfigure}
    \caption{\small
    Results of closed-loop control of the deformable object in simulation:
    (a) evolution of the object shape over time;
    (b)-(c) regulation of the actuation coordinates for PD controller, PID-salt controller, PD controller with compensation, and PFL;
    (c) generalized coordinates of the DLO using the PFL controller;
    (d) control actions using the PFL controller.
    }
    \label{fig:Sim_1}
\end{figure*}

\section{Simulation results}\label{sec:simulation_results}

To evaluate the effectiveness of the proposed dynamic model and control technique, we conducted numerical simulations in which several slender soft objects with different geometries were driven to achieve desired shapes under various scenarios, including cases where wrenches were applied at different points. In this section, we focus primarily on demonstrating the performance of the proposed low-level controller in regulating the shapes of DLOs, highlighting the robustness of the approach and illustrating the stability and convergence properties of the selected controllers, along with their differences and the implications of incorporating model-based components. Each system was modeled using one or multiple segments, depending on the complexity of the deformation induced by the applied wrenches. Equations (\ref{eq::int_kinem_pos})-(\ref{eq::JJdot}) can be computed recursively in both simulation and experiments by employing a quadrature approximation of the Magnus expansion to compute $\boldsymbol{g}_h$ and $\boldsymbol{\eta}_h$, as detailed in \cite{mathew2025reduced}.




\subsection{Performance Comparison of the Controllers}

In the first set of simulations, we consider a cylindrical beam with a length of $0.25,\mathrm{m}$, a radius of $3,\mathrm{cm}$, a density of $1000,\mathrm{kg/m^3}$, a Young’s modulus of $1,\mathrm{MPa}$, and a Poisson ratio of $0.5$. The system is clamped at its base, and a 6D wrench is applied at the endpoint ($n_{\mathrm{a}} = 6$). In this system, we aim to control the position and orientation of the endpoint, and we simulate the system using one element and $8$ DoF, employing a linear polynomial basis for torsion and a quadratic polynomial basis for the two bending directions ($y, z$). In addition, the base is clamped, and therefore the dimension of $\bm{q}_{b}$ is zero.

We apply the previous transformation to obtain the decoupled system, in which the actuated coordinates are the position and the Euler angles of the endpoint orientation. We perform shape regulation for four different shapes and use the controllers described in the previous section to compare their performance in simulation. The results of the experiments are shown in Fig.~\ref{fig:Sim_1}, where we plot the configuration of the object at the beginning and end of each regulation step, the desired actuated coordinates and their responses, the generalized coordinates defining the configuration, and the control inputs of the system.

Fig.~\ref{fig:Sim_2} presents the comparison among the four controllers. The results show that the conventional PD controllers exhibit a noticeable steady-state error, highlighting the importance of adding model-based components to the controllers. The PID controller eliminates the steady-state error, but introduces overshoot, which may be undesirable in certain situations. In contrast, both the PD controller with model-based terms and the partial feedback linearization achieve very good responses without steady-state error or overshoot. It is also worth noting that, in the partial feedback linearization case, the trajectory is a straight line because the dynamic effects of the actuated part are effectively canceled.  

\subsection{Evaluation of the approach under more complex conditions}

\begin{figure*}[t]
    \centering
    \begin{subfigure}[t]{0.98\textwidth}
        \centering
        \includegraphics[width=\linewidth]{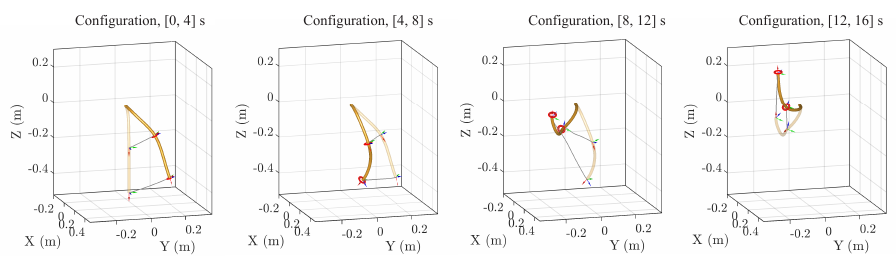}
        \caption{}
    \end{subfigure}
    \centering
    \begin{subfigure}[t]{0.98\textwidth}
        \centering
        \includegraphics[width=\linewidth]{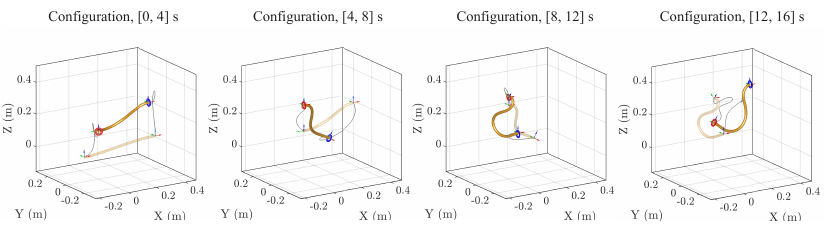}
        \caption{}
    \end{subfigure}
    \begin{subfigure}[t]{0.24\textwidth}
        \centering
        \includegraphics[width=\linewidth]{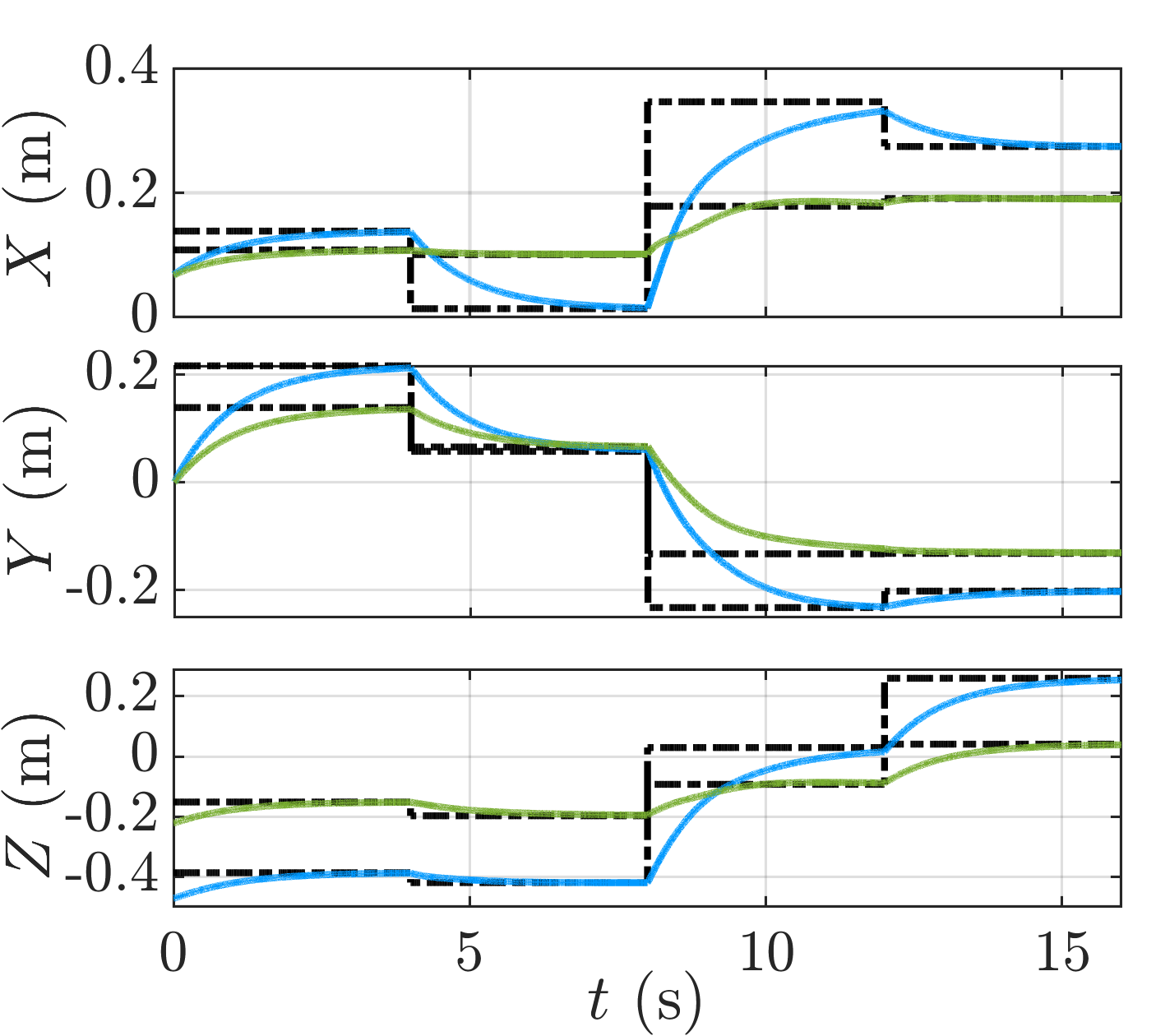}
        \caption{}
    \end{subfigure}
    \begin{subfigure}[t]{0.24\textwidth}
        \centering
        \includegraphics[width=\linewidth]{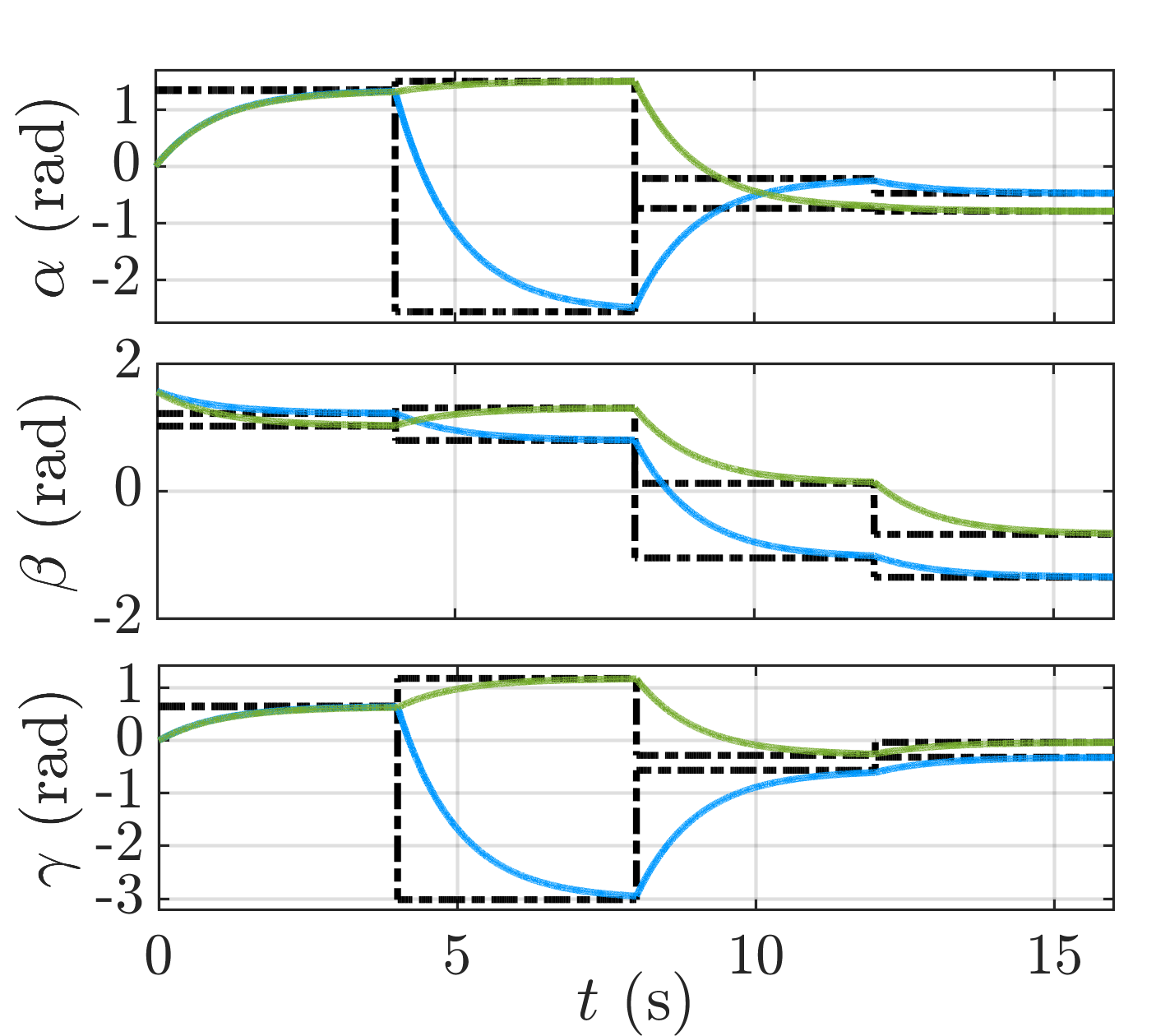}
        \caption{}
    \end{subfigure}
    \begin{subfigure}[t]{0.24\textwidth}
        \centering
        \includegraphics[width=\linewidth]{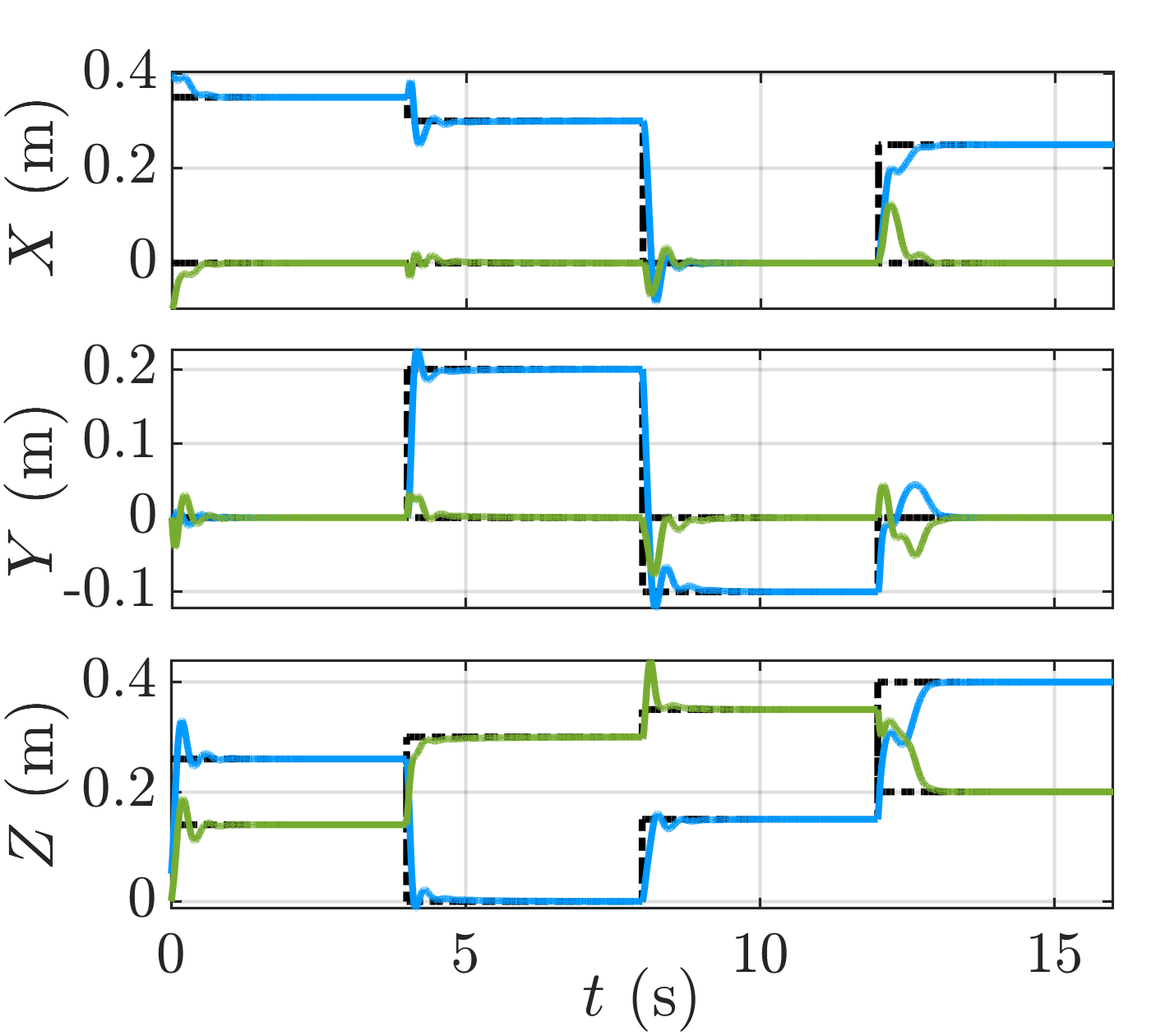}
        \caption{}
    \end{subfigure}
    \begin{subfigure}[t]{0.24\textwidth}
        \centering
        \includegraphics[width=\linewidth]{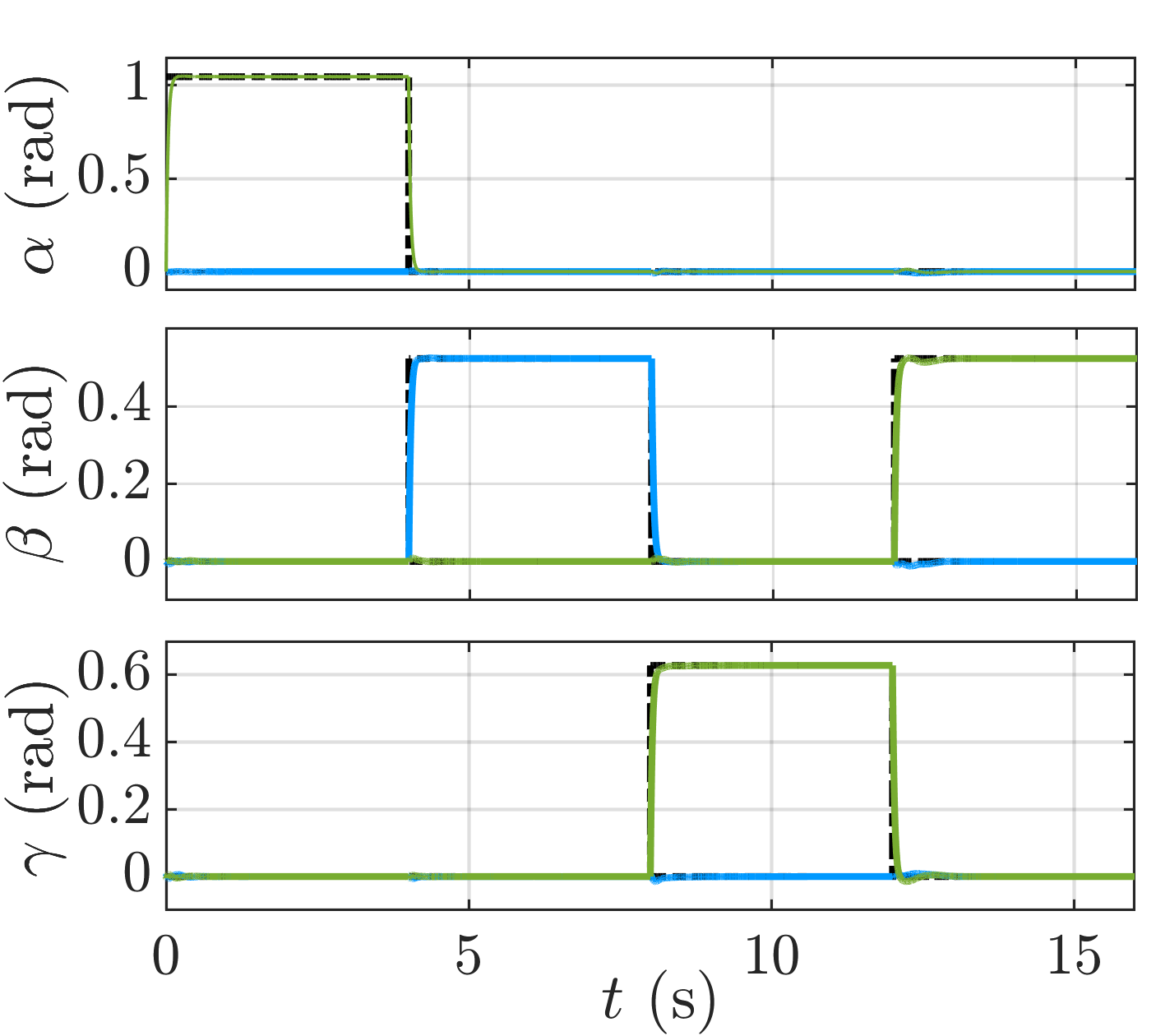}
        \caption{}
    \end{subfigure}
    \caption{\small
    Results of closed-loop control of the deformable object performed under more complex conditions: (a) shows the evolution of the object shape over time for the first case, (b) shows the evolution of the object shape over time for the second case, (c)-(d) present the regulation of the actuation coordinates for the first case: the base actuation coordinates are shown in blue, while the endpoint actuation coordinates are shown in green and (e)-(f) shows the regulation of the actuation coordinates for the second case.
    }
    \label{fig:Sim_2}
\end{figure*}

Two additional simulation scenarios are presented in this section. We consider a cylindrical beam with the same properties as before but twice the length ($0.5\,\mathrm{m}$).

The first scenario involves a beam clamped at its base, with 6D wrenches applied at the midpoint and at the endpoint ($n_a = 12$). The system actuated coordinates of the system corresponds to the positions and Euler angles at the two grasped points. We simultaneously regulate the pose at the midpoint and the endpoint of the DLO at four different target shapes. In this case, we simulate the system using two elements, since the wrench is applied at an intermediate point of the DLO and, as mentioned in the modeling remark, this choice improves the accuracy of the deformation representation. Each element is modeled with $6$ DoF, employing a quadratic polynomial basis for the two bending directions ($y$ and $z$). Again, the base is clamped; therefore the dimension of $\bm{q}_{b}$ is zero. The second scenario uses the same beam, but now it is floating, with 6D wrenches applied at both endpoints ($n_a = 12$) to simulate bimanual manipulation. The actuation and control objectives remain similar: we regulate the pose of the two endpoints using a set of four target shapes. For these two examples, we employ a PD controller with model-based terms. We simulate the system using one element with $6$ DoF and a floating base. The results of the simulations
are shown in Fig. \ref{fig:Sim_2}, where we plot the configuration of
the object at the beginning and end of each regulation step, the desired actuated coordinates and their responses. The simulations demonstrate the efficiency of the proposed technique and its ability to effectively control the position and orientation at the points where the wrenches are applied.

\section{Experimental Evaluation}\label{sec:experiments}

The methodology developed in this work computes the wrenches that the robots must apply at the points where they grasp an object in order to dynamically control the shape of the DLO and perform a specific manipulation task. 

\subsection{Implementation details}

To verify the effectiveness of our proposed method in the real world, we developed a robotic experimental setup composed of two Franka Panda robots mounted on a stationary table, real electrical cables as deformable objects, and a motion capture system. The camera system tracks markers placed along the object to estimate its shape, providing real-time state feedback for our closed-loop control strategy. The actuated coordinates are provided by the robots through their forward kinematics. The marker positions are used for shape reconstruction and for obtaining the unactuated coordinates. The actuated coordinates provided by the robots, and the wrenches computed by the control law are all defined with respect to the global frame, \{W\}; therefore, the results are reported in this frame. However, for implementation purposes, additional reference frames are introduced (see Fig. \ref{fig:coordinates_frames}). Each robot provides the position and orientation of its grasping point (\{LEE\} and \{REE\}) with respect to its own base frame (\{LB\} and \{RB\}). To be used by the controllers as actuated coordinates (see Section~\ref{Sec:Collocated_Control_Integrable}), these quantities must be expressed in the global frame, \{W\}. Moreover, the controllers compute the desired wrenches to be applied at the grasping points in (\{W\}). Since the robots execute wrenches with respect to their base frames (\{LB\} and \{RB\}), these values must be expressed in the corresponding robot base frames and then mapped to joint torques using each robot's Jacobian.
ROS manages communication among the computers controlling each robot and acquiring robot state information, the motion capture system providing marker positions, and an additional laptop running the control law and the shape estimation algorithm implemented in C++.

\begin{figure}
\centerline{\includegraphics[width=0.45\textwidth]{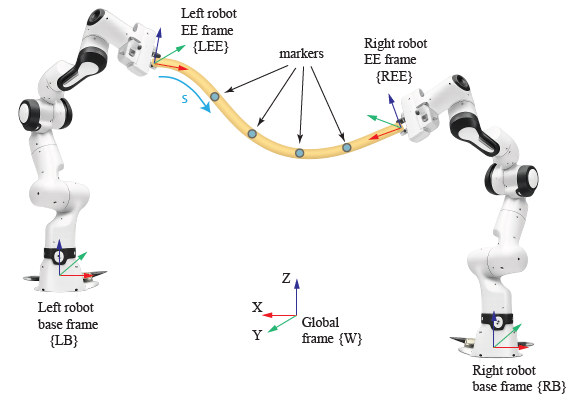}}
\caption{\small Experimental setup and coordinate frame definitions. Apart from the global frame, the base frame of each robot and the end-effector (EE) frame of each robot are also defined.}
\label{fig:coordinates_frames}
\end{figure}


\subsection{Parameter identification}\label{Sec:Parameter_identification}

Material properties for three different deformable objects were identified experimentally. To this end, the identification process was divided into several steps. First, the link properties (equivalent density and Young's modulus) were estimated by changing the orientation of the object to excite the effect of gravity on the system. The optimal parameters were then obtained by minimizing the difference between the measured marker positions and those predicted by the model at equilibrium, defined by $\bm{K}\mathbf{q} + \bm{G}(\mathbf{q}) = \bm{0}.$
The proposed identification method relies on measurements from four markers located along the rod. A similar approach was presented in \cite{feliu2024dynamic} to identify the parameters of the model. Finally, the elastic damping coefficient was estimated by exciting the object with an initial force applied at the tip, then releasing it, measuring the time response, and fitting the model to extract the damping parameter. The measured and identified parameters of the objects are shown in Table \ref{tab:para}.

\begin{table}[H]
\centering
\caption{\small Parameters of the DLO.}
\label{tab:para}
\footnotesize
\begin{tabular}{c c c c c c}
\hline
\textbf{ID} & $L (\mathrm{m})$ & $r (\mathrm{m})$ & $\rho (\mathrm{kg/m}^3)$ & $E (\mathrm{MPa})$ & $\nu (\mathrm{MPa\cdot s})$ \\
\hline
DLO1 & 0.72 & 7 $\cdot 10^{-3}$ & 3162.6 & 25.3 & 2  \\
DLO2 & 0.76 & 9 $\cdot 10^{-3}$ & 3546.6 & 22.7 & 2  \\
DLO3 & 0.89 & 8.5 $\cdot 10^{-3}$ & 4514.6 & 40.5 & 2  \\
\hline
\end{tabular}
\end{table}

\subsection{Estimation of the configuration using efficient inverse kinematics}

In this work, we propose estimating the generalized coordinates $\boldsymbol{q}^\mathrm{e}$ that define the configuration of the DLO, so that they can be used in feedback control to compensate for elasticity and gravity effects and improve manipulation performance. To do so, we use the markers' positions $\boldsymbol{p}^{\mathrm{m}}$ and minimize the discrepancy with the estimated positions $\boldsymbol{p}^\mathrm{e}$. Given the estimated generalized coordinates vector $\boldsymbol{q}^\mathrm{e}$, the estimated transformation matrix of the $k^{th}$ marker can be expressed as follows:

\begin{equation}\label{eq_est1}
    \boldsymbol{g}^\mathrm{e}_{k}(\boldsymbol{q}^\mathrm{e}) = \boldsymbol{g}(s_{k},\boldsymbol{q}^\mathrm{e}) \boldsymbol{g}_{k},
\end{equation}

\noindent where $s_{k}$ is the distance along the manipulator's centerline until the cross-section at which the $k^{th}$ marker is attached, $\boldsymbol{g}(s_{k},\boldsymbol{q}^\mathrm{e})$ is the homogeneous transformation matrix of the centerline of the object at $s_{k}$ and $\boldsymbol{g}_{k}$ is the known transformation between the centerline at $s_{k}$ and the marker position ($\boldsymbol{R}_{k} = \boldsymbol{I}$ and $\boldsymbol{r}_{k} = [0\;y_{k}\;z_k]^T$). The estimated position of the markers $\boldsymbol{p}_{k}^\mathrm{e}(\boldsymbol{q}^\mathrm{e})$, can be obtained directly from $\boldsymbol{g}_{k}^\mathrm{e} (\boldsymbol{q}^\mathrm{e})$. We can define the position error in the markers as:

\begin{equation} 
\boldsymbol{e}_{k} (\boldsymbol{q}^\mathrm{e})={\boldsymbol{p}^\mathrm{e}_{k} (\boldsymbol{q}^\mathrm{e})-\boldsymbol{p}^{\mathrm{m}}_{k}}
\end{equation}

\noindent where $\boldsymbol{e}_{k} \in \mathbb{R}^3$. To improve the accuracy of our algorithm we can introduce the position and orientation of the points where the robots grasp the object due to this information is available because of the kinematics of the robot. This error can be defined using the logarithmic map in SE(3) between the measured homogeneous transformation of the grasping points, $\boldsymbol{g}^{\mathrm{m}}_{k}$, and the estimated one as $\boldsymbol{g}^\mathrm{e}_{k}(\boldsymbol{q}^\mathrm{e})$

\begin{equation}\label{eq_est2}
   \boldsymbol{e}_{k} (\boldsymbol{q}^\mathrm{e})=\mathrm{log}((\boldsymbol{g}^\mathrm{e}_{k}(\boldsymbol{q}^\mathrm{e}))^{-1}  \boldsymbol{g}^{\mathrm{m}}_{k})
\end{equation}

\noindent in this case $\boldsymbol{e}_{k} \in \mathbb{R}^6$.
Then, we define the following minimization problem to solve including both position errors of the markers and position and orientation error of the grasping point of the robots the inverse kinematics of the system:

\begin{figure*}[h!]
    \centering
    \begin{subfigure}[t]{0.98\textwidth}
        \centering
        \includegraphics[width=\linewidth]{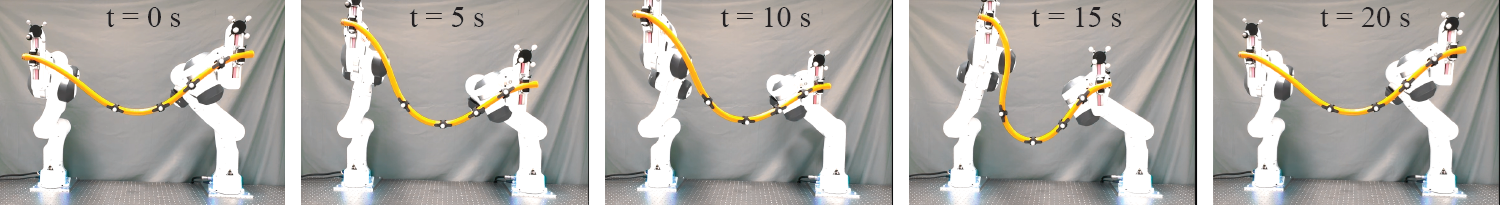}
        \caption{}
    \end{subfigure}
    \centering
    \begin{subfigure}[t]{0.98\textwidth}
        \centering
        \includegraphics[width=\linewidth]{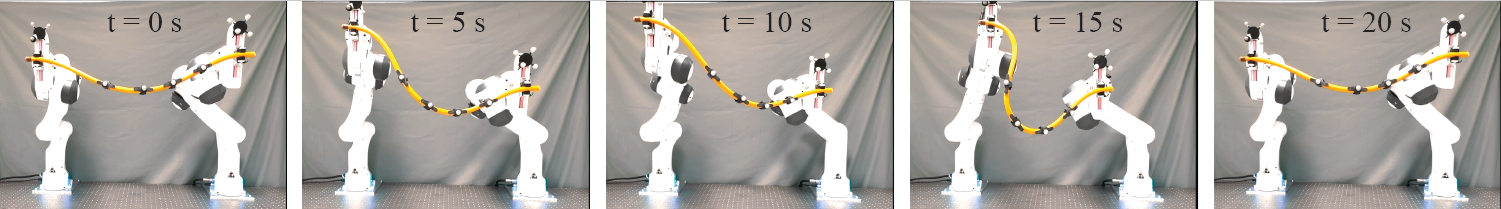}
        \caption{}
    \end{subfigure}
    \begin{subfigure}[t]{0.98\textwidth}
        \centering
        \includegraphics[width=\linewidth]{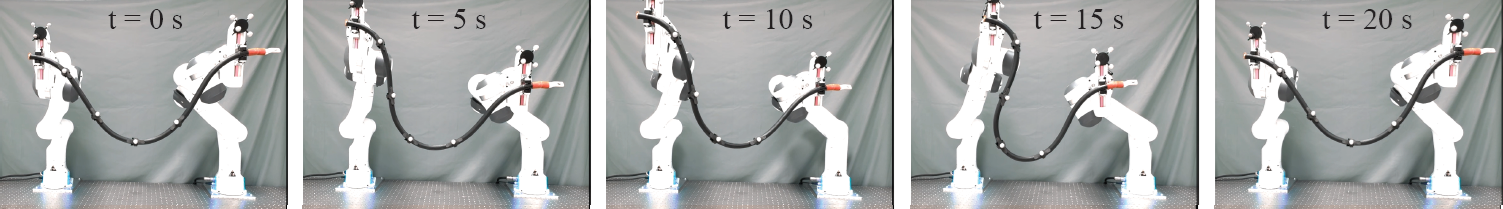}
        \caption{}
    \end{subfigure}
    \begin{subfigure}[t]{0.49\textwidth}
        \centering
        \includegraphics[width=\linewidth]{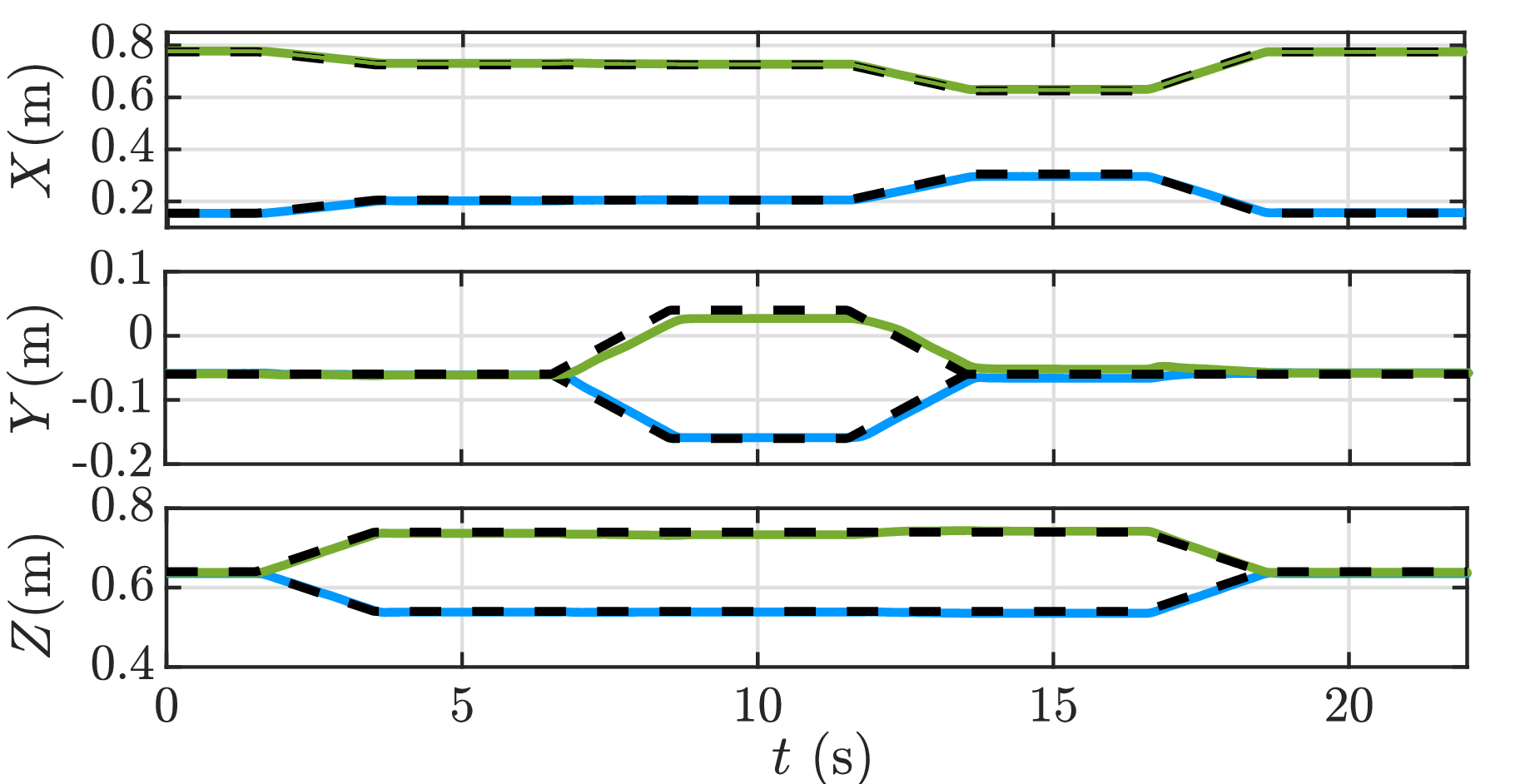}
        \caption{}
    \end{subfigure}
    \begin{subfigure}[t]{0.49\textwidth}
        \centering
        \includegraphics[width=\linewidth]{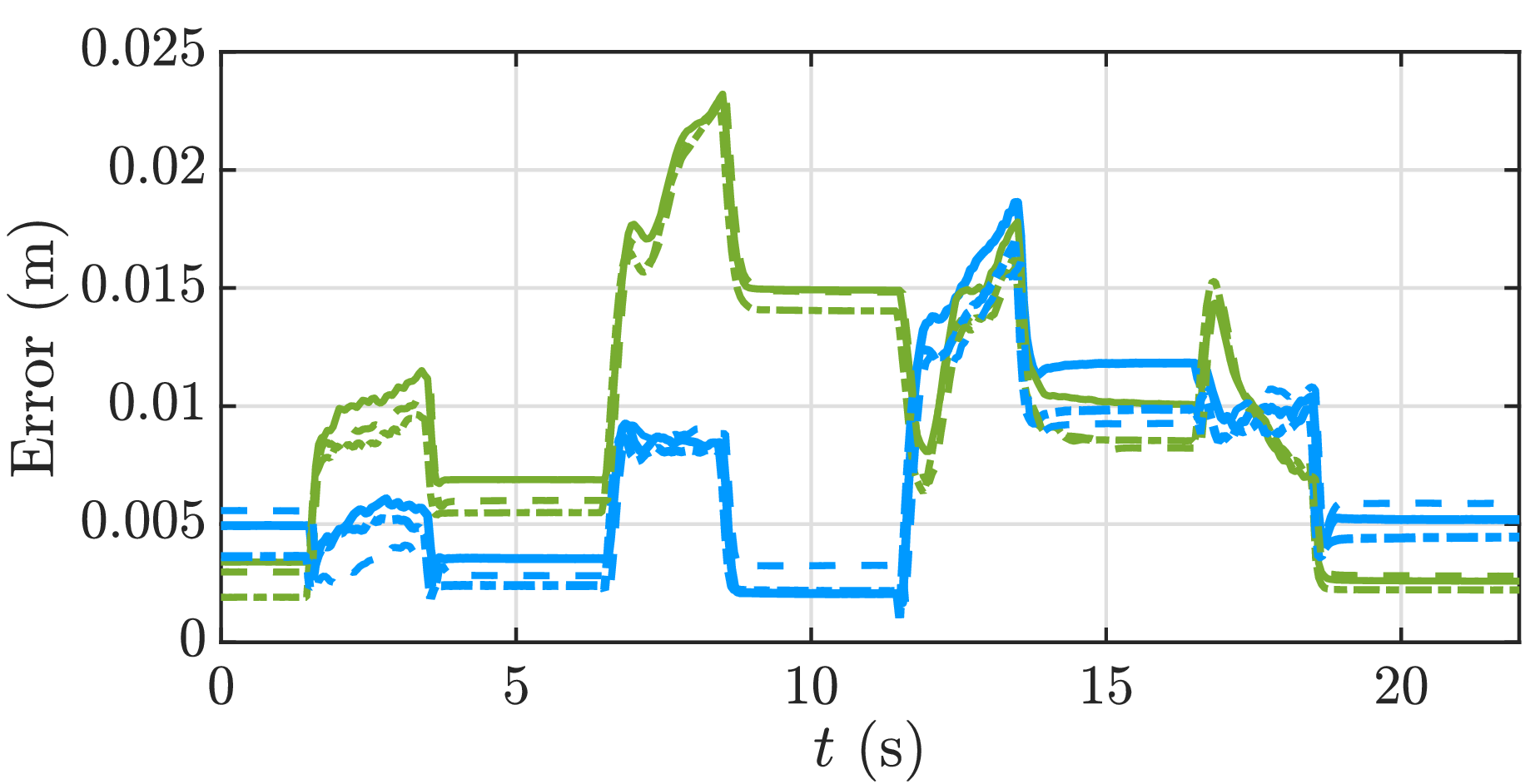}
        \caption{}
    \end{subfigure}
    \caption{\small
   Results of the experiment using multiple DLOs: (a)–(c) show the evolution of the object shape over time for DLO1–DLO3, respectively; (d) shows the evolution of the 3D positions of both end-points with respect to the desired targets (Blue line: right robot; green line: left robot); (e) shows the error in the grasping points.}
    \label{fig:Exp5}
\end{figure*}

\begin{figure*}[h!]
\centerline{\includegraphics[width=0.99\textwidth]{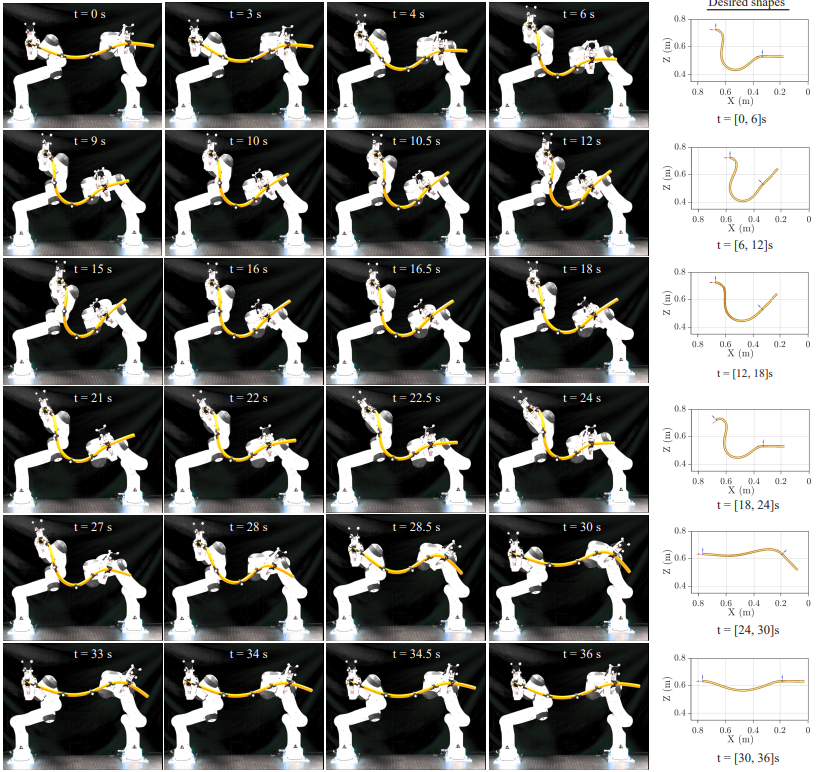}}
\caption{\small Results of the second experiment. Frame sequences of the robots’ motion together with the desired shape in the right part.}
\label{fig_Exp3}
\end{figure*}

\begin{figure*}[b!]
    \centering
    \begin{subfigure}[t]{0.345\textwidth}
        \centering
        \includegraphics[width=\linewidth]{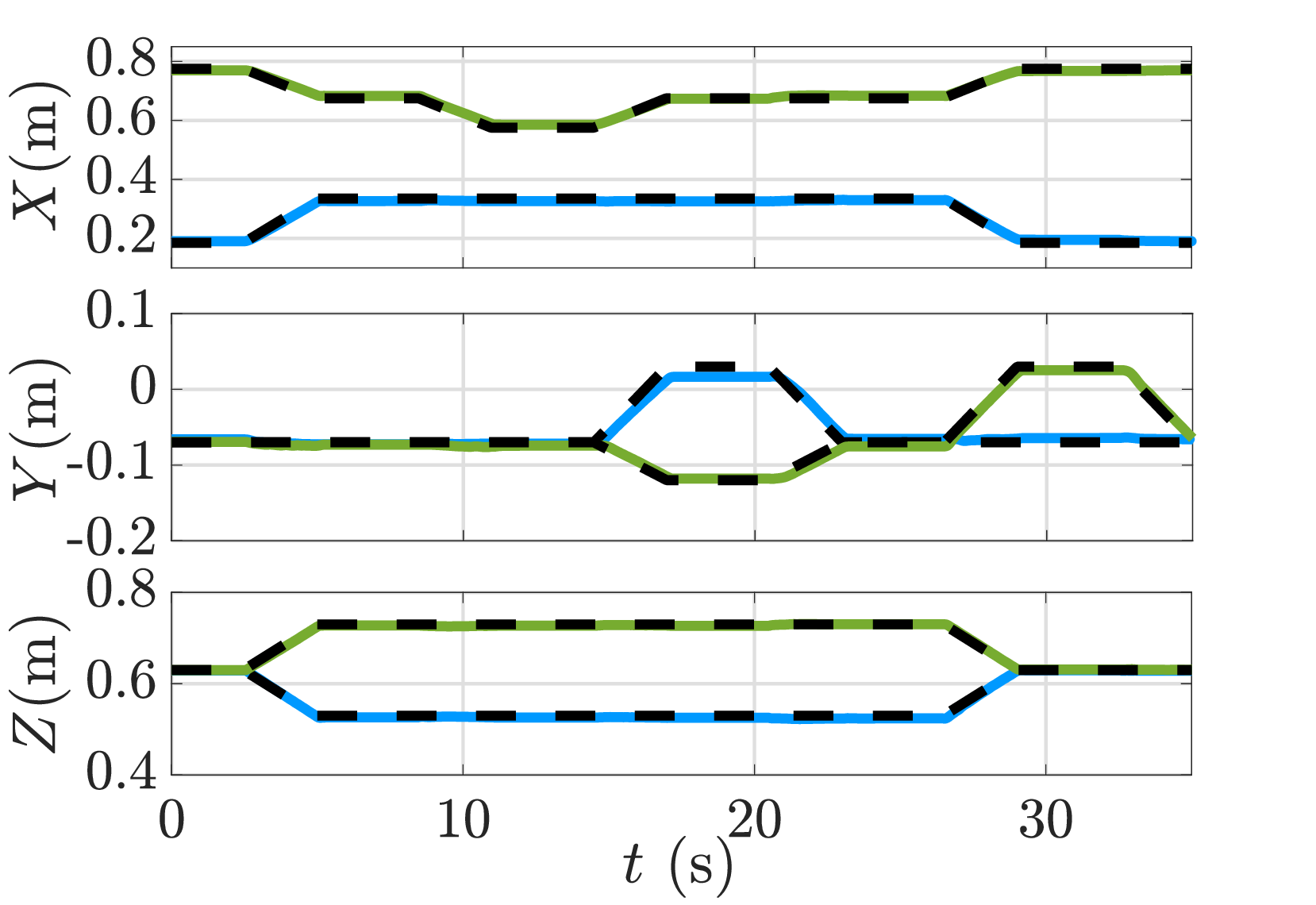}
        \caption{}
    \end{subfigure}
    \hspace{-0.5cm}
    \begin{subfigure}[t]{0.345\textwidth}
        \centering
        \includegraphics[width=\linewidth]{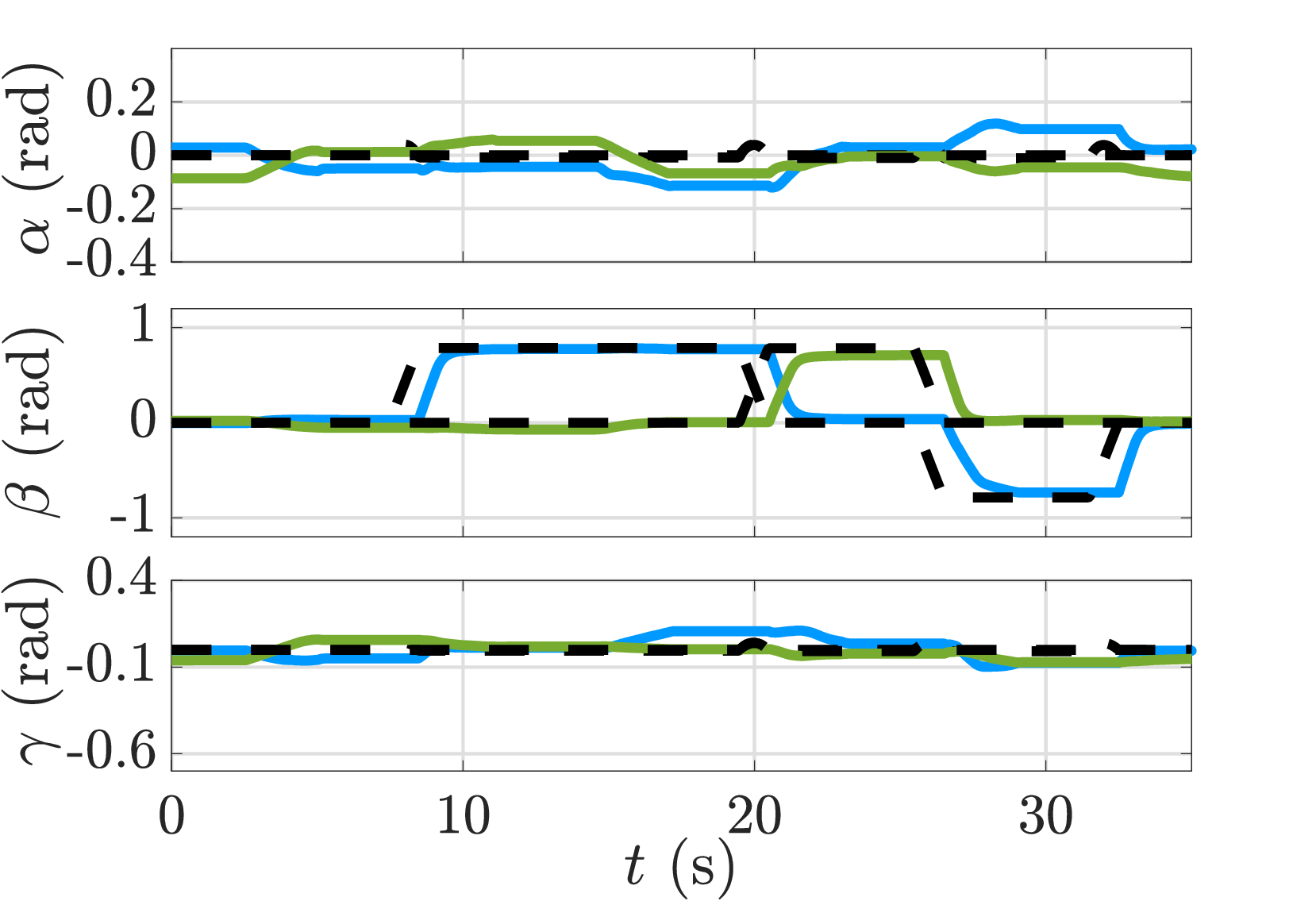}
        \caption{}
    \end{subfigure}
    \hspace{-0.7cm}
    \begin{subfigure}[t]{0.35\textwidth}
        \centering
        \includegraphics[width=\linewidth]{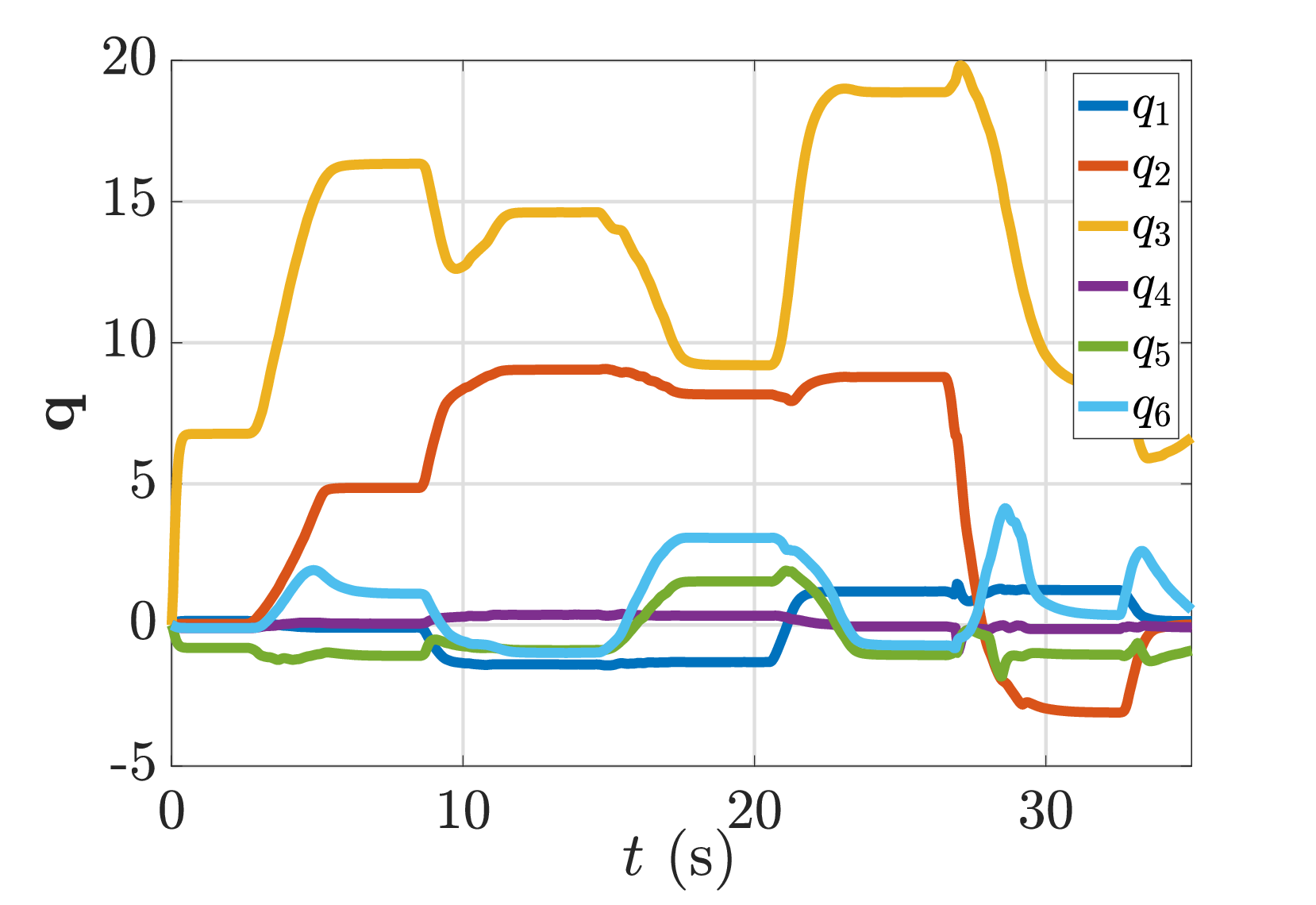}
        \caption{}
    \end{subfigure}
    \caption{\small
    \small Results of the second experiment. (a)-(b) Actuated coordinates, $\bm{\theta}_{a}$, pose of the fixed locations where the robots grasp the object (Blue line: right robot; green line: left robot.). (c) Estimation of the object configuration using the proposed estimation algorithm.
    }
    \label{fig_Exp3_2}
\end{figure*}

\begin{equation}\label{eq::ShapeEstOpt}
\begin{aligned}
\min_{\boldsymbol{q}^\mathrm{e}} \quad & \lVert \boldsymbol{e}(\boldsymbol{q}^\mathrm{e}) \lVert_{2}^{2} = \boldsymbol{f}(\boldsymbol{q}^\mathrm{e})\\
\end{aligned}
\end{equation}

\noindent where $\boldsymbol{e} = [\boldsymbol{e}_{1}^{T}, \boldsymbol{e}_{2}^{T},...,\boldsymbol{e}_{n_k}^{T}]^{T}$. We propose to solve the nonlinear least-squares problem (\ref{eq::ShapeEstOpt}) with gradient descent for implementation efficiency, both computationally and in engineering effort. This significantly reduces both computational complexity and memory usage, which is particularly relevant when the inverse kinematics must be solved repeatedly in a feedback control loop.

By the chain rule, the gradient of $\boldsymbol{f}(\boldsymbol{q}^\mathrm{e})$ is

\begin{equation}
\nabla \boldsymbol{f}(\boldsymbol{q}^\mathrm{e}) \;=\; 2 \left(\frac{\partial \boldsymbol{e}(\boldsymbol{q}^\mathrm{e})}{\partial \boldsymbol{q}^\mathrm{e}} \right) ^T \bm{e}(\boldsymbol{q}^\mathrm{e}).
\label{eq:grad}
\end{equation}

Therefore, a gradient-descent update reads

\begin{equation}
\boldsymbol{q}^\mathrm{e}_{(r+1)} \;=\; \boldsymbol{q}^\mathrm{e}_{(r)} \;-\; \alpha \,\nabla \boldsymbol{f}\!\left(\boldsymbol{q}^\mathrm{e}_{(r)}\right)
\;
\label{eq:gd_update}
\end{equation}
where $\alpha>0$ is the learning rate.

Thanks to our proposed model, we can have the analytic expression of the gradient of our minimization problem which is available from the DLO forward kinematics model, the Jacobian can be assembled by stacking the marker blocks

\begin{equation}
\frac{\partial \boldsymbol{e}(\boldsymbol{q}^\mathrm{e})}{\partial \boldsymbol{q}^\mathrm{e}} =
\begin{bmatrix}
\frac{\partial \bm{e}_1(\boldsymbol{q}^\mathrm{e})}{\partial \boldsymbol{q}^\mathrm{e}}\\[2pt]
\vdots\\[2pt]
\frac{\partial \bm{e}_{n_m}(\boldsymbol{q}^\mathrm{e})}{\partial \boldsymbol{q}^\mathrm{e}}
\end{bmatrix}
=
\begin{bmatrix}
\bm{J}^{s}_{1}\\[2pt]
\vdots\\[2pt]
\bm{J}^{s}_{n_m}
\end{bmatrix}.
\end{equation}

The iterations stop when one of the following conditions is met:

\begin{equation}
\Bigl\|\boldsymbol{f}\!\left(\boldsymbol{q}^\mathrm{e}_{(r+1)}\right)-\boldsymbol{f}\!\left(\boldsymbol{q}^\mathfrak{e}_{(r)}\right)\Bigr\|_2 \le \delta,\qquad
r \ge r_{\max}.
\end{equation}

In our experiments, we validate the real-time capabilities of our
shape estimation approach obtaining a frequency rate higher than 150 $Hz$ for our system using 6 $DoF$ showing high efficiency.

\subsection{Experimental results}

\begin{figure*}[h!]
    \centering
    \begin{subfigure}[t]{0.33\textwidth}
        \centering
        \includegraphics[width=\linewidth]{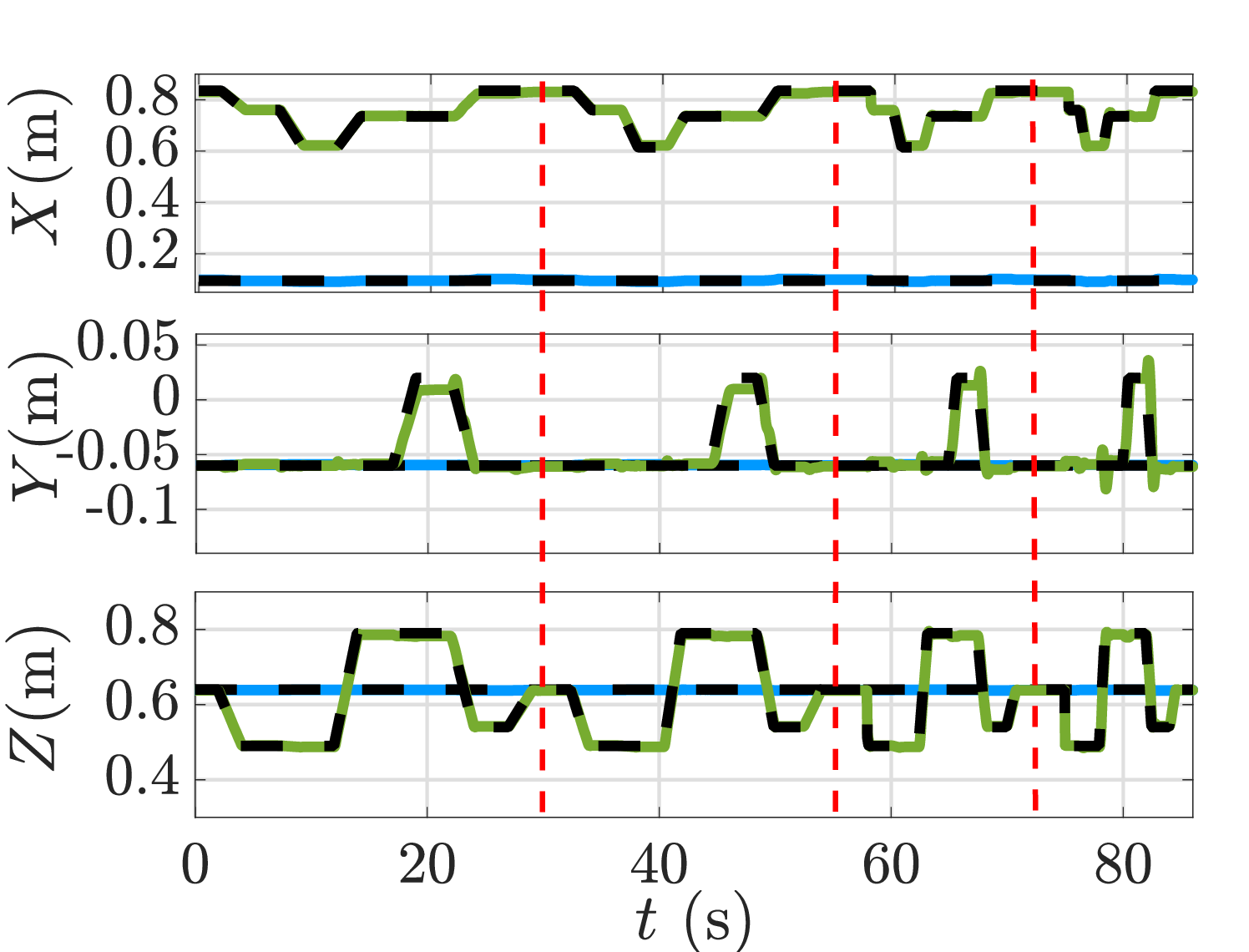}
        \caption{}
    \end{subfigure}
    \hspace{-0.1cm}
    \begin{subfigure}[t]{0.33\textwidth}
        \centering
        \includegraphics[width=\linewidth]{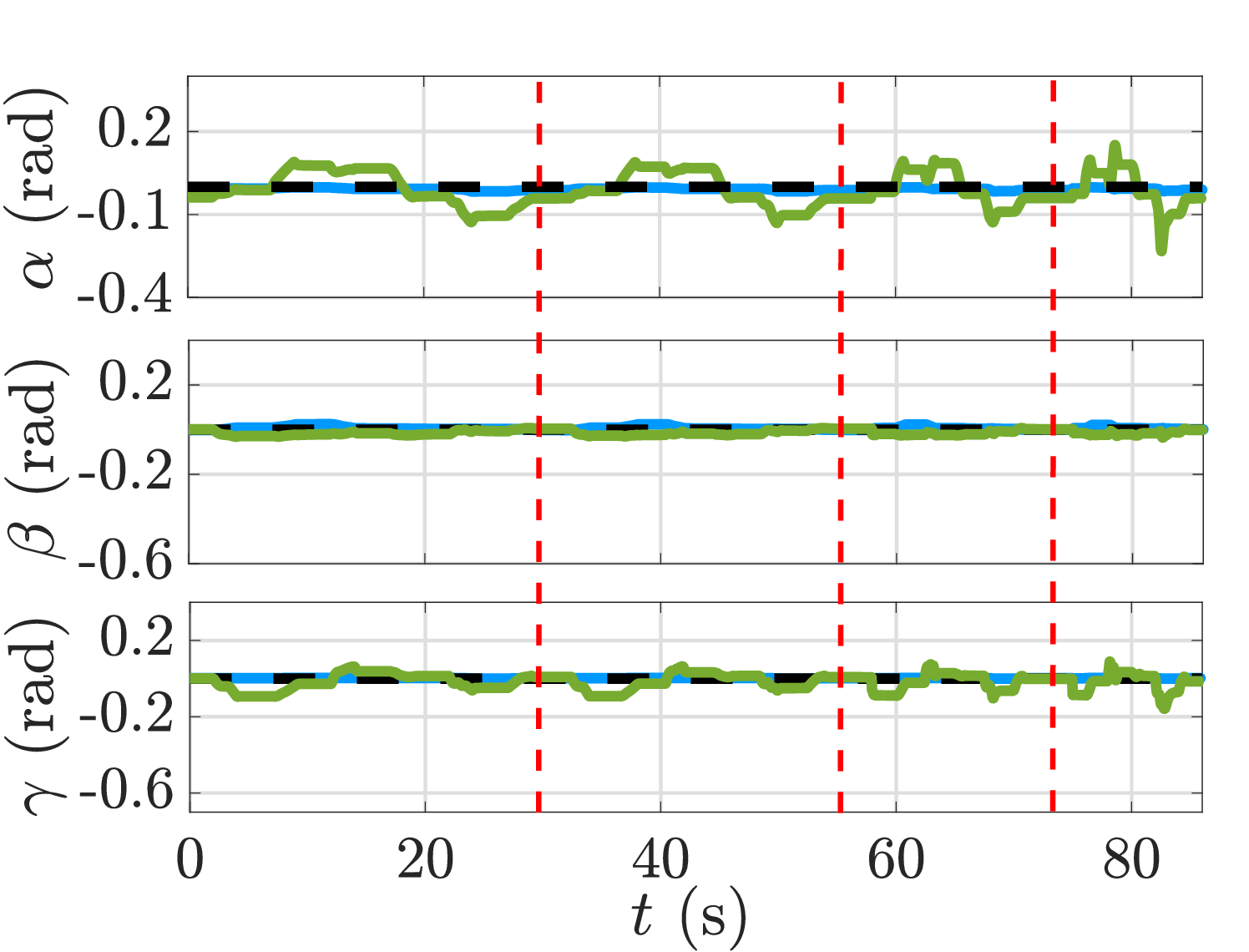}
        \caption{}
    \end{subfigure}
    \hspace{-0.2cm}
    \begin{subfigure}[t]{0.33\textwidth}
        \centering
        \includegraphics[width=\linewidth]{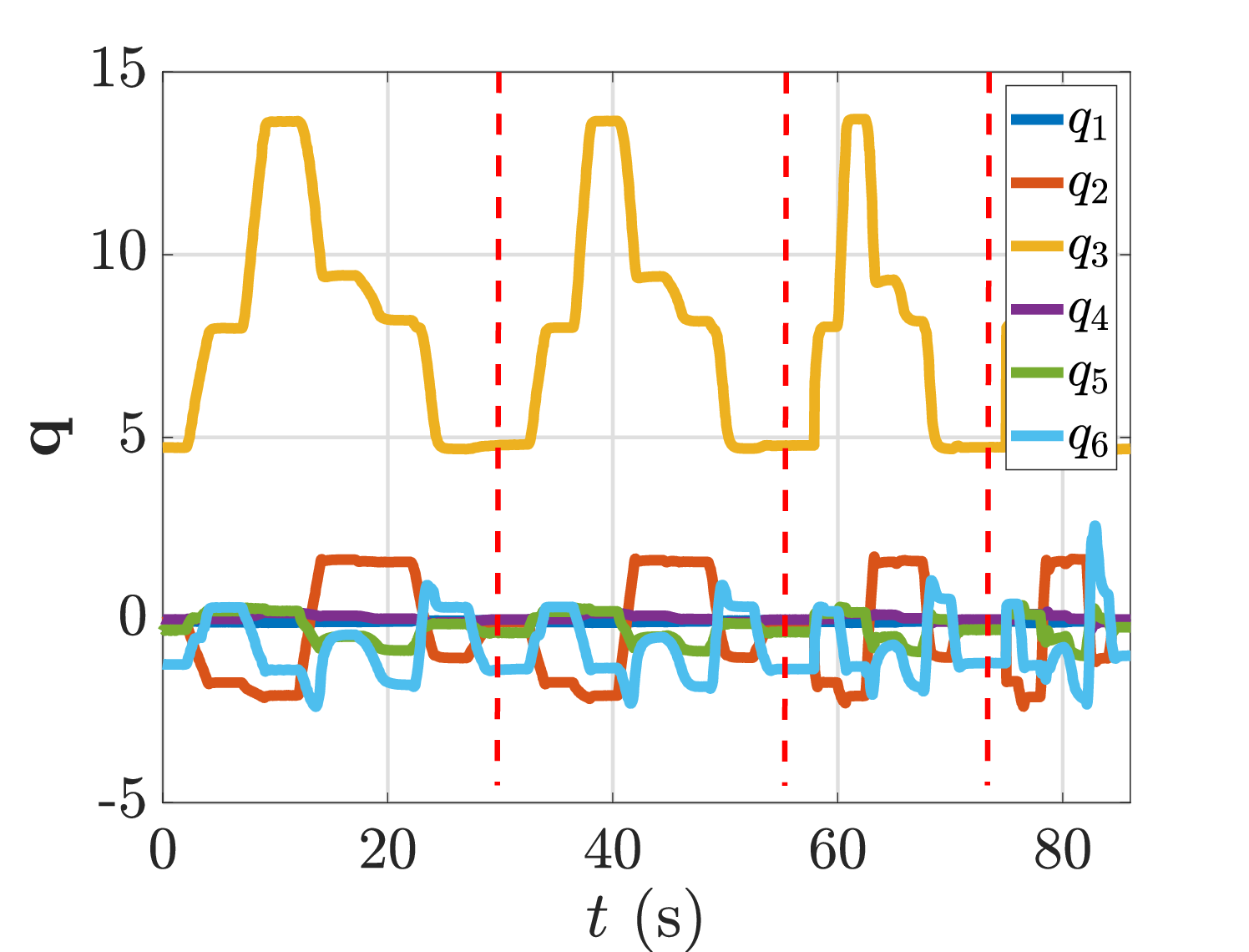}
        \caption{}
    \end{subfigure}
    \caption{\small
    \small Results of the third experiment.
(a)-(b) Actuated coordinates, $\bm{\theta}_{a}$, of the system, corresponding to the 3D position and Euler angle components of the object endpoint, while the base remains stationary.
c) Estimation of the object configuration using the proposed estimation algorithm. The system controls the endpoint position at four different target points, and the trajectory is repeated four times with progressively increasing speed. The time interval corresponding to each repetition of the trajectory is indicated by dashed lines.
    }
    \label{fig_Exp4}
\end{figure*}

In this section, we test our proposed approach in real-world scenarios across different cases, highlighting its effectiveness and comparing it with existing methods. The control framework is experimentally validated with a focus on controlling the object configuration to achieve a practical task specification. Due to the underactuated nature of DLOs, not all configurations are attainable, as they must satisfy the static equilibrium condition. Therefore, we propose solving the kinematic inversion of our model for a given desired task coordinate that satisfies the static equilibrium condition through the optimization process described in (\ref{eqn:optimization_problem}). Subsequently, we implement some of the previously introduced low-level controllers to dynamically and efficiently regulate the object’s shape in order to accomplish the desired task. In the experiments, we select polynomials with quadratic curvature in both the $y$ and $z$ directions ($6$ DoF), which correspond to the strain basis selected for the identification process and are used to evaluate the update frequency of the estimation algorithm.

\textbf{Position control of both endpoints for different DLOs:} A first set of experiments was conducted to evaluate the precision of actuated coordinate control for the  three different DLOs of Table \ref{tab:para} using a PD controller with compensation (\ref{eq:PD_controller_comp+}). The goal of this experiment is to assess the performance of the low-level controller independently of the specific properties of each DLO, thereby demonstrating the generalizability of the proposed approach across different objects. In this setup, the 3D positions of both endpoints are controlled while maintaining a constant orientation. The system inputs consist of the 3D wrenches applied at both endpoints ($n_a = 12$). Fig.~\ref{fig:Exp5} presents the results of these experiments, showing comparable accuracy across all tested DLOs. This consistent performance highlights the effectiveness of the model-based compensation term in achieving accurate regulation despite variations in object properties.

\textbf{Position and orientation control of an intermediate point and one endpoint:} A second experiment was conducted to control different shapes using the proposed approach, employing the PD controller with cancellation (\ref{eq:PD_controller+}) and the DLO2 of Table \ref{tab:para}. This experiment is intended to demonstrate the performance of the low-level controller, in which forces and torques must be applied simultaneously at different points. This experiment highlights the generality of the proposed approach by applying wrenches at different points: one robot grasps the object at one end-point, while the other robot grasps it at an intermediate point ($s = 0.6$). The desired shapes to be regulated are randomly selected by solving the static equilibrium of the system and extracting the corresponding actuated coordinates at equilibrium, which are then used as references for the low-level controller. The results of this experiment are depicted in Fig. \ref{fig_Exp3} and Fig. \ref{fig_Exp3_2}. The system inputs are again the 3D wrenches applied at both endpoints ($n_a = 12$).
We focus here on different shapes that are achieved by controlling the actuated coordinates, with particular emphasis on simultaneously controlling both position and angular coordinates of the two endpoints. As illustrated in the figure, the actuated coordinates are accurately regulated, and the desired shapes are achieved with high precision, demonstrating both the effectiveness of the low-level controller and the fidelity of the proposed model. The results also report average and maximum position errors in steady state of $0.5$ and $1$ cm, respectively, while the average and maximum errors in the orientation components are $0.04$ and $0.1$ rad, respectively. The figure also shows the estimated generalized coordinates that allow reconstruction of the object configuration, which is required to implement the controller (\ref{eq:PD_controller+}).
In this experiment, without loss of generality, the markers are positioned on the first part of the object. For the remaining part of the DLO, we assume zero deformation, as the deformation is observed to be negligible. This assumption simplifies the shape estimation problem by reducing the configuration of the DLO to six generalized coordinates.


\begin{figure*}[b!]
    \centering
    \begin{subfigure}[t]{0.49\textwidth}
        \centering
        \includegraphics[width=1\linewidth]{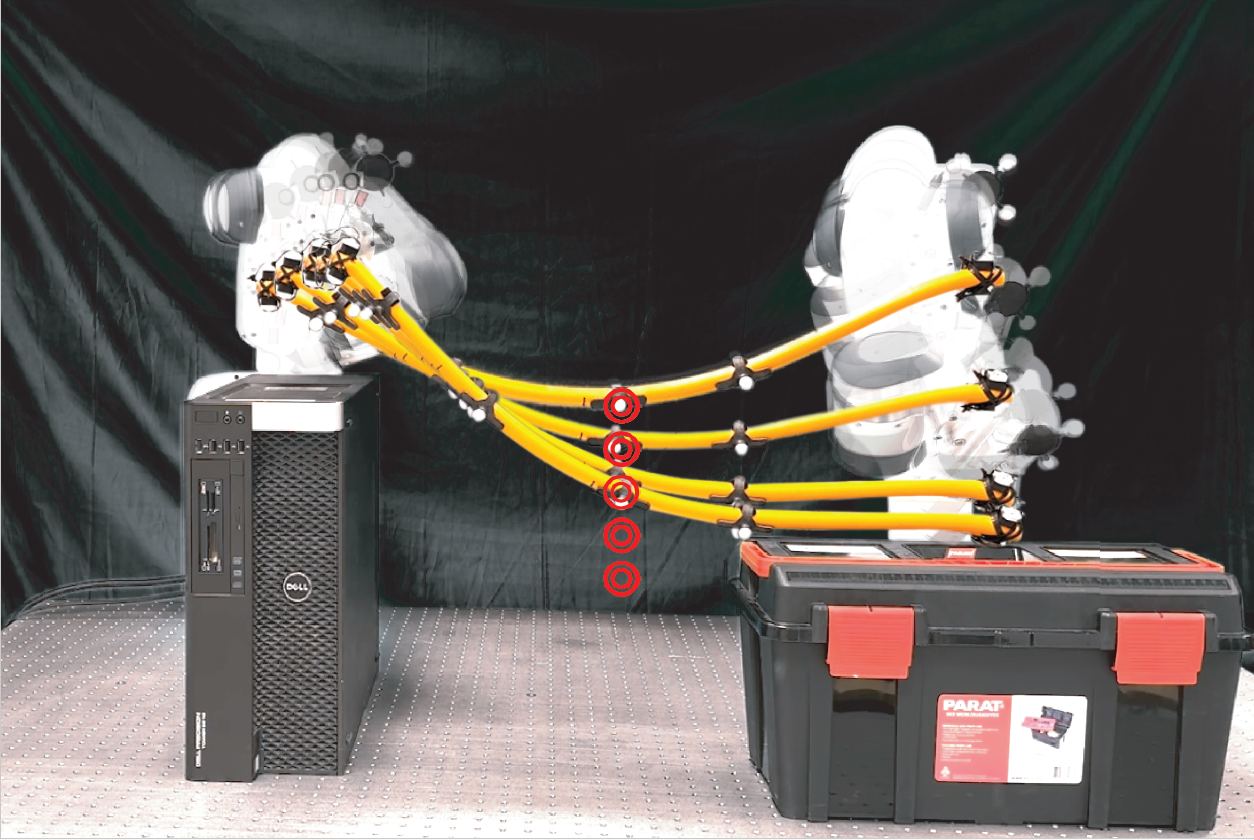}
        \caption{}
    \end{subfigure}
    \centering
    \begin{subfigure}[t]{0.489\textwidth}
        \centering
        \includegraphics[width=1\linewidth]{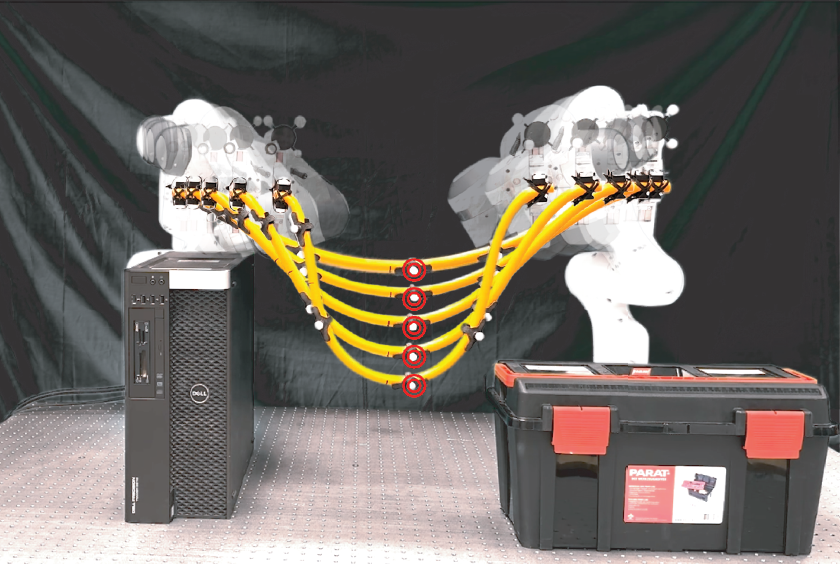}
        \caption{}
    \end{subfigure}
    \caption{\small
    Results of the experiments for the fourth example.
(a) Frame sequences of the robots’ motion using the model-free approach and (b) using the proposed approach. The desired position to be achieved by the midpoint of the DLO is marked with a red circle to emphasize the improvement in positioning achieved with our approach.
    }
    \label{fig_Exp1_base}
\end{figure*}

\textbf{Tip position–orientation control with increased speed:} we conduct a second experiment to evaluate the robustness of the proposed control approach under very fast motions using the DLO2 Table \ref{tab:para}, again employing the PD controller with cancellation (\ref{eq:PD_controller+}). The robot on the right side grasps one endpoint and keeps it stationary using its brakes, simulating a clamped boundary condition at the base of the object. Meanwhile, the other robot attempts to position the opposite endpoint at different target locations while maintaining a constant orientation and executing the motions at different speeds. The system inputs in this example are only the 6D wrenches applied at the endpoint manipulated by the left robot ($n_a = 6$).

Trajectories with different durations—2, 1.5, 0.75, and 0.5 s—were selected, resulting in progressively higher speeds. The results of this experiment are depicted in Fig.~\ref{fig_Exp4}. The position of the moving endpoint is accurately controlled even under highly dynamic conditions, with steady-state average and maximum position errors of $0.35$ and $1.3$~cm, respectively, while the orientation is maintained with very small errors (less than $0.08$~rad variation in the Euler angle components). Moreover, the reconstructed coordinates of the object shape are shown in Fig. \ref{fig_Exp4}.c, demonstrating that the proposed algorithm efficiently estimates the system configuration even during very fast motions.


\textbf{Model-free versus model-based task-space control in constrained scenarios:} Finally, in a third example, we demonstrate the performance of the controller in a task-oriented setting. Specifically, we consider a scenario in which the robots grasp both endpoints of the object and aim to position its midpoint at a specified target location, where a marker is placed in the DLO for visualization. The system inputs are the 6D wrenches applied at both endpoints ($n_a = 12$). Five main goal locations for the midpoint are defined to accomplish this task in a constrained scenario. In this case, a PD controller with compensation (\ref{eq:PD_controller_comp+}) is implemented as a low-level controller in our approach. Moreover, a model-free approach is used to compare our method with an alternative strategy that does not account for the deformation model. This strategy estimates the initial shape of the object and attempts to control both endpoints by assuming the object to be rigid and non-deformable, in order to position the midpoint at the desired locations. In this case, a baseline PD controller is used to control the actuated coordinates. The frame sequence of the robots in this experiment is shown in Fig. \ref{fig_Exp1_base}.

Based on the solutions obtained from the optimization problem (\ref{eqn:optimization_problem}) for both our approach and the model-free reference, we experimentally assess the controller by measuring the midpoint position, $\bm{p}_{m}$, after moving the robots to each configuration, as illustrated in the example image sequences in Fig.~\ref{fig_Exp1_3}. The results clearly show that the midpoint reachability with our approach is significantly expanded compared to the model-free reference case. This improvement is expected, as explicitly accounting for object deformation enables reaching positions that are not attainable under the rigid-object assumption. 

Finally, Fig.~\ref{fig_Exp1_2} reports the forces and torques applied by the two robots to regulate the object pose and drive its midpoint to the desired position. As illustrated in the snapshots of Fig.~\ref{fig_Exp1_base}, the model-free approach attempts to position the midpoint by rotating and translating both endpoints without deforming the DLO. However, in the constrained scenario, this strategy does not achieve the task with sufficient accuracy. In contrast, our approach explicitly accounts for deformation and determines how to coordinate the motion of both endpoints to deform the object appropriately. This behavior is also evident in the force and torque plots, where the force component along the $X$ direction must increase for the right-side robot ($u_{4}$) and decrease for the left-side robot ($u_{10}$). This asymmetric force modulation enables accurate positioning of the robot endpoints (the actuated coordinates) to compensate for the elastic forces of the object while compressing it. By measuring the mean endpoint error over all evaluated points, we observe a reduction in the average positioning error compared to the model-free reference, from an average of $3.5$~cm with a maximum error of $8.8$~cm to an average of $1.5$~cm and a maximum error of $2$~cm. Compared to the model-free approach, our method benefits from both an explicit understanding of object deformation, which expands the reachable workspace, and accurate knowledge of the object model, which enables more precise positioning of the grasped endpoints, thereby improving task accuracy and efficiency. Finally, Table \ref{tab:final_results} summarizes the steady-state task-level errors obtained in both simulation and experimental scenarios for the model-free and model-based approaches. 

\begin{figure}[H]
    \centering
    \begin{subfigure}[t]{0.45\textwidth}
        \centering
        \includegraphics[width=1\linewidth]{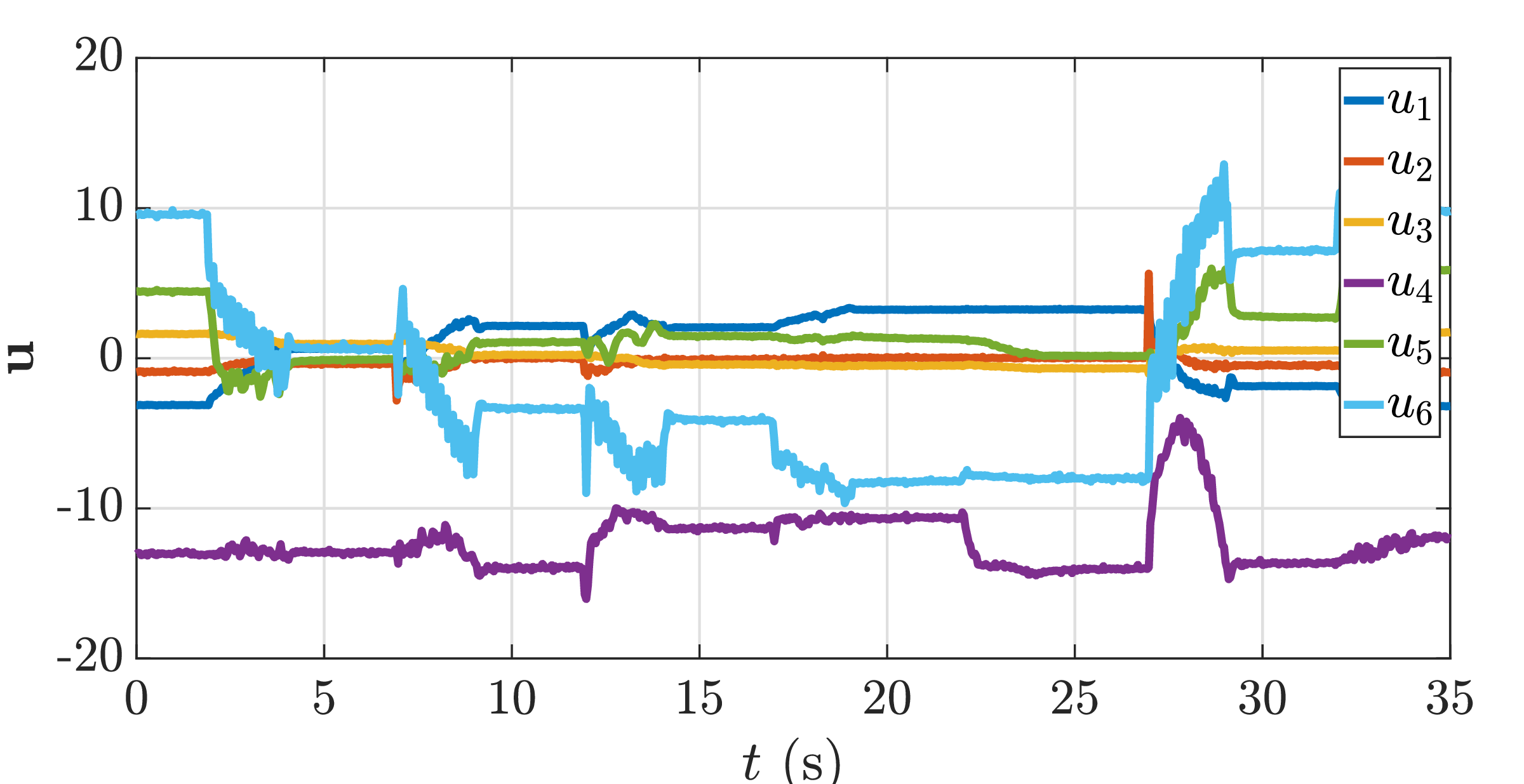}
        \caption{}
    \end{subfigure}
    \centering
    \begin{subfigure}[t]{0.45\textwidth}
        \centering
        \includegraphics[width=1\linewidth]{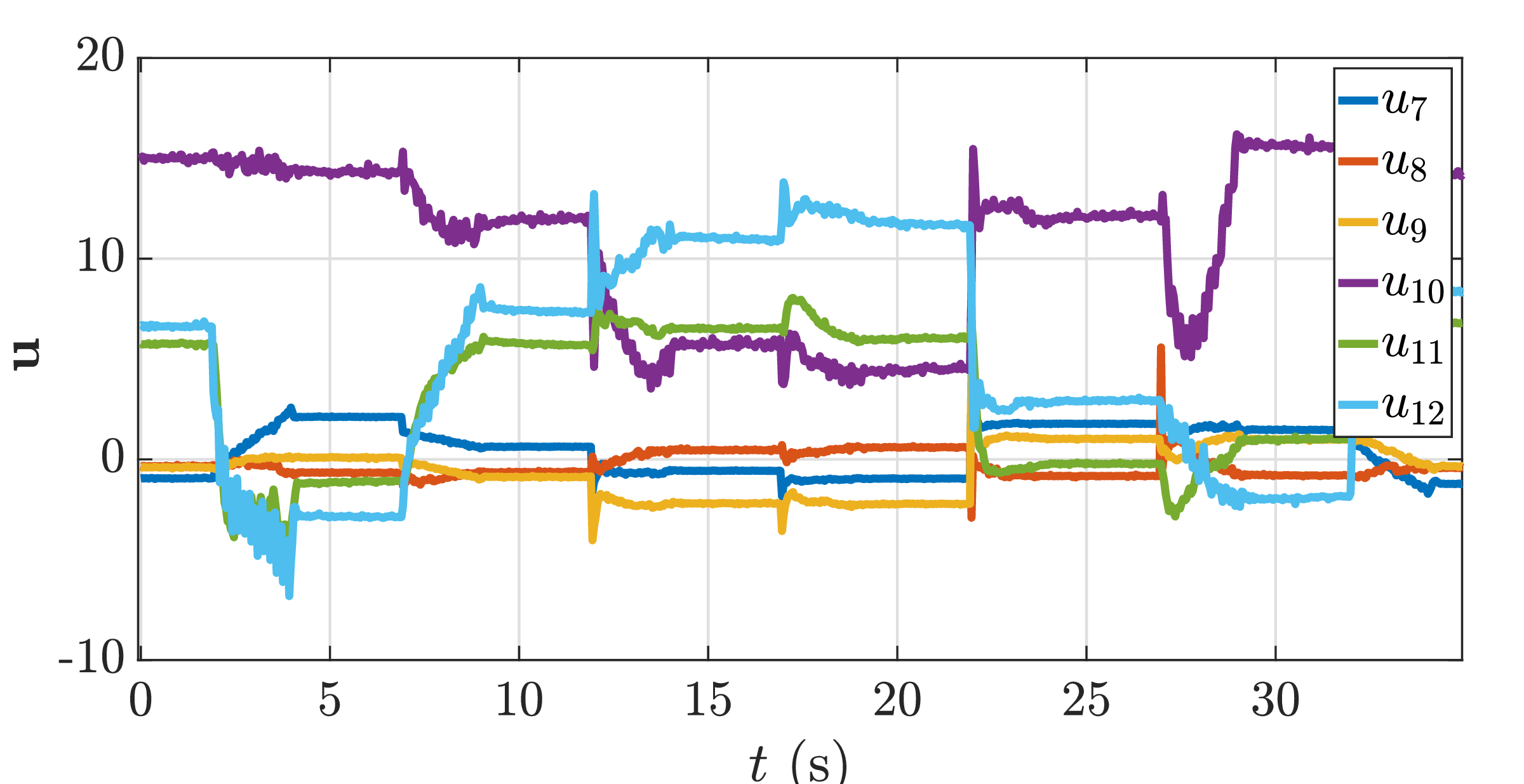}
        \caption{}
    \end{subfigure}
    \begin{subfigure}[t]{0.45\textwidth}
        \centering
        \includegraphics[width=\linewidth]{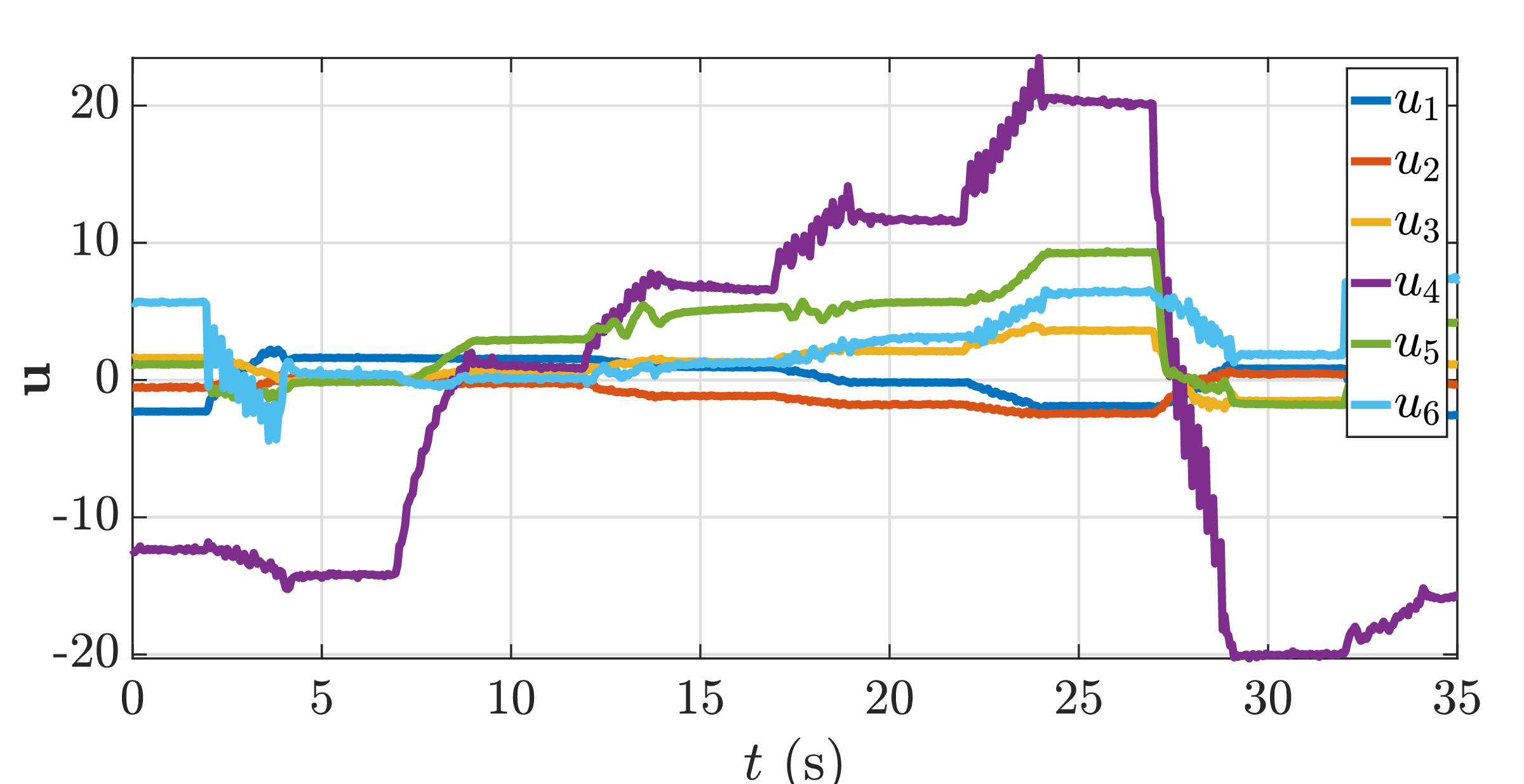}
        \caption{}
    \end{subfigure}
    \centering
    \begin{subfigure}[t]{0.45\textwidth}
        \centering
        \includegraphics[width=\linewidth]{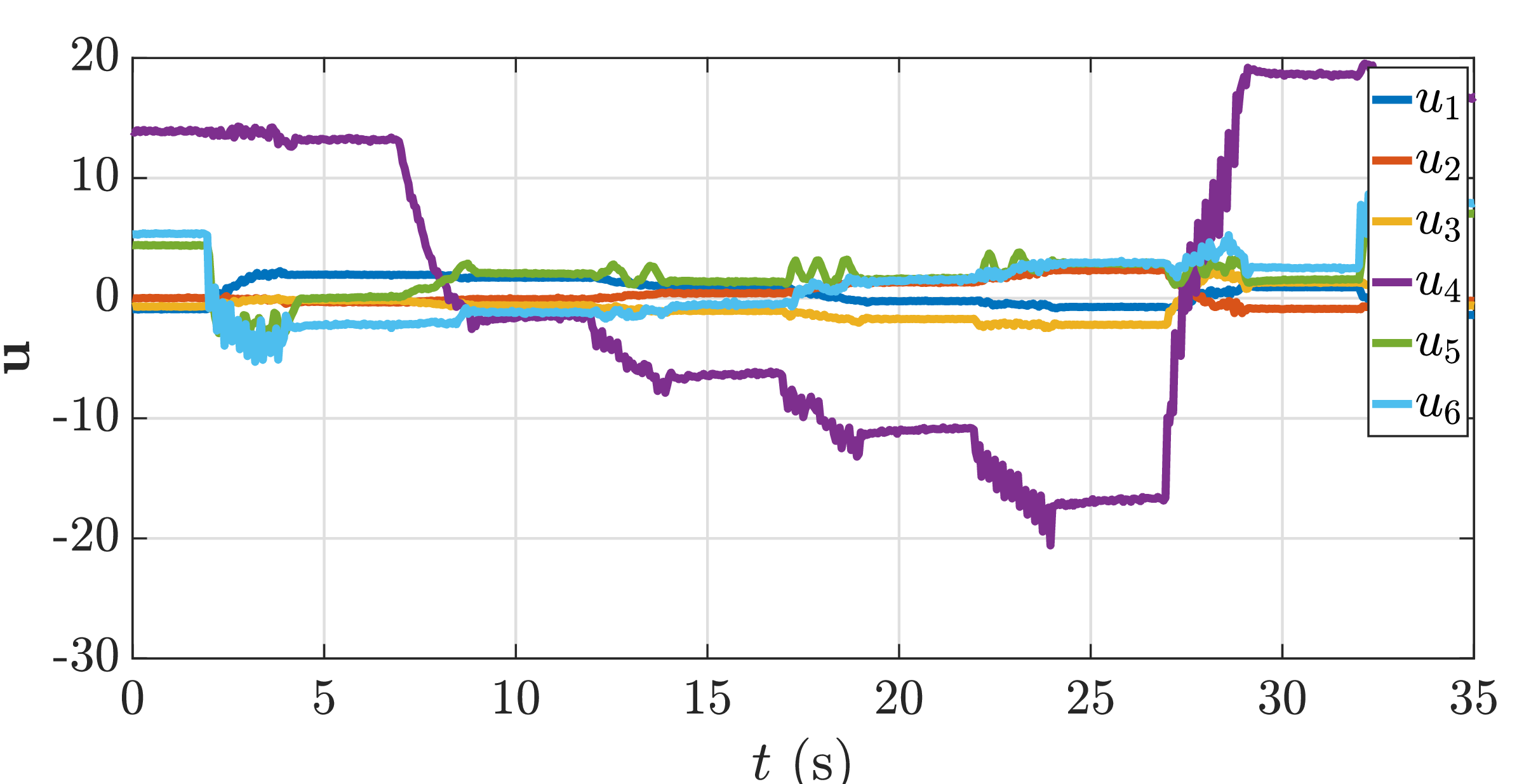}
        \caption{}
    \end{subfigure}
    \caption{\small
    Control inputs for the fourth experimental scenario. Inputs for the model-free approach are shown in (a)–(b), while those for the proposed approach are shown in (c)–(d). Here, $\bm{u}_{1}-\bm{u}_{6}$ represent the components of the 6D wrench applied by the right robot, and $\bm{u}_{7}-\bm{u}_{12}$ represent those applied by the left robot.
    }
    \label{fig_Exp1_2}
\end{figure}

\begin{figure}[H]
    \centering  
    \begin{subfigure}[t]{\columnwidth}
        \centering
        \includegraphics[width=0.95\linewidth]{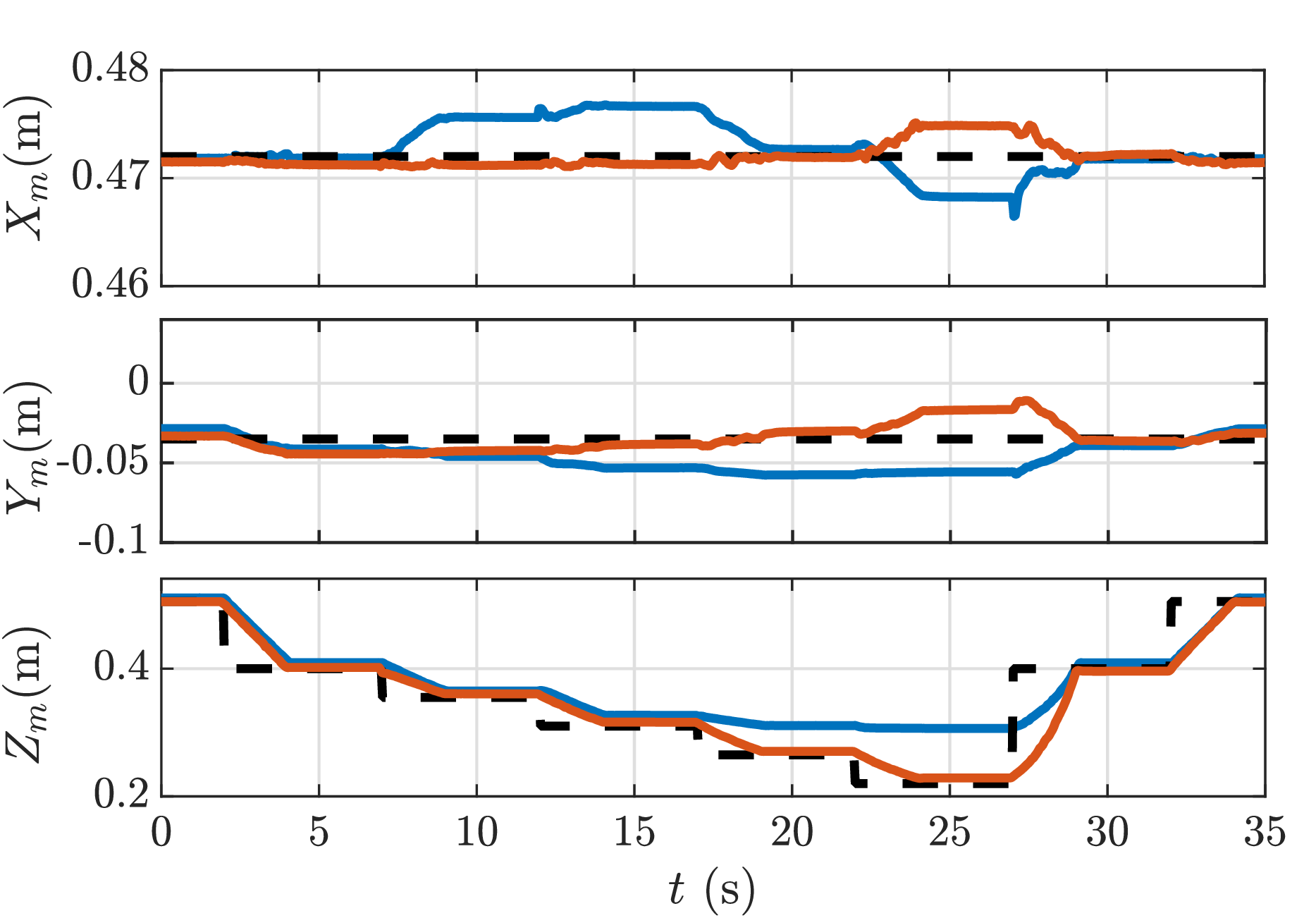}
        \caption{}
    \end{subfigure}
    \begin{subfigure}[t]{\columnwidth}
        \centering
        \includegraphics[width=0.95\linewidth]{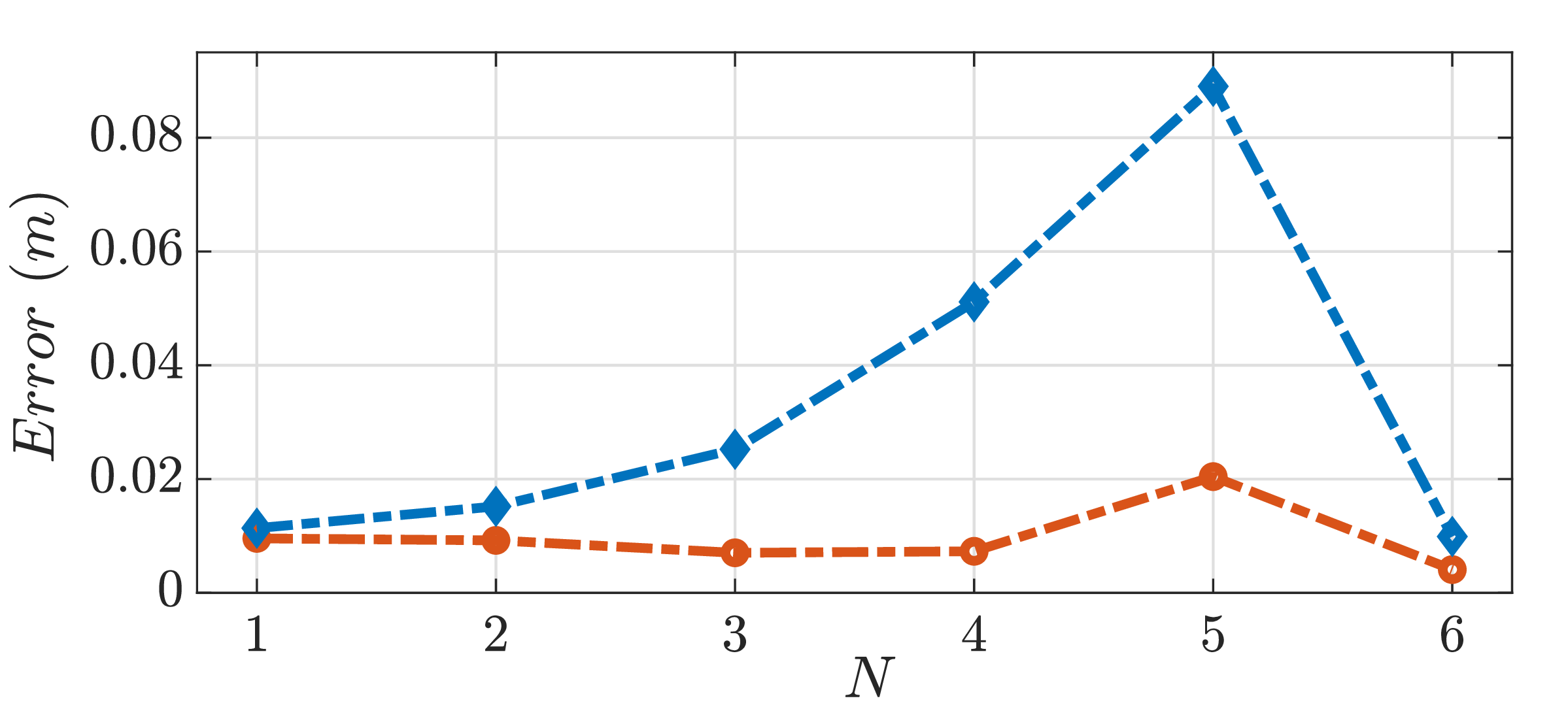}
        \caption{}
    \end{subfigure}

    \caption{\small
    3D position components of the object midpoint for both approaches, together with a comparison of the positional errors.}
    \label{fig_Exp1_3}
\end{figure}

\begin{table}[h]
\centering
\scriptsize
\setlength{\tabcolsep}{12pt} 
\renewcommand{\arraystretch}{0.9} 
\caption{Summary of steady state errors obtained at the task level, when using either a model-free approach (which assumes the object will not deform) and our model-based one to solve the high-level part of the task (left side in Fig. \ref{fig:2_2}). In simulation (Fig. \ref{fig:Sim_2}), the error corresponds to the position error of the grasping points, whereas in the experiments (Fig. \ref{fig_Exp1_3}), it corresponds to the midpoint of the DLO. Errors are reported as a percentage of the object length to provide a normalized measurement. We report both the maximum and the mean error under steady-state conditions.}
\begin{tabular}{|c|cc|cc|}
\hline
 & \multicolumn{2}{c|}{Model-free} & \multicolumn{2}{c|}{Model-based} \\
\hline
 & Max (\%) & Mean (\%) & Max (\%) & Mean (\%) \\
\hline
Sim. & 21.6 & 12.6 & 0 & 0 \\
Exp. & 11.57 & 4.61 & 2.63 & 1.97 \\
\hline
\end{tabular}
\label{tab:final_results}
\end{table}

\section{Conclusions}\label{sec:conclusions}

This work presented a model-based closed-loop control framework for the dynamic manipulation of deformable linear objects through multiple contact points. We modeled this class of objects as floating-base rods subject to localized external forces and torques, explicitly capturing dynamic coupling, elastic response, and gravitational effects. Within this formulation, manipulation was naturally posed as a shape regulation problem, with task-space objectives for Cartesian positioning incorporated when needed. Simulations and experiments demonstrated high steady-state accuracy, strong task-level generalization, and real-time performance without training, consistently outperforming model-free baselines. In particular, controllers that neglected object dynamics developed nonzero steady-state errors as stiffness and inertia increased, whereas the proposed model-based formulation achieved zero steady-state error. A key outcome was the robustness and generality of the approach: a single set of identified physical parameters was used across diverse experimental scenarios, underscoring the potential of model-based strategies for reliable manipulation of deformable objects beyond quasi-static regimes.

\vspace{0.5cm}

\hspace{-0.4cm}{\LARGE\textbf{Appendix}}

\begin{appendices}

\section{Collocated form in Lagrangian systems} 
\label{Sec:Collocated_form}

A Lagrangian system of the form (\ref{eq::gendynamics}) can be conveniently reformulated by separating the dynamic equations of the actuated $\bm{\theta_\mathrm{a}} \in \mathbb{R}^{n_\mathrm{a}}$ and unactuated variables  $\bm{\theta_\mathrm{u}} \in \mathbb{R}^{n - n_\mathrm{a}}$, such that $\bm{\theta} = [\bm{\theta_\mathrm{a}}^{T} \ \bm{\theta_\mathrm{u}}^{T}]^{T}$. Finding a more suitable representation of the system dynamics can greatly simplify both analysis and control design through an appropriate transformation of the generalized coordinates. In particular, the system can be expressed in a collocated form under a change of coordinates $\bm{\theta}_\mathrm{a} = \bm{c}_\mathrm{a}(\bm{q})$, provided that the following assumption holds:

\textit{Integrability assumption:} For all $\bm{q} \in \mathcal{M}$, there exists a function $\bm{c}_\mathrm{a}(\bm{q}) : \mathcal{M} \to \mathbb{R}^m$ such that

\begin{equation}
\label{eq::gendynamics_2}
\bm{J}_{\bm{c}_\mathrm{a}}(\bm{q}) = \frac{\partial \bm{c}_\mathrm{a}}{\partial \bm{q}} = \bm{A}^{T}(\bm{q})
\end{equation}

The integrability assumption implies that each column of $\bm{A}(\bm{q})$ is the gradient of a scalar function of the configuration variables. Under this condition, the Lagrangian system (\ref{eq::gendynamics}) is said to be collocated, as it admits the following collocated representation.

In the underactuated case, the coordinates $\bm{\theta_\mathrm{u}}$ are selected to complete the coordinate transformation $\bm{\theta} = \bm{c}(\bm{q})$ such that the full Jacobian is guaranteed to be full rank.

\begin{equation}
\label{eq::change}
\bm{J}_c(\bm{q}) = \left[\frac{\partial \bm{\theta_\mathrm{a}}}{\partial \bm{q}} \ \ \frac{\partial \bm{\theta_\mathrm{u}}}{\partial \bm{q}} \right] ;
\end{equation}

\noindent where $\bm{J_\mathrm{c}}$ is nonsingular at $\bm{q}$, thus allowing the coordinate change to be well defined. This procedure is described in detail in \cite{pustina2024input}. After performing the change of coordinates, the decoupled dynamics of the system can be expressed as:

%

\begin{equation}\label{eq_decoupling}
\begin{aligned}
\begin{bmatrix}
\bm{M}_\mathrm{aa} & \bm{M}_\mathrm{au} \\
\bm{M}_\mathrm{ua} & \bm{M}_\mathrm{uu}
\end{bmatrix}
\begin{bmatrix}
\ddot{\bm{\theta}}_\mathrm{a} \\
\ddot{\bm{\theta}}_\mathrm{u}
\end{bmatrix}
&+
\begin{bmatrix}
\bm{C}_\mathrm{aa} & \bm{C}_\mathrm{au} \\
\bm{C}_\mathrm{ua} & \bm{C}_\mathrm{uu}
\end{bmatrix}
\begin{bmatrix}
\dot{\bm{\theta}}_\mathrm{a} \\
\dot{\bm{\theta}}_\mathrm{u}
\end{bmatrix}
+
\begin{bmatrix}
\bm{G}_\mathrm{a} \\
\bm{G}_\mathrm{u}
\end{bmatrix} \\
+
\begin{bmatrix}
\bm{K}_\mathrm{aa} & \bm{K}_\mathrm{au} \\
\bm{K}_\mathrm{ua} & \bm{K}_\mathrm{uu}
\end{bmatrix}
\begin{bmatrix}
\bm{\theta}_\mathrm{a} \\
\bm{\theta}_\mathrm{u}
\end{bmatrix}
&+
\begin{bmatrix}
\bm{D}_\mathrm{aa} & \bm{D}_\mathrm{au} \\
\bm{D}_\mathrm{ua} & \bm{D}_\mathrm{uu}
\end{bmatrix}
\begin{bmatrix}
\dot{\bm{\theta}}_\mathrm{a} \\
\dot{\bm{\theta}}_\mathrm{u}
\end{bmatrix}
=
\begin{bmatrix}
\bm{u} \\
\bm{0}
\end{bmatrix}.
\end{aligned}
\end{equation}

\end{appendices}




\section*{Acknowledgements}
The work was supported in part by the Dutch Research Foundation (NWO) through the VENI grant ROSES 20297 and in part by the European Union (ERC, RIPLEY, 101165078).

\bibliographystyle{agsm}
\bibliography{reference}

\end{document}